\begin{document}

\title{The Data Complexity of Description
Logic Ontologies}

\author{Carsten Lutz}	
\address{University of Bremen, Germany}	
\email{clu@uni-bremen.de}  

\author{Frank Wolter}	
\address{University of Liverpool}	
\email{wolter@liverpool.ac.uk}  
%
%

\keywords{Description Logic, Ontology-Mediated Querying, Data Complexity}
\subjclass{Description Logics}


\maketitle

\begin{abstract}
  We analyze the data complexity of ontology-mediated querying where
  the ontologies are formulated in a description logic (DL) of the
  \ALC family and queries are conjunctive queries, positive
  existential queries, or acyclic conjunctive queries. Our approach is
  non-uniform in the sense that we aim to understand the complexity of
  each single ontology instead of for all ontologies formulated in a
  certain language. While doing so, we quantify over the queries and
  are interested, for example, in the question whether all queries can
  be evaluated in polynomial time w.r.t.\ a given ontology. Our
  results include a \PTime/{\sc coNP}-dichotomy for ontologies of
  depth one in the description logic $\mathcal{ALCFI}$, the same
  dichotomy for $\mathcal{ALC}$- and $\mathcal{ALCI}$-ontologies of
  unrestricted depth, and the non-existence of such a dichotomy for
  $\mathcal{ALCF}$-ontologies. For the latter DL, we additionally show
  that it is undecidable whether a given ontology admits {\sc PTime}
  query evaluation. We also consider the connection between {\sc
    PTime} query evaluation and rewritability into (monadic) Datalog.
\end{abstract}

\section{Introduction}

In recent years, the use of ontologies to access instance data has
become increasingly popular
\cite{DBLP:journals/jods/PoggiLCGLR08,DBLP:conf/rweb/KontchakovZ14,DBLP:conf/rweb/BienvenuO15}. The
general idea is that an ontology provides domain knowledge and an
enriched vocabulary for querying, thus serving as an interface between
the query and the data, and enabling the derivation of additional
facts.  In this emerging area, called ontology-mediated querying, it
is a central research goal to identify ontology languages for which
query evaluation scales to large amounts of instance data.  Since the
size of the data typically dominates the size of the ontology and the
size of the query by orders of magnitude, the central measure for such
scalability is \emph{data complexity}---the complexity of query
evaluation where only the data is considered to be an input, but both
the query and the ontology are fixed.

In description logic (DL), ontologies take the form of a TBox, data is
stored in an ABox, and the most important classes of queries are
conjunctive queries~(CQs) and variations thereof, such as positive
existential queries (PEQs). A~fundamental observation regarding this
setup is that, for expressive DLs such as \ALC and \shiq, the
complexity of query evaluation is {\sc coNP}-complete and thus
intractable
\cite{Schaerf-93,journals/jar/HustadtMS07,jair2008g}.\footnote{When
  speaking of complexity, we \emph{always} mean data complexity} The
classical approach to avoiding this problem is to replace \ALC and \shiq
with less expressive DLs that are `Horn' in the sense that they can be
embedded into the Horn fragment of first-order (FO) logic. Horn DLs
typicall admit query evaluation in \PTime, examples include a variety
of logics from the $\mathcal{EL}$ \cite{BaBrLu-IJCAI-05} and DL-Lite
families \cite{CDLLR07} as well as Horn-$\mathcal{SHIQ}$, a large
fragment of \shiq with {\sc PTime} query
evaluation~\cite{journals/jar/HustadtMS07}.

It may thus seem that the data complexity of query evaluation in the
presence of DL ontologies is understood rather well. However, all
results discussed above are at the \emph{level of logics}, i.e.,
traditional results about data complexity concern a class of TBoxes
that is defined in a syntactic way in terms of expressibility in a
certain DL language, but no attempt is made to identify more structure
\emph{inside} these classes. Such a more fine-grained study, however,
seems very natural both from a theoretical and from a practical
perspective; in particular, it is well-known that ontologies which
emerge in practice tend to use `expensive' language constructs that
can result in {\sc coNP}-hardness of data complexity, but they
typically do so in an extremely restricted and intuitively `harmless'
way. This distinction between hard and harmless cases cannot be
analyzed on the level of logics. The aim of this paper is to initiate
a more fine-grained study of data complexity that is
\emph{non-uniform} in the sense that it does not treat all TBoxes
formulated in the same DL in a uniform way.

When taking a non-uniform perspective, there is an important choice
regarding the level of granularity. First, one can analyze the
complexity on the \emph{level of TBoxes}, quantifying over the actual
query. Then, query evaluation for a TBox \Tmc is in \PTime if every
query (from the class under consideration) can be evaluated in \PTime
w.r.t.\ \Tmc and it is {\sc coNP}-hard if there is at least one query
that is {\sc coNP}-hard to evaluate w.r.t.~\Tmc. And second, one might
take an even more fine-grained approach where the query is not
quantified away and the aim is to classify the complexity \emph{on the
  level of ontology-mediated queries (OMQs)}, that is, combinations of
a TBox and an actual query. From a practical perspective, both setups
make sense; when the actual queries are fixed at the design time of
the application, one would probably prefer to work on the level of
OMQs whereas the level of TBoxes seems more appropriate when the
queries can be freely formulated at application running time.  A
non-uniform analysis on the level of OMQs has been carried out in
\cite{DBLP:journals/tods/BienvenuCLW14}. In this paper, we concentrate
on the level of TBoxes. The ultimate goal of our approach is as
follows:

\vspace*{0.1cm}

\noindent {\em For a fixed DL $\Lmc$ and query language \Qmc, classify
  all TBoxes $\Tmc$ in $\Lmc$ according to the complexity of 
  evaluating queries from \Qmc
 w.r.t.~$\Tmc$.
}

\vspace*{0.1cm}

We consider the basic expressive DL \ALC, its extensions \ALCI with
inverse roles and \ALCF with functional roles, and their union
\ALCFI. As query languages, we cover CQs, acyclic CQs, and PEQs (which
have the same expressive power as unions of conjunctive queries, UCQs,
which are thus implicitly also covered). In the current paper, we mainly 
concentrate on understanding the boundary between {\sc PTime} and {\sc coNP}-hardness of query evaluation
w.r.t.\ DL TBoxes, mostly neglecting other relevant classes 
such as AC$^0$, {\sc LogSpace}, and {\sc NLogSpace}.  

Our main results are as
follows (they apply to all query languages mentioned above).

\medskip

\noindent
1. There is a {\sc PTime}/{\sc coNP}-dichotomy for query evaluation
w.r.t.\ $\mathcal{ALCFI}$-TBoxes of depth one, i.e., TBoxes in which
no existential or universal restriction is in the scope of another existential or universal restriction. 

\medskip
\noindent
The proof rests on interesting model-theoretic characterizations of
polynomial time CQ-evaluation which are discussed below.  Note that
this is a relevant case since most TBoxes from practical applications
have depth one. In particular, all TBoxes formulated in DL-Lite and
its extensions proposed
in~\cite{CDLLR07,DBLP:journals/jair/ArtaleCKZ09} have depth one, and
the same is true for more than 80 percent of the 429 TBoxes in the
BioPortal ontology repository.
In connection with Point~1 above, we also show that {\sc PTime} query
evaluation coincides with rewritability into monadic Datalog (with
inequalities, to capture functional roles). As in the case of data
complexity, what we mean here is that \emph{all} queries are
rewritable into monadic Datalog w.r.t.\ the TBox \Tmc under
consideration.

\medskip

\noindent 
2. There is a {\sc PTime}/{\sc coNP}-dichotomy for query evaluation w.r.t.\ $\mathcal{ALCI}$-TBoxes.

\medskip

\noindent
This is proved by showing that there is a {\sc PTime}/{\sc
  coNP}-dichotomy for query evaluation w.r.t.\ $\mathcal{ALCI}$-TBoxes
if and only if there is a {\sc PTime}/{\sc NP}-dichotomy for
non-uniform constraint satisfaction problems with finite templates
(CSPs). The latter is known as the Feder-Vardi conjecture that was
recently proved in \cite{Dicho1,Dicho2}, as the culmination of a major research
programme that combined complexity theory, graph theory, logic, and
algebra
\cite{DBLP:journals/siamcomp/BulatovJK05,DBLP:conf/stoc/KunS09,DBLP:conf/csr/Bulatov11,DBLP:journals/siglog/Barto14}.
Our equivalence proof establishes a close link between query
evaluation in $\mathcal{ALC}$ and $\mathcal{ALCI}$ and CSP that is
relevant for DL research also beyond the stated dichotomy problem.
Note that, in contrast to the proof of the Feder-Vardi conjecture, the
dichotomy proof for TBoxes of depth one (stated as Point~1 above) is
much more elementary. Also, it covers functional roles and establishes
equivalence between \PTime query evaluation and rewritability into
monadic Datalog, which fails for \ALCI-TBoxes of unrestricted depth
even when monadic Datalog is replaced with Datalog; this is a
consequence of the link to CSPs establishes in this paper.

\medskip

\noindent
3. There is no {\sc PTime}/{\sc coNP}-dichotomy for
query evaluation w.r.t.\ $\mathcal{ALCF}$-TBoxes (unless {\sc PTime} $=$
{\sc NP}).

\medskip
\noindent
This is proved by showing that, for every problem in {\sc
  coNP}, there is an $\mathcal{ALCF}$-TBox for which query evaluation has
the same complexity (up to polynomial time reductions); it then remains to
apply Ladner's Theorem, which guarantees the existence of {\sc NP}-intermediate problems.
Consequently, we cannot expect an exhaustive classification 
of the complexity of query evaluation w.r.t.\ $\mathcal{ALCF}$-TBoxes.
Variations of the proof of Point~3 allow us to establish also the
following:

\medskip

\noindent
4. For \ALCF-TBoxes, the following problems are undecidable:
\PTime-hardness of query evaluation, {\sc coNP}-hardness of query
evaluation, and rewritability into monadic Datalog and into Datalog
(with inequalities).

\medskip

To prove the results listed above, we introduce two new notions that
are of independent interest and general utility. The first one is
\emph{materializability} of a TBox \Tmc, which means that evaluating a
query over an ABox \Amc w.r.t.\ \Tmc can be reduced to query
evaluation in a single model of \Amc and \Tmc (a
\emph{materialization}). Note that such models play a crucial role in
the context of Horn DLs, where they are often called canonical models
or universal models. In contrast to the Horn DL case, however, we only
require the \emph{existence} of such a model without making any
assumptions about its form or construction.

\medskip

\noindent
5. If an $\mathcal{ALCFI}$-TBox $\Tmc$ is not materializable, then
CQ-evaluation w.r.t.~$\Tmc$ is {\sc coNP}-hard.

\medskip
\noindent
We also investigate the nature of materializations. It turns out that
if a TBox is materializable for one of the considered query languages,
then it is materializable also for all others. The concrete
materializations, however, need not agree. To obtain these results, we
characterize CQ-materializations in terms of homomorphisms and
ELIQ-materializations in terms of simulations (an ELIQ is an
$\mathcal{ELI}$-instance query, thus the DL version
of an acyclic CQ, with a single answer variable).

\smallskip

Perhaps in contrary to the intuitions that arise from the experience
with Horn DLs, materializability of a TBox \Tmc is \emph{not} a
sufficient condition for query evaluation w.r.t.\ \Tmc to be in \PTime
(unless \PTime = \NP) since the existing materialization might be hard
to compute.  This leads us to study the notion of \emph{unraveling
  tolerance} of a TBox~\Tmc, meaning that answers to acyclic CQs over
an ABox \Amc w.r.t.~$\Tmc$ are preserved under unraveling the ABox
\Amc.  In CSP, unraveling tolerance corresponds to the existence of
tree obstructions, a notion that characterizes the well known arc
consistency condition and rewritability into monadic
Datalog~\cite{DBLP:journals/siamcomp/FederV98,DBLP:conf/csl/Krokhin10}.
It can be shown that every TBox formulated in Horn-$\mathcal{ALCFI}$
(the intersection of $\mathcal{ALCFI}$ and Horn-$\mathcal{SHIQ}$) is
unraveling tolerant and that there are unraveling tolerant TBoxes
which are not equivalent to any Horn-$\mathcal{ALCFI}$-TBox. Thus, the
following result yields a rather general (and uniform!) {\sc PTime}
upper bound for CQ-evaluation.

\medskip

\noindent
6. If an $\mathcal{ALCFI}$-TBox $\Tmc$ is unraveling tolerant, then
query evaluation
w.r.t.~$\Tmc$ is in {\sc PTime}.

\smallskip
\noindent
Although the above result is rather general, unraveling tolerance of a
TBox \Tmc is \emph{not} a necessary condition for CQ-evaluation
w.r.t.\ \Tmc to be in {\sc PTime} (unless \PTime = \NP). However, for
$\mathcal{ALCFI}$-TBoxes \Tmc \emph{of depth one}, being
materializable and being unraveling tolerant turns out to be
equivalent. For such TBoxes, we thus obtain that CQ-evalutation
w.r.t.~$\Tmc$ is in {\sc PTime} iff $\Tmc$ is materializable iff
$\Tmc$ is unraveling tolerant while, otherwise, CQ-evaluation
w.r.t.~$\Tmc$ is {\sc coNP}-hard. This establishes the first main
result above.






\medskip

Our framework also allows one to formally capture some intuitions and
beliefs commonly held in the context of CQ-answering in DLs.  For
example, we show that for every $\mathcal{ALCFI}$-TBox \Tmc,
CQ-evaluation is in {\sc PTime} iff PEQ-evaluation is in {\sc PTime}
iff ELIQ-evaluation is in {\sc PTime}, and the same is true for {\sc
  coNP}-hardness and for rewritability into Datalog and
into monadic Datalog.
In fact, the use of multiple query languages and in particular of
$\mathcal{ELI}$-instance queries does not only yield additional
results, but is at the heart of our proof strategies.  Another
interesting observation in this spirit is that an
$\mathcal{ALCFI}$-TBox is materializable iff it is convex, a condition
that is also called the disjunction property and plays a central role
in attaining \PTime complexity for standard reasoning in Horn DLs such
as \EL, DL-Lite, and Horn-\shiq; see for example
\cite{BaBrLu-IJCAI-05,DBLP:conf/lpar/KrisnadhiL07} for more details.

\medskip
This paper is a significantly extended and revised version of the conference
publication~\cite{DBLP:conf/kr/LutzW12}.

\subsection*{Related Work}

An early reference on data complexity in DLs is \cite{Schaerf-93},
showing {\sc coNP}-hardness of ELQs in the fragment $\mathcal{ALE}$ of
\ALC (an ELQ is an ELIQ in which all edges are directed away from the
answer variable).  A {\sc coNP} upper bound for ELIQs in the much more
expressive DL \shiq was obtained in \cite{journals/jar/HustadtMS07}
and generalized to CQs in \cite{jair2008g}. Horn-\shiq was first
defined in \cite{journals/jar/HustadtMS07}, where also a \PTime upper
bound for ELIQs is established; the generalization to CQs can be found
in \cite{conf/jelia/EiterGOS08}. See also
\cite{DBLP:conf/lpar/KrisnadhiL07,DBLP:conf/icdt/Rosati07,DBLP:journals/jar/OrtizCE08,DBLP:journals/ai/CalvaneseGLLR13}
and references therein for the data complexity in DLs and
\cite{LICSo,IJCO} for related work on the guarded fragment and on
existential rules.

To the best of our knowledge, the conference version of this paper was
first to initiate the study of data complexity in ontology-mediated
querying at the level of individual TBoxes and the first to observe a
link between this area and CSP. There is, however, a certain technical
similarity to the link between view-based query processing for regular
path queries (RPQs) and CSP found in
\cite{LICS00,PODS03,DBLP:journals/sigmod/CalvaneseGLV03}. In this
case, the recognition problem for perfect rewritings for RPQs can be
polynomially reduced to non-uniform CSP and vice versa. On the level
of OMQs, the data complexity of ontology-mediated querying with DLs
has been studied in \cite{DBLP:journals/tods/BienvenuCLW14}, see also
\cite{OurNewPaper}; also here, a connection to CSP plays a central role. In
\cite{DBLP:conf/ijcai/LutzSW13,DBLP:conf/ijcai/LutzSW15}, the
non-uniform data complexity of ontology-mediated query answering is
studied in the case where the TBox is formulated in an inexpressive DL
of the DL-Lite or \EL family and where individual predicates in the
data can be given a closed-world reading, which also gives rise to
{\sc coNP}-hardness of query evaluation; while
\cite{DBLP:conf/ijcai/LutzSW13} is considering the level of TBoxes,
\cite{DBLP:conf/ijcai/LutzSW15} treats the level of OMQs, establishing
a connection to surjective CSPs. Rewritability into Datalog for atomic
queries and at the level of OMQs has also been studied
in~\cite{DBLP:journals/ai/KaminskiNG16}. Finally, we mention
\cite{leif} where a complete classification of the data complexity of
OMQs (also within \PTime) is achieved when the TBox is formulated
in \EL and the actual queries are atomic queries.

Recently, the data complexity at the level of TBoxes has been studied
also in the guarded fragment and in the two-variable guarded fragment
of FO with counting~\cite{PODS17}. This involves a generalization of
the notions of materializability and unraveling tolerance and leads
to a variety of {\sc PTime}/{\sc coNP}-dichotomy results. 
In particular, our dichotomy between Datalog-rewritability and {\sc
  coNP} is extended from $\mathcal{ALCIF}$-TBoxes of depth one to
$\mathcal{ALCHIF}$-TBoxes of depth two.  Using a variant of Ladner's
Theorem, several  
non-dichotomy results for weak fragments of the two-variable guarded
fragment with counting of depth two are established and it is shown
that {\sc PTime} data complexity of query evaluation is
undecidable. For $\mathcal{ALCHIQ}$-TBoxes of depth one, though, {\sc
  PTime} data complexity of query evaluation and, equivalently,
rewritability into Datalog (with inequalities) is proved to be
decidable.  In \cite{DL17a}, the results presented in this paper have
been used to show that whenever an $\mathcal{ALCIF}$-TBox of depth one
enjoys {\sc PTime} query evaluation, then it can be rewritten into a
Horn-$\mathcal{ALCIF}$-TBox that gives the same answers to CQs
(the converse is trivial).
It is also proved that this
result does not hold in other cases such as for
$\mathcal{ALCHIF}$-TBoxes of depth one.
 
%

The work on CSP dichotomies started with Schaefer's {\sc PTime}/{\sc
  NP}-dichotomy theorem, stating that every CSP defined by a two
element template is in {\sc PTime} or {\sc NP}-hard
\cite{DBLP:conf/stoc/Schaefer78}. 
Schaefer's theorem was followed by dichotomy results for CSPs with
(undirected) graph templates \cite{DBLP:journals/jct/HellN90} and
several other special cases, leading to the widely known Feder-Vardi
conjecture which postulates a {\sc PTime}/{\sc NP}-dichotomy for all
CSPs, independently of the size of the template
\cite{DBLP:journals/siamcomp/FederV98}. The conjecture has recently been
confirmed~\cite{Dicho1,Dicho2} using an approach to studying the complexity of
CSPs via universal algebra
\cite{DBLP:journals/siamcomp/BulatovJK05}. Interesting results have
also been obtained for other complexity classes such as AC$^{0}$
\cite{DBLP:conf/mfcs/AllenderBISV05,DBLP:journals/lmcs/LaroseLT07}.

\section{Preliminaries}
\label{sect:prelim}
We introduce the relevant description logics and query languages,
define the fundamental notions studied in this paper, and illustrate
them with suitable examples.

We shall be concerned with the DL $\mathcal{ALC}$ and its extensions
$\mathcal{ALCI}$, $\mathcal{ALCF}$, and $\mathcal{ALCFI}$. Let \NC,
\NR, and \NI denote countably infinite sets of \emph{concept names},
\emph{role names}, and \emph{individual names}, respectively.
\emph{$\mathcal{ALC}$-concepts} are constructed according to the rule
$$
C,D\; := \; \top \mid \bot \mid A \mid  C\sqcap D \mid C \sqcup D \mid \neg C \mid  
\exists r.C \mid \forall r . C
$$
where $A$ ranges over $\NC$ and $r$ ranges over $\NR$. 
\emph{\ALCI-concepts} admit, in addition, \emph{inverse roles} from
the set $\Nsf_\Rsf^{-}=\{r^{-} \mid r\in \NR\}$, which can be used in
place of role names. Thus, $A \sqcap \exists r^- . \forall s . B$ is
an example of an \ALCI-concept. To avoid heavy notation, we set
$r^{-}:=s$ if $r=s^{-}$ for a role name~$s$; in particular, we thus
have $(r^-)^-=r$.  

In DLs, ontologies are formalized as TBoxes. An
\emph{$\mathcal{ALC}$-TBox} is a finite set of \emph{concept
  inclusions (CIs)} $C\sqsubseteq D$, where $C,D$ are $\mathcal{ALC}$
concepts, and $\mathcal{ALCI}$-TBoxes are defined analogously.  An
\emph{$\mathcal{ALCF}$-TBox (resp.\ $\mathcal{ALCFI}$-TBox)} is an
$\mathcal{ALC}$-TBox (resp.\ $\mathcal{ALCI}$-TBox) that additionally
admits functionality assertions ${\sf func}(r)$, where $r \in \NR$
(resp.\ $r \in \NR \cup\Nsf_\Rsf^-$), declaring that $r$ is
interpreted as a partial function. Note that there is no such thing as
an \ALCF-concept or an \ALCFI-concept, as the extension with
functional roles does not change the concept language.

An \emph{ABox $\Amc$} is a non-empty finite set of assertions of the form $A(a)$
and $r(a,b)$ with $A \in \NC$, $r \in \NR$, and $a,b \in \NI$. In some
cases, we drop the finiteness condition on ABoxes and then explicitly
speak about \emph{infinite ABoxes}. We use $\mn{Ind}(\Amc)$ to denote
the set of individual names used in the ABox~\Amc and sometimes write
$r^-(a,b) \in \Amc$ instead of $r(b,a) \in \Amc$. 

%

The semantics of DLs is given by \emph{interpretations} $\Imc
=(\Delta^\Imc,\cdot^\Imc)$, where $\Delta^\Imc$ is a non-empty set and
$\cdot^\Imc$ maps each concept name $A\in\NC$ to a subset $A^\I$ of
$\Delta^\Imc$ and each role name $r\in\NR$ to a binary relation $r^\I$ on
$\Delta^\Imc$. 
%
The extension $(r^{-})^{\Imc}$ of $r^{-}$ under the
interpretation $\Imc$ is defined as the converse relation
$(r^{\Imc})^{-1}$ of $r^{\Imc}$ and the extension $C^{\Imc}\subseteq
\Delta^{\Imc}$ of concepts under the interpretation \Imc is
defined inductively as follows:
\begin{eqnarray*}
\top^{\Imc} & = & \Delta^{\Imc}\\
\bot^{\Imc} & = & \emptyset\\
(\neg C)^{\Imc} & = & \Delta^{\Imc}\setminus C^{\Imc}\\
(C \sqcap D)^{\Imc} & = & C^{\Imc} \cap D^{\Imc}\\
(C \sqcup D)^{\Imc} & = & C^{\Imc} \cup D^{\Imc}\\
(\exists r.C)^{\Imc} & = & \{ d\in \Delta^{\Imc} \mid \exists d'\in \Delta^{\Imc}:\; (d,d')\in r^{\Imc} \mbox{ and } d'\in C^{\Imc}\}\\
(\forall r.C)^{\Imc} & = & \{ d\in \Delta^{\Imc} \mid \forall d'\in \Delta^{\Imc}:\; (d,d')\in r^{\Imc} \mbox{ implies } d'\in C^{\Imc}\}
\end{eqnarray*}
An interpretation $\Imc$ \emph{satisfies} a CI $C \sqsubseteq D$ if
$C^\Imc \subseteq D^\Imc$, an assertion $A(a)$ if $a \in A^\Imc$, an
assertion $r(a,b)$ if $(a,b) \in r^\Imc$, and a functionality
assertion ${\sf func}(r)$ if $r^{\Imc}$ is a partial function.  Note
that we make the \emph{standard name assumption}, that is, individual
names are not interpreted as domain elements (like first-order
constants), but as themselves. This assumption is common both in
DLs and in database theory. The results in this paper do not depend on it.

An interpretation \Imc is a \emph{model} of a TBox \Tmc if it
satisfies all CIs in $\Tmc$ and $\Imc$ is a \emph{model} of an ABox
$\Amc$ if all individual names from \Amc are in in $\Delta^\Imc$ and
\Imc  satisfies all assertions in \Amc.  We call an ABox \Amc
\emph{consistent w.r.t.\ a TBox} \Tmc if \Amc and \Tmc have a joint
model.

\smallskip

We consider several query languages.  A \emph{positive existential
  query (PEQ)} $q(\vec{x})$ is a first-order formula with free
variables $\vec{x}=x_{1},\ldots,x_{n}$ constructed from atoms $A(x)$
and $r(x,y)$ using conjunction, disjunction, and existential
quantification, where $A\in \NC$, $r\in \NR$, and $x,y$ are variables.
The variables in $\vec{x}$ are the \emph{answer variables} of
$q(\vec{x})$.  A PEQ without answer variables is \emph{Boolean}.  An
\emph{assignment $\pi$ for $q(\vec{x})$ in an interpretation $\Imc$}
is a mapping from the variables that occur in $q(\vec{x})$ to
$\Delta^{\Imc}$.  A tuple $\vec{a}=a_{1},\ldots,a_{n}$ in
$\mn{Ind}(\Imc)$ is an \emph{answer to $q(\vec{x})$ in $\Imc$} if
there exists an assigment $\pi$ for $q(\vec{x})$ in $\Imc$ such that
$\Imc\models^{\pi}q(\vec{x})$ (in the standard first-order sense) and
$\pi(x_{i})=a_{i}$ for $1\leq i \leq n$. In this case, we write
$\Imc\models q(\vec{a})$.  A tuple $\vec{a} \in \mn{Ind}(\Amc)$, \Amc
an ABox, is a \emph{certain answer to $q(\vec{x})$ in \Amc w.r.t.}  a
TBox \Tmc, in symbols $\Tmc,\Amc\models q(\vec{a})$, if $\mathcal{I}
\models q(\vec{a})$ for all models \Imc of \Tmc and \Amc. Computing
certain answers to a query in the sense just defined is the main
querying problem we are interested in.  Although this paper
focusses on the theoretical aspects of query answering, we given a
concrete example that illustrates the usefulness of query answering
with DL ontologies.
\begin{example}
  Let
  $$
  \begin{array}{rcl}
    \Tmc &=& \{ \mn{Professer} \sqsubseteq \mn{Academic}, \quad
    \mn{Professor} \sqsubseteq \exists \mn{gives} . \mn{Course} \}
    \\[1mm]
    \Amc &=& \{ \mn{Student}(\mn{john}), \
    \mn{supervisedBy}(\mn{john},\mn{mark}), \
    \mn{Professor}(\mn{mark}) \} \\[1mm]
    q(x,y) &=& \exists z \, \mn{Student}(x) \wedge \mn{supervisedBy}(x,y) \wedge
    \mn{Academic}(y) \wedge \mn{gives}(y,z) \wedge \mn{Course}(z)
  \end{array}
  $$
  Thus the query asks to return all pairs that consist of a student
  $x$ and an academic $y$ such that $x$ is supervised by $y$ and $y$
  gives a course. Although this information is not directly present in
  the ABox, because of the TBox it is easy to see that
  $(\mn{john},\mn{mark})$ is a certain answer.
\end{example}
Apart from PEQs, we also study several fragments thereof. A
\emph{conjunctive query (CQ)} is a PEQ without disjunction.  We
generally assume that a CQ $q(\vec{x})$ takes the form $\exists
\vec{y} \, \varphi(\vec{x},\vec{y})$, where $\varphi(\vec{x},\vec{y})$
is a conjunction of atoms of the form $A(x)$ and $r(x,y)$. It is easy
to see that every PEQ $q(\vec{x})$ is equivalent to a disjunction
$\bigvee_{i\in I}q_{i}(\vec{x})$, where each $q_{i}(\vec{x})$ is a CQ
(such a disjunction is often called a \emph{union of conjunctive
  queries, or UCQ}). 

To introduce simple forms of CQs that play a crucial role in this
paper, we recall two further DLs that we use here for mainly querying
purposes. \emph{$\mathcal{EL}$-concepts} are constructed from $\NC$
and $\NR$ according to the syntax rule
$$
C,D\; := \; \top \mid A \mid  C\sqcap D \mid  
\exists r.C
$$
and \emph{$\mathcal{ELI}$-concepts} additionally admit inverse roles.
An \emph{\EL-TBox} is a finite set of concept inclusions $C
\sqsubseteq D$ with $C$ and $D$ \EL-concepts, and likewise for
\emph{\ELI-TBoxes}. 

We now use \EL and \ELI to define restricted classes of CQs. 
If $C$ is an $\mathcal{ELI}$-concept and $x$ a variable, then $C(x)$
is called an \emph{$\mathcal{ELI}$ query (ELIQ)}; if $C$ is an
$\mathcal{EL}$-concept, then $C(x)$ is called an \emph{\EL query
  (ELQ)}. Note that every ELIQ can be regarded as an acyclic CQ with
one answer variable, and indeed this is an equivalent definition of
ELIQs; in the case of ELQs, it is additionally the case that all edges
are directed away from the answer variable. For example, the ELIQ
$\exists r.(A \sqcap \exists s^{-}.B)(x)$ is equivalent to the acyclic
CQ
$$
\exists y_{1}\exists y_{2}(r(x,y_{1})\wedge A(y_{1})\wedge s(y_{2},y_{1})\wedge B(y_{2})).
$$
In what follows, we will not distinguish between an ELIQ and its
translation into an acyclic CQ with one answer variable and freely apply notions introduced for
PEQs also to ELIQs and ELQs. We also sometimes slightly abuse notation and use PEQ to denote
the set of all positive existential queries, and likewise for CQ, ELIQ, and ELQ.
\begin{example}\label{ex1}
~

\smallskip 
\noindent 
  (1) Let $\Tmc_{\exists,r}= \{A \sqsubseteq \exists r.A\}$ and $q(x)= \exists r.A(x)$.
Then we have for any ABox $\Amc$, $\Tmc_{\exists,r},\Amc\models q(a)$ iff $A(a)\in \Amc$
or there are $r(a,b),A(b)\in \Amc$. 

\smallskip 
\noindent  
 (2) Let $\Tmc_{\exists,l} = \{ \exists r.A\sqsubseteq A\}$
and $q(x) = A(x)$. For any ABox $\Amc$, $\Tmc_{\exists,l},\Amc\models q(a)$ iff
there is an $r$-path in $\Amc$ from $a$ to some $b$ with $A(b)\in \Amc$;
that is, there are $r(a_{0},a_{1}),\dots,r(a_{n-1},a_{n})\in \Amc$, $n \geq 0$, 
with $a_{0}=a$, $a_{n}=b$, and $A(b)\in \Amc$.

\smallskip  
\noindent (3)
Consider an
undirected graph $G$ represented as an ABox $\Amc$ with assertions
$r(a,b),r(b,a) \in \Amc$ iff there is an edge between $a$ and $b$. Let
$A_{1},\ldots,A_{k},M$ be concept names. Then $G$ is
$k$-colorable iff $\Tmc_{k},\Amc\not\models \exists x \, M(x)$, where
$$
\begin{array}{rcl}
\Tmc_{k} & = & \{A_{i} \sqcap A_{j} \sqsubseteq M\mid 1\leq i<j\leq k\} \, \cup\\[0.5mm]
        &    & \{ A_{i} \sqcap \exists r.A_{i} \sqsubseteq M \mid 1\leq i\leq k\}\, \cup\\[0.5mm]
        &     &\{\top \sqsubseteq \midsqcup_{1\leq i \leq k}A_{i}\}. 
\end{array}
$$
\end{example}
Instead of actually computing certain answers to queries, we
concentrate on the query evaluation problem, which is the decision
problem version of query answering. We next introduce this problem
along with associated notions of complexity. An
\emph{ontology-mediated query (OMQ)} is a pair $(\Tmc,q(\vec{x}))$
with \Tmc a TBox $\Tmc$ and $q(\vec{x})$ a query.  The \emph{query
  evaluation problem for $(\Tmc,q(\vec{x}))$} is to decide, given an
ABox $\Amc$ and $\vec{a}$ in $\mn{Ind}(\Amc)$, whether
$\Tmc,\Amc\models q(\vec{a})$.  We shall typically be interested in
joint complexity bounds for evaluating \emph{all} OMQs formulated in a
query language \Qmc of interest w.r.t.\ a given TBox \Tmc. 
\begin{definition}\label{def:main}
Let $\Tmc$ be an $\mathcal{ALCFI}$-TBox and let $\Qmc \in \{ \text{CQ},
\text{PEQ}, \text{ELIQ}, \text{ELQ} \}$. Then
\begin{itemize}
\item \emph{\Qmc-evaluation w.r.t.\ $\Tmc$ is in {\sc PTime}} if for
  every $q(\vec{x}) \in \Qmc$, the query evaluation problem for $(\Tmc,q(\vec{x}))$ is in \PTime.
\item \emph{\Qmc-evaluation w.r.t.\ $\Tmc$ is {\sc coNP}-hard} if there exists
  $q(\vec{x}) \in \Qmc$ such that the query evaluation problem for $(\Tmc,q(\vec{x}))$
  is {\sc coNP}-hard.
\end{itemize}
\end{definition}
Note that one should not think of `\Qmc-evaluation w.r.t.\ \Tmc' as a
decision problem since, informally, this is a collection of infinitely
many decision problems, one for each query in \Qmc. Instead, one 
should think of `\Qmc-evaluation w.r.t.\ \Tmc to be in {\sc PTime}'
(or {\sc coNP}-hard) as a property of \Tmc. 
\begin{example}\label{ex3}
~

\smallskip 
\noindent 
(1) PEQ-evaluation w.r.t.~the TBoxes $\Tmc_{\exists,r}$ and
  $\Tmc_{\exists,l}$ from Example~\ref{ex1} is in {\sc PTime}.  This
  follows from the fact that these TBoxes are $\mathcal{EL}$-TBoxes
  (TBoxes using only $\mathcal{EL}$-concepts) and it is well known
  that PEQ-evaluation w.r.t.~$\mathcal{EL}$-TBoxes is in {\sc PTime}
  \cite{DBLP:conf/lpar/KrisnadhiL07}. 




\smallskip 
\noindent 
(2) Consider the TBoxes $\Tmc_{k}$ from Example~\ref{ex1} that express
$k$-colorability using the query $\exists x\,M(x)$. For $k\geq 3$, CQ-evaluation w.r.t.~$\Tmc_{k}$ is
{\sc coNP}-hard since $k$-colorability is {\sc NP}-hard. However, in
contrast to the tractability of $2$-colorability, CQ-evaluation 
w.r.t.~$\Tmc_{2}$ is still {\sc coNP}-hard. 
This follows from Theorem~\ref{thm:nomatlower} below and, intuitively,
is the case because $\Tmc_{2}$ `entails a disjunction': for $\Amc =
\{B(a)\}$, we have $\Tmc_{2},\Amc\models A_{1}(a)\vee A_{2}(a)$, but
neither $\Tmc_{2},\Amc\models A_{1}(a)$ nor $\Tmc_{2},\Amc\models
A_{2}(a)$.  
\end{example}
%
%
%
In addition to the classification of TBoxes according to whether query
evaluation is in {\sc PTime} or {\sc coNP}-hard, we are also
interested in whether OMQs based on the TBox are rewritable into more
classical database querying languages, in particular into Datalog and
into monadic Datalog.

%
%
%
%
A \emph{Datalog rule} $\rho$ has the form $ S(\vec{x}) \leftarrow
R_{1}(\vec{y}_{1})\wedge \cdots \wedge R_{n}(\vec{y}_{n}) $ where
$n>0$, $S$ is a relation symbol, and $R_{1},\ldots,R_{n}$ are relation
symbols, that is, concept names and role names. We refer to
$S(\vec{x})$ as the \emph{head} of $\rho$ and
$R_{1}(\vec{y}_{1})\wedge \cdots \wedge R_{n}(\vec{y}_{n})$ as its
\emph{body}. Every variable in the head of $\rho$ is required to occur
also in its body.  A \emph{Datalog program} $\Pi$ is a finite set of
Datalog rules with a selected \emph{goal relation} $\mn{goal}$ that
does not occur in rule bodies.  Relation symbols that occur in the
head of at least one rule are called \emph{intensional relation
  symbols (IDBs)}, the remaining symbols are called \emph{extensional
  relation symbols (EDBs)}. Note that, by definition, $\mn{goal}$ is an
IDB. The \emph{arity} of the program is the arity of the \mn{goal}
relation. Programs of arity zero are called \emph{Boolean}. A Datalog
program that uses only IDBs of arity one, with the possible exception
of the \mn{goal} relation, is called \emph{monadic}.

For an ABox $\Amc$, a Datalog program $\Pi$, and $\vec{a}$ from
$\mn{Ind}(\Amc)$ of the same length as the arity of $\mn{goal}$, we
write $\Amc\models \Pi(\vec{a})$ if $\Pi$ returns $\vec{a}$ as an
answer on $\Amc$, defined in the usual way
\cite{DBLP:journals/tkde/CeriGT89}. A (monadic) Datalog program $\Pi$
is a \emph{(monadic) Datalog-rewriting} of an OMQ $(\Tmc,q(\vec{x}))$
if for all ABoxes $\Amc$ and $\vec{a}$ from $\mn{Ind}(\Amc)$,
$\Tmc,\Amc\models q(\vec{a})$ iff $\Amc\models \Pi(\vec{a})$. In this
case the OMQ $(\Tmc,q(\vec{x}))$ is called \emph{(monadic)
  Datalog-rewritable}.  When working with DLs such as
$\mathcal{ALCFI}$ that include functional roles, it is more natural to
admit the use of inequalities in the bodies of Datalog rules instead
of working with `pure' programs. We refer to such extended programs as
(monadic) Datalog$^{\neq}$ programs and accordingly speak of
(monadic) Datalog$^{\neq}$-rewritability.
\begin{example}
\label{exa:newlabel}
~

\smallskip 
\noindent 
(1) The OMQ $(\Tmc_{\exists,l},A(x))$ from Example~\ref{ex1}
    expressing a form of reachability is
    rewritable into the monadic Datalog program
$$
\mn{goal}(x) \leftarrow P(x), \quad P(x) \leftarrow A(x), \quad P(x)\leftarrow r(x,y)\wedge P(y).
$$

\smallskip 
\noindent 
(2) The OMQ $(\Tmc_{k},\exists x \, M(x))$ from Example~\ref{ex1} is
Datalog-rewritable when $k= 2$ since non-2-colorability can be
expressed by a Datalog program (but not as a monadic one). For $k\geq
3$, non-$k$-colorability cannot be expressed by a Datalog program (in
fact, not even by a Datalog$^{\not=}$ program)
\cite{DBLP:conf/pods/AfratiCY91}.

\smallskip 
\noindent 
(3) The OMQ $(\{\mn{func}(r)\},\exists x \,  M(x))$ is rewritable into
the monadic Datalog$^{\neq}$ program 
$$
\mn{goal}() \leftarrow r(x,y_1) \wedge r(x,y_2) \wedge y_1 \neq y_2 , \quad
\mn{goal}() \leftarrow M(x)
$$
but is not rewritable into pure Datalog.
\end{example}
\begin{definition}\label{def:main2}
  Let $\Tmc$ be an $\mathcal{ALCFI}$-TBox and let $\Qmc \in \{
  \text{CQ}, \text{PEQ}, \text{ELIQ}, \text{ELQ} \}$. Then $\Tmc$ is
  \emph{(monadic) Datalog$^{\neq}$-rewritable for $\Qmc$} if
  $(\Tmc,q(\vec{x}))$ is (monadic) Datalog$^{\neq}$-rewritable for
  every $q(\vec{x}) \in \Qmc$.
\end{definition}
We would like to stress that the extension of Datalog to
Datalog$^{\not=}$ makes sense only in the presence of functional
roles. In fact, it follows from the CSP connection established in
Section~\ref{sect:dicho} and the results in
\cite{DBLP:conf/lics/FederV03} that for \ALCI-TBoxes,
Datalog$^{\not=}$-rewritability for \Qmc agrees with
Datalog-rewritability for \Qmc, for all query classes \Qmc
considered in this paper.
\begin{example}
  It is folklore that every \EL-TBox is monadic Datalog-rewritable for
  ELQ, ELIQ, CQs, and PEQs. Thus, this applies in particular to the
  \EL-TBoxes $\Tmc_{\exists,l}$ and $\Tmc_{\exists,r}$ from
  Example~\ref{ex1}.  A concrete construction of Datalog-rewritings
  for ELIQs can be found in the proof of
  Theorem~\ref{thm:unravupperlem:hornalclem:hornalc} below.  In
  contrast, the \ALC-TBox $\Tmc_k$ from Example~\ref{ex1} is not
  Datalog$^{\not=}$-rewritable for ELQ when $k \geq 3$ since the OMQ
  $(\Tmc_{k},\exists x \, M(x))$ is not Datalog-rewritable,
  by Example~\ref{exa:newlabel} (2).
\end{example}

Datalog$^{\neq}$-programs can be evaluated in \PTime
\cite{DBLP:journals/tkde/CeriGT89} in data complexity, and thus
Datalog$^{\neq}$-rewritability for \Qmc of a TBox \Tmc implies that
$\Qmc$-evaluation w.r.t.\ \Tmc is in \PTime in data complexity. We
shall see later that the converse direction does not hold in general.

\smallskip 

We will often be concerned with homomorphisms between ABoxes and
between interpretations, defined next. Let \Amc and \Bmc be ABoxes.  A
function $h:\mn{Ind}(\Amc) \rightarrow \mn{Ind}(\Bmc)$ is a
\emph{homomorphism from \Amc to~\Bmc} if it satisfies the following 
conditions:
\begin{enumerate}

\item $A(a) \in \Amc$ implies $A(h(a)) \in \Bmc$ and

\item $r(a,b) \in \Amc$ implies $r(h(a),h(b)) \in \Bmc$.

\end{enumerate}
We say that \emph{$h$ preserves} $I \subseteq \NI$ if $h(a)=a$ for all
$a \in I$. Homomorphisms from an interpretation \Imc to an
interpretation \Jmc are defined analogously as functions
$h:\Delta^\Imc \rightarrow \Delta^\Jmc$. Note that these two notions
are in fact identical since, up to presentation, ABoxes and finite
interpretations are the same thing. In what follows we will not always
distinguish between the two presentations.

\section{Materializability}
\label{sect:mat}
We introduce materializability as a central notion for analyzing the
complexity and rewritability of TBoxes. A materialization of a TBox
$\Tmc$ and ABox $\Amc$ for a class of queries $\Qmc$ is a (potentially
infinite) model of $\Tmc$ and $\Amc$ that gives the same answers to
queries in $\Qmc$ as $\Tmc$ and $\Amc$ do. It is not difficult to see
that a materialization for ELIQs is not necessarily a materialization
for CQs and that a materialization for ELQs is not necessarily a
materialization for ELIQs. We shall call a TBox $\Tmc$ materializable
for a query language $\Qmc$ if for every ABox $\Amc$ that is
consistent w.r.t.~$\Tmc$, there is a materialization of $\Tmc$ and \Amc
for $\Qmc$. Interestingly, we show that materializability of
\ALCFI-TBoxes does \emph{not} depend on whether one considers ELIQs,
CQs, or PEQs.  This result allows us to simply talk about
materializable TBoxes, independently of the query language
considered. The fundamental result linking materializability of a TBox
to the complexity of query evaluation is that ELIQ-evaluation is {\sc
  coNP}-hard w.r.t.\ non-materializable \ALCFI-TBoxes. As another
application of materializability, we show that for \ALCFI-TBoxes, {\sc
  PTime} query evaluation, {\sc coNP}-hardness of query evaluation,
and Datalog$^{\not=}$-rewritability also do not depend on the query
language.  In the case of \ALCF, materializability for ELIQs
additionally coincides with materializability for ELQs.
\begin{definition}\label{cond0}
  Let $\Tmc$ be an $\mathcal{ALCFI}$-TBox and $\Qmc \in \{ \text{CQ},
  \text{PEQ}, \text{ELIQ}, \text{ELQ} \}$. Then
  \begin{enumerate}
    
  \item a model $\Imc$ of \Tmc and an ABox \Amc is a
    \emph{$\Qmc$-materialization of $\Tmc$ and}~$\Amc$ if for all
    queries $q(\vec{x})\in \Qmc$ and $\vec{a}
    \subseteq {\sf Ind}(\Amc)$, we have $\mathcal{I} \models
    q(\vec{a})$ iff \mbox{$\Tmc,\Amc\models q(\vec{a})$};

  \item $\Tmc$ is \emph{$\Qmc$-materializable} if for every
  ABox $\Amc$ that is consistent w.r.t.~$\Tmc$, there exists a
  $\Qmc$-materialization of $\Tmc$ and~$\Amc$.
  \end{enumerate}
\end{definition}
In Point~(1) of Definition~\ref{cond0}, it is important that the
materialization $\Imc$ of $\Tmc$ and $\Amc$ is a model of $\Tmc$ and
$\Amc$. In fact, for an ABox $\Amc$ that is consistent w.r.t.~$\Tmc$,
we can \emph{always} find an interpretation $\Imc$ such that for every
CQ $q(\vec{x})$ and $\vec{a} \subseteq \mn{Ind}(\Amc)$, $\Imc\models
q(\vec{a})$ iff $\Tmc,\Amc\models q(\vec{a})$. In particular, the
direct product of all (up to isomorphisms) countable models of $\Tmc$
and $\Amc$ can serve as such an \Imc. However, the interpretation is,
in general, not a model of~\Tmc.

\smallskip

Note that a \Qmc-materialization can be viewed as a more abstract
version of the \emph{canonical} or \emph{minimal} or \emph{universal} model
as often used in the context of `Horn DLs' such as \EL and DL-Lite
\cite{DBLP:conf/ijcai/LutzTW09,DBLP:conf/kr/KontchakovLTWZ10,DBLP:conf/rweb/BienvenuO15}
and more expressive ontology languages based on tuple-generating dependencies (tgds) \cite{DBLP:journals/jair/CaliGK13}
as well as in data exchange~\cite{DBLP:journals/tcs/FaginKMP05}.
In fact, the ELQ-materialization in the next example is exactly 
the `compact canonical model' from~\cite{DBLP:conf/ijcai/LutzTW09}.
\begin{example}\label{ex5} 
~

\smallskip 
\noindent 
(1) Let $\Tmc_{\exists,l} = \{ \exists r .A \sqsubseteq A\}$ be as in
Example~\ref{ex1} and let $\Amc$ be an ABox. Let $\Imc$ be the
interpretation obtained from $\Amc$ by adding to $A^\Imc$ all $a\in
\mn{Ind}(\Amc)$ such that there exists an $r$-path from $a$ to some
$b$ with $A(b)\in \Amc$. Then $\Imc$ is a PEQ-materialization of
$\Tmc$ and $\Amc$ and so $\Tmc$ is PEQ-materializable.

\smallskip 
\noindent 
(2) Let $\Tmc_{\exists,r} = \{A \sqsubseteq \exists r.A\}$ be as in
Example~\ref{ex1} and let $\Amc$ be an ABox with at least one
assertion of the form $A(a)$. To obtain an ELQ-materialization $\Imc$
of \Tmc and \Amc, start with \Amc as an interpretation, add a fresh
domain element $d_{r}$ to $\Delta^\Imc$ and to $A^\Imc$, and extend
$r^\Imc$ with $(a,d_r)$ and $(d_r,d_r)$ for all $A(a) \in \Amc$.  Thus
$\Tmc_{\exists,r}$ is ELQ-materializable.

\smallskip
\noindent
(3) The TBox $\Tmc=\{A \sqsubseteq A_{1}\sqcup A_{2}\}$ is not ELQ-materializable.
To see this let $\Amc=\{A(a)\}$. Then no model $\Imc$ of $\Tmc$ and $\Amc$ is an ELQ-materialization
of $\Tmc$ and $\Amc$ as it satisfies $a\in A_{1}^{\Imc}$ or $a\in A_{2}^{\Imc}$ but
neither $\Tmc,\Amc\models A_{1}(a)$ nor $\Tmc,\Amc\models A_{2}(a)$. 
\end{example}
Trivially, every PEQ-materialization is a CQ-materialization, every CQ-materialization is
an ELIQ-materialization and every ELIQ-materialization is an ELQ-materialization. 
Conversely, it follows directly from the fact that each PEQ is equivalent to a disjunction of CQs
that every CQ-materializ\-ation is also a PEQ-materialization.
In contrast, the following example demonstrates that ELQ-materializ\-ations are
different from ELIQ-materializations. A similar argument separates
ELIQ-materializations from CQ-materializations.
\begin{example}\label{ex5b} Let $\Tmc_{\exists,r}$ be as in
    Example~\ref{ex5},
$$
\begin{array}{rcl}
  \Amc&=&\{B_{1}(a),B_{2}(b),A(a),A(b)\} \text{ and}\\[1mm]
  q(x)&=& (B_{1}\sqcap \exists r.\exists r^{-}.B_{2})(x),
\end{array}
$$
Then the ELQ-materialization $\mathcal{I}$ from Example~\ref{ex5} (2) is not 
a \Qmc-materialization
for any $\Qmc$ from the set of query languages 
$\text{ELIQ}, \text{CQ}, \text{PEQ}$. For example, 
we have
$\Imc\models q(a)$, but $\Tmc,\Amc \not\models q(a)$.
An ELIQ/CQ/PEQ-materialization of $\Tmc$ and $\Amc$ is obtained by
unfolding $\Imc$ (see below): instead of using only one additional
domain element $d_{r}$ as a witness for $\exists r.A$, we attach to both
$a$ and $b$ an infinite $r$-path of elements that satisfy $A$.  Note
that every CQ/PEQ-materialization of $\Tmc_{\exists,r}$ and $\Amc$ must be
infinite.  
\end{example}
We will sometimes restrict our attention to materializations $\Imc$ that
are countable and \emph{generated}, i.e, every $d\in \Delta^{\Imc}$ is reachable from some 
$a\in \Delta^\Imc \cap \NI$ in the undirected graph 
$$
G_{\Imc} = (\Delta^{\Imc},\{ \{d,d'\} \mid (d,d') \in \bigcup_{r \in \NR} r^\Imc \}).
$$
The following lemma shows that we can make that assumption without loss of generality.
\begin{lemma}\label{lem:count}
Let $\Tmc$ be an $\mathcal{ALCFI}$-TBox, $\Amc$ an ABox, and $\Qmc \in \{ \text{CQ},
  \text{PEQ}, \text{ELIQ}, \text{ELQ} \}$. If $\Imc$ is a $\Qmc$-materialization
of $\Tmc$ and $\Amc$, then there exists a subinterpretation $\Jmc$ of $\Imc$ that
is a countable and generated $\Qmc$-materialization of $\Tmc$ and $\Amc$.
\end{lemma}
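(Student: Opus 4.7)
The plan is to build $\Jmc$ as a subinterpretation of $\Imc$ whose domain consists of $\mn{Ind}(\Amc)$ together with carefully chosen witnesses for existential and universal subconcepts of $\Tmc$. First I would assume, without loss of generality, that $\Tmc$ is in negation normal form, so negation appears only in front of concept names, and let $\mn{cl}(\Tmc)$ denote the (finite) set of all subconcepts of concepts occurring in $\Tmc$. Define an increasing chain $\Delta_0 \subseteq \Delta_1 \subseteq \cdots$ of subsets of $\Delta^\Imc$ by $\Delta_0 := \mn{Ind}(\Amc)$ and, given $\Delta_i$, let $\Delta_{i+1}$ extend $\Delta_i$ by adding, for every $d \in \Delta_i$: (a)~one witness $d'$ with $(d,d') \in r^\Imc$ and $d' \in C^\Imc$ for each $\exists r.C \in \mn{cl}(\Tmc)$ such that $d \in (\exists r.C)^\Imc$, where $r$ may be an inverse role; and (b)~one counter-witness $d''$ with $(d,d'') \in r^\Imc$ and $d'' \notin C^\Imc$ for each $\forall r.C \in \mn{cl}(\Tmc)$ such that $d \notin (\forall r.C)^\Imc$. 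Set $\Delta^\Jmc := \bigcup_{i<\omega} \Delta_i$ and let $\Jmc$ be the induced subinterpretation of $\Imc$, i.e.\ $A^\Jmc := A^\Imc \cap \Delta^\Jmc$ and $r^\Jmc := r^\Imc \cap (\Delta^\Jmc \times \Delta^\Jmc)$. Each stage multiplies the size by at most $1+2|\mn{cl}(\Tmc)|$, so $\Delta^\Jmc$ is countable; and since every newly added element is a role-neighbour of an already present element, $\Jmc$ is generated.

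The core of the proof is the following preservation claim, shown by induction on $C$: for every $d \in \Delta^\Jmc$ and every $C \in \mn{cl}(\Tmc)$, $d \in C^\Jmc$ iff $d \in C^\Imc$. Atomic, negated-atomic, and Boolean cases are immediate from the definition of subinterpretation together with the induction hypothesis. For $C = \exists r.C'$, the direction from $\Jmc$ to $\Imc$ is trivial because $r^\Jmc \subseteq r^\Imc$, while the converse is supplied by the witness added in step (a). The main obstacle is $C = \forall r.C'$: the direction from $\Imc$ to $\Jmc$ is easy as in the existential case, but the direction from $\Jmc$ to $\Imc$ is precisely what forces the inclusion of counter-witnesses in step (b). Concretely, if $d \notin (\forall r.C')^\Imc$, the construction inserts some $d'' \in \Delta^\Jmc$ with $(d,d'') \in r^\Imc$ and $d'' \notin C'^\Imc$; the induction hypothesis then gives $d'' \notin C'^\Jmc$, so $(d,d'') \in r^\Jmc$ witnesses $d \notin (\forall r.C')^\Jmc$.

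With the preservation claim, $\Jmc \models \Tmc$ follows at once: for each CI $C \sqsubseteq D$ in $\Tmc$ and each $d \in C^\Jmc$, the claim gives $d \in C^\Imc$, hence $d \in D^\Imc$ by $\Imc \models \Tmc$, and the claim again yields $d \in D^\Jmc$; functionality assertions transfer because $r^\Jmc \subseteq r^\Imc$. Moreover $\Jmc \models \Amc$, since $\mn{Ind}(\Amc) \subseteq \Delta^\Jmc$ and $\Jmc$ is a subinterpretation of the model $\Imc$ of $\Amc$. Finally, to see that $\Jmc$ is a $\Qmc$-materialization, fix $q(\vec{x}) \in \Qmc$ and $\vec{a}$ in $\mn{Ind}(\Amc)$. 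The inclusion $\Jmc \hookrightarrow \Imc$ is a homomorphism that fixes $\mn{Ind}(\Amc)$, so by preservation of positive queries under homomorphism, $\Jmc \models q(\vec{a})$ implies $\Imc \models q(\vec{a})$, which in turn implies $\Tmc,\Amc \models q(\vec{a})$ since $\Imc$ is a $\Qmc$-materialization; conversely, $\Tmc,\Amc \models q(\vec{a})$ entails $\Jmc \models q(\vec{a})$ because $\Jmc$ is a model of $\Tmc$ and $\Amc$. Hence $\Jmc$ is the desired countable and generated $\Qmc$-materialization.
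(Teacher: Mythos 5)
Your proof is correct and follows essentially the same route as the paper: a selective-filtration construction starting from $\mn{Ind}(\Amc)$ that iteratively adds witnesses for the existential restrictions $\exists r.C$ occurring in $\Tmc$ and counter-witnesses for the universal restrictions (i.e., witnesses for $\exists r.\neg C$ when $\forall r.C$ occurs in $\Tmc$). You merely spell out the preservation-of-types induction and the verification that the result is a $\Qmc$-materialization, which the paper leaves as ``easy to see.''
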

\begin{proof}
  Let $\Imc$ be a $\Qmc$-materialization of $\Tmc$ and
  $\Amc$. To construct $\Jmc$ we apply a
  standard selective filtration procedure to $\Imc$. More precisely,
  we identify a sequence ${\sf Ind}(\Amc) = S_0 \subseteq S_1
  \subseteq \cdots \subseteq \Delta^{\Imc}$ and then define $\Jmc$
  to be the restriction of $\Imc$ to $\bigcup_i S_i$. Let \Cmc be
  the set of all concepts of the form $\exists r . C$ that occur in
  $\Tmc$ and of all concepts $\exists r . \neg C$ such that $\forall r
  . C$ occurs in \Tmc. Assume $S_{i}$ has already been defined. Then
  define $S_{i+1}$ as the union of $S_{i}$ and, for every $d\in S_{i}$
  and concept $\exists r.C \in \Cmc$ with $d\in (\exists
  r.C)^{\Imc}$, an arbitrary $d'\in \Delta^{\Imc}$ with $(d,d')\in
  r^{\Imc}$ and $d'\in C^{\Imc}$ (unless such a $d'$ exists
  already in $S_{i}$). 
  It is easy to see that $\Jmc$ is a countable and generated $\Qmc$-materialization of $\Tmc$ and $\Amc$.
\end{proof}
%
%

\subsection{Model-Theoretic Characterizations of Materializability}

We characterize materializations using simulations and homomorphisms.
This sheds light on the nature of materializations and establishes a
close connection between materializations and initial models as
studied in model theory, algebraic specification, and logic
programming~\cite{Malcev,MesGog,DBLP:journals/jcss/Makowsky87}.
  
A \emph{simulation from an interpretation $\Imc_1$ to an interpretation $\Imc_2$} is a 
relation $S \subseteq \Delta^{\Imc_1} \times \Delta^{\Imc_2}$ such that 
%
%
%
\begin{enumerate}
\item for all $A\in \NC$: if $d_{1}\in A^{\Imc_{1}}$ and $(d_{1},d_{2})\in S$, then $d_{2}\in A^{\Imc_{2}}$; 
\item for all $r\in \NR$: if $(d_{1},d_{2})\in S$ and $(d_{1},d_{1}')\in r^{\Imc_{1}}$,
then there exists $d_{2}' \in \Delta^{\Imc_{2}}$ such that $(d_{1}',d_{2}')\in S$ and 
$(d_{2},d_{2}')\in r^{\Imc_{2}}$;
\item for all $a\in \Delta^{\Imc_1} \cap \NI$: $a\in \Delta^{\Imc_2}$ and 
$(a,a) \in S$.
\end{enumerate}
Note that, by Condition~(3), domain elements that are individual names
need to be respected by simulations while other domain elements need
not. In database parlance, the latter are thus treated as
\emph{labeled nulls}, that is, while their existence is important,
their identity is not.  

We call a simulation $S$ an \emph{i-simulation} if Condition~(2) is satisfied also
for inverse roles. Note that $S$ is a homomorphism preserving $\Delta^{\Imc_1} \cap \NI$ if 
$S$ is a function with domain $\Delta^{\Imc}$. We remind the reader of the following
characterizations of ELQs using simulations, ELIQs using
i-simulations, and CQs using homomorphisms (see e.g.\ \cite{DBLP:journals/jsc/LutzW10}).  An
interpretation $\Imc$ has \emph{finite outdegree} if the undirected
graph $G_{\Imc}$ has finite outdegree. 
\begin{lem}\label{lem:simelq}
Let $\Imc$ and $\Jmc$ be interpretations such that $\Delta^\Imc\cap \NI$ is finite, $\Imc$
is countable and generated, and $\Jmc$ has finite outdegree. Then the following conditions are equivalent
(where none of the assumed conditions on $\Imc$ and $\Jmc$ is required for (2) $\Rightarrow$ (1)).
\begin{enumerate}
\item For all ELIQs $C(x)$ and $a\in \Delta^\Imc\cap \NI$: if $\Imc\models C(a)$, then $\Jmc\models C(a)$;
\item There is an i-simulation from $\Imc$ to $\Jmc$.
\end{enumerate}
The same equivalence holds when ELIQs and i-simulations are replaced
by ELQs and simulations, respectively.  Moreover, the following conditions are
equivalent (where none of the assumed conditions on $\Imc$ and
$\Jmc$ is required for (5) $\Rightarrow$ (3)).
\begin{enumerate}
\item[(3)] For all PEQs $q(\vec{x})$ and $\vec{a} \subseteq \Delta^\Imc\cap \NI$: if $\Imc\models q(\vec{a})$, then $\Jmc\models q(\vec{a})$;
\item[(4)] For all CQs $q(\vec{x})$ and $\vec{a} \subseteq \Delta^\Imc\cap \NI$: if $\Imc\models q(\vec{a})$, then $\Jmc\models q(\vec{a})$;
\item[(5)] There is a homomorphism from $\Imc$ to $\Jmc$ preserving $\Delta^\Imc\cap \NI$.
\end{enumerate}
\end{lem}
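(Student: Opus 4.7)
The plan is to handle the two halves of the lemma separately: the (i-)simulation characterizations for ELQs and ELIQs, and the homomorphism characterization for CQs and PEQs. The easy directions (2) $\Rightarrow$ (1) and (5) $\Rightarrow$ (3),(4), which do not rely on any of the assumptions on $\Imc$ and $\Jmc$, follow by a routine structural induction on the ELQ/ELIQ concept or the CQ; in the i-simulation case the step for $\exists r^-.C$ is covered because Condition~(2) applies also to inverse roles, while Condition~(3) (resp.\ individual-preservation of the homomorphism) handles the answer variables. The equivalence (3) $\Leftrightarrow$ (4) is immediate from the fact that every PEQ is logically equivalent to a UCQ, together with $\Imc\models (\bigvee_i q_i)(\vec{a})$ iff $\Imc\models q_i(\vec{a})$ for some $i$.

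For (1) $\Rightarrow$ (2) in the ELIQ/i-simulation case (the ELQ/simulation case is fully analogous, simply dropping inverse roles), I would define
$$
S \; := \; \{\,(d_1,d_2)\in\Delta^\Imc\times\Delta^\Jmc \mid d_1\in C^\Imc \Rightarrow d_2\in C^\Jmc \text{ for every ELIQ concept } C\,\}
$$
and verify the three conditions. Condition~(1) follows by taking $C = A$. Condition~(3) follows by taking $C = \top$: for every $a \in \Delta^\Imc \cap \NI$ we have $\Imc \models \top(a)$ and hence, by (1), $\Jmc \models \top(a)$, so $a \in \Delta^\Jmc$ and $(a,a) \in S$. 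The main content is Condition~(2): given $(d_1,d_2)\in S$ and $(d_1,d_1')\in r^\Imc$ for some $r \in \NR \cup \Nsf_\Rsf^-$, the finite outdegree of $\Jmc$ leaves only finitely many $r$-neighbors $d_2^{(1)},\ldots,d_2^{(n)}$ of $d_2$. If none satisfies $(d_1',d_2^{(i)})\in S$, pick witness ELIQ concepts $C_i$ with $d_1'\in C_i^\Imc$ but $d_2^{(i)}\notin C_i^\Jmc$; then $C := C_1 \sqcap \cdots \sqcap C_n$ yields an ELIQ concept $\exists r.C$ with $d_1 \in (\exists r.C)^\Imc$ but $d_2 \notin (\exists r.C)^\Jmc$, contradicting $(d_1,d_2)\in S$.

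For (4) $\Rightarrow$ (5), I would invoke K\"onig's lemma. Enumerate $\Delta^\Imc = d_0,d_1,\ldots$ so that $d_0,\ldots,d_{m-1}$ are exactly the individuals (with $m = |\Delta^\Imc \cap \NI|$, finite by assumption) and each later $d_k$ is adjacent in $G_\Imc$ to some $d_i$ with $i<k$; such an enumeration exists by BFS since $\Imc$ is countable and generated. Let $T$ be the tree whose level-$k$ nodes are partial homomorphisms from the restriction of $\Imc$ to $\{d_0,\ldots,d_{k-1}\}$ into $\Jmc$ that preserve individuals, ordered by one-step extension. Branching is finite because extending by $d_k$ forces $h(d_k)$ into the finite set of $G_\Jmc$-neighbors of some already-assigned $h(d_i)$. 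Every level is non-empty because the CQ $q_k(\vec{x})$ that describes the restriction of $\Imc$ to $\{d_0,\ldots,d_{k-1}\}$ (with the individuals as answer variables) is satisfied by $\Imc$, hence by (4) also by $\Jmc$, and any certifying assignment is a level-$k$ node. K\"onig's lemma then yields an infinite branch whose union is the desired homomorphism from $\Imc$ to $\Jmc$ preserving $\Delta^\Imc\cap\NI$.

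The main obstacle will be orchestrating the assumptions in the (4) $\Rightarrow$ (5) K\"onig argument: countability and generatedness of $\Imc$ together with finiteness of $\Delta^\Imc\cap\NI$ are needed to obtain an enumeration in which every new element is anchored to an earlier one, and finite outdegree of $\Jmc$ is then what turns this anchor into a finite branching bound. Without this precise combination, either the tree could fail to be finitely branching or its branches could fail to cover all of $\Imc$, and the same balance is what makes the conjunction trick used for Condition~(2) in the (i-)simulation direction go through cleanly.
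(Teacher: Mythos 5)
Your overall strategy coincides with the paper's: the easy directions by structural induction and composition, (3)$\Leftrightarrow$(4) via the UCQ normal form of PEQs, (1)$\Rightarrow$(2) via the maximal preservation relation $S$ together with the finite-conjunction trick enabled by the finite outdegree of $\Jmc$ (a direction the paper explicitly leaves to the reader, so writing it out is welcome and your argument is correct), and (4)$\Rightarrow$(5) by assembling a global homomorphism from partial ones obtained by reading finite, anchored pieces of $\Imc$ as CQs, with the anchored enumeration plus finite outdegree of $\Jmc$ bounding the candidates at each stage. Your K\"onig's-lemma tree is a repackaging of the paper's pigeonhole limit construction $h_0\subseteq h_1\subseteq\cdots$, so the two arguments are essentially the same.

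There is, however, one genuine gap in (4)$\Rightarrow$(5). You take ``the CQ $q_k(\vec{x})$ that describes the restriction of $\Imc$ to $\{d_0,\dots,d_{k-1}\}$''. Since $\NC$ and $\NR$ are countably infinite and the lemma places no restriction on $\Imc$, a single element $d_i$ may belong to $A^{\Imc}$ for infinitely many concept names $A$ (and a pair may lie in infinitely many $r^{\Imc}$), so the atomic diagram of the restriction is in general an infinite conjunction and not a CQ. Hypothesis (4) then does not directly yield that level $k$ of your tree is non-empty: a certifying assignment for any finite sub-conjunction need not preserve all concept and role names, which is what membership at level $k$ requires. The paper flags exactly this point (``We first assume that only a finite set $\Sigma$ of concept and role names have a non-empty interpretation in $\Imc$ and then generalize'') and closes it with a second pigeonhole argument: the anchoring and the finite outdegree of $\Jmc$ make the set of individual-preserving candidate maps on $\{d_0,\dots,d_{k-1}\}$ finite, and since for every finite signature $\Sigma$ some candidate is a $\Sigma$-homomorphism (this part does follow from (4)), one fixed candidate must work for all finite signatures simultaneously and is therefore a genuine partial homomorphism. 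Your K\"onig framework absorbs this fix without changing anything else, but as written the non-emptiness of the tree levels is unjustified for general $\Imc$. Your (1)$\Rightarrow$(2) argument is not affected, since the conjunction of witness concepts there is finite by construction.
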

\begin{proof}
  We prove the equivalence of (3)-(5). The equivalence of (1) and (2)
  is similar (both for ELIQs and ELQs) but simpler and left to the
  reader. The implication (3) $\Rightarrow$ (4) is trivial. For the
  proof of (4) $\Rightarrow$ (5), assume that $\Imc$ is countable and
  generated and let $\Jmc$ have finite outdegree. We first assume that
  only a finite set $\Sigma$ of concept and role names have a
  non-empty interpretation in $\Imc$ and then generalize the result to
  arbitrary $\Imc$.  Assume that (4) holds. First observe that for
  every finite subset $X$ of $\Delta^{\Imc}$ there is a homomorphism
  $h_{X}$ preserving $X\cap \NI$ from the subinterpretation
  $\Imc_{\restriction X}$ of $\Imc$ induced by $X$ into $\Jmc$:
  associate with every $d\in X$ a variable $x_{d}$ and
  regard $\Imc_{\restriction X}$ as the CQ 
$$
q_{X}(\vec{x}) =
\exists \vec{y} \,
 \bigwedge_{d\in X \cap A^{\Imc}} A(x_{d}) \wedge 
\bigwedge_{(d,d')\in (X\times X) \cap r^{\Imc}}r(x_{d},x_{d'}),
$$
where $\vec{x}$ comprises the variables in $\{x_{a} \mid a \in X\cap \NI\}$ and
$\vec{y}$ comprises the variables $x_{d}$ with $d\in X\setminus \NI$
($q_{X}$ is a CQ by our assumption that only finitely many concept and role names have non-empty interpretation).
For the assignment $\pi(x_{d})=d$, we have $\Imc \models_{\pi} \varphi(\vec{x},\vec{y})$. Thus
$\Imc\models_{\pi}q_{X}(\vec{x})$ and so, by (2), $\Jmc\models_{\pi} q_{X}(\vec{x})$. 
Consequently, there exists an assignment $\pi'$ for $q_{X}(\vec{x})$ in $\Jmc$ which
coincides with $\pi$ on $\{x_{a} \mid a\in X\cap \NI\}$ such that 
$\Jmc\models_{\pi'} \varphi(\vec{x},\vec{y})$. Let 
$h_{X}(d)= \pi'(d)$ for $d\in X$. Then $h_{X}$ is a homomorphism from $\Imc_{\restriction X}$ to $\Jmc$ preserving $X\cap \NI$, 
as required.

We now lift the homomorphisms $h_{X}$ to a homomorphism $h$ from
$\Imc$ to $\Jmc$ preserving $\Delta^\Imc\cap \NI$.  Since $\Imc$ is countable and generated, there
exists a sequence $X_{0}\subseteq X_{1} \subseteq \cdots$ of finite
subsets of $\Delta^{\Imc}$ such that $X_{0}= \Delta^\Imc \cap \NI$,
$\bigcup_{i\geq 0}X_{i}= \Delta^{\Imc}$, and for all $d\in X_{i}$
there exists a path in $X_{i}$ from some $a\in X_0$ to
$d$.

By the observation above, we find homomorphisms $h_{X_{i}}$ from
$\Imc_{\restriction X_{i}}$ to $\Jmc$ preserving $X_{i}\cap \NI$, for $i\geq 0$.  Let
$d_{0},d_{1}\ldots$ be an enumeration of $\Delta^{\Imc}$. We define
the required homomorphism $h$ as the limit of a sequence
$h_{0}\subseteq h_{1}\subseteq \cdots$, where each $h_{n}$ has domain
$\{d_{0},\ldots,d_{n}\}$ and where we ensure for each $h_n$ and all
$d\in \{d_{0},\ldots,d_{n}\}$ that there are infinitely many $j$ with
$h_{n}(d)= h_{\restriction X_{j}}(d)$. Observe that since $\Jmc$ has
finite outdegree and since for all $d\in X_{i}$, there exists a path
in $X_{i}$ from some $a\in X_0$ to $d$, for each $d\in \Delta^{\Imc}$
there exist only finitely many distinct values in $\{h_{\restriction
  X_{i}}(d) \mid i\geq 0\}$.  By the pigeonhole principle, there thus
exist infinitely many $j$ with the same value $h_{X_{j}}(d)$.  For
$h_{0}(d_{0})$ we take such a value for $d_{0}$. Assume $h_{n}$ has
been defined and assume that the set $I=\{ j \mid h_{n}(d)=
h_{\restriction X_{j}}(d) \mbox{ for all }d\in
\{d_{0},\ldots,d_{n}\}\}$ is infinite.  Again by the pigeonhole
principle, we find a value $e\in \Delta^{\Jmc}$ such that
$h_{X_{j}}(d_{n+1})=e$ for infinitely many $j\in I$. We set
$h_{n+1}(d_{n+1})=e$. The function $h=\bigcup_{i\geq 0}h_{0}$ is a
homomorphism from $\Imc$ to $\Jmc$ preserving $\Delta^\Imc\cap \NI$, as required.

To lift this result to arbitrary interpretations $\Imc$, it is
sufficient to prove that the homomorphisms $h_{X}$ still exist. This
can be shown using again the pigeonhole principle. Let
$X\subseteq \Delta^{\Imc}$ be finite.  We may assume that for each
$d\in X$, there exists a path in $X$ from some $a\in X \cap \NI$,
to $d$.  We have shown that for each finite set $\Sigma$ of concept
and role names, there exists a homomorphism $h_{X}^{\Sigma}$ from the
$\Sigma$-reduct $\Imc_{X}^{\Sigma}$ of $\Imc_{X}$ to $\Jmc$
($\Imc_{X}^{\Sigma}$ interprets only the symbols in $\Sigma$ as
non-empty). Since $\Jmc$ has finite outdegree, infinitely many
$h_{X}^{\Sigma}$ coincide.  A straightforward modification of the
pigeonhole argument above can now be used to construct the required
homomorphism $h_{X}$.
 
For the proof of (5) $\Rightarrow$ (3), assume $\Imc\models
q(\vec{a})$ and let $h$ be a homomorphism from $\Imc$ to $\Jmc$ preserving $\Delta^\Imc\cap \NI$. Let
$\pi$ be an assignment for $q(\vec{x})$ in $\Imc$ witnessing
$\Imc\models q(\vec{a})$.  Then the composition $h\circ \pi$ is an
assignment for $q(\vec{x})$ in $\Jmc$ witnessing $\Jmc\models
q(\vec{a})$.
\end{proof}
For the next steps, we need some observations regarding the unfolding
of interpretations into forest-shaped interpretations. Let us first make
precise what we mean by unfolding. The \emph{i-unfolding} of an
interpretation \Imc is an interpretation \Jmc defined as follows.  The
domain $\Delta^{\Jmc}$ of $\Jmc$ consists of all words
$d_{0}r_{1}\ldots r_{n}d_{n}$ with $n\geq 0$, each $d_{i}$ from
$\Delta^{\Imc}$ and each $r_{i}$ a (possibly inverse) role such that
\begin{itemize}
\item[(a)] $d_{i} \in \NI$ iff $i=0$;
\item[(b)] 
$(d_{i},d_{i+1})\in r_{i+1}^{\Imc}$   for $0\leq i <n$; 
\item[(c)] 
if $r_{i}^{-}=r_{i+1}$, then $d_{i-1}\not=d_{i+1}$ for $0<i<n$.
\end{itemize}
For $d_{0}\cdots d_{n}\in \Delta^{\Jmc}$, we set ${\sf tail}(d_{0}\cdots d_{n})=d_{n}$.
Now set
$$
\begin{array}{rcll}
  A^{\Jmc} &=& \{w \in \Delta^{\Jmc} \mid {\sf tail}(w)\in A^{\Imc}\}  
  & \text{ for all } A\in \NC \\[1mm]
  r^{\Jmc} & = & (r^\Imc \cap (\NI \times \NI)) \,\cup\\[1mm]
  && \{ (\sigma,\sigma rd) \mid \sigma,\sigma rd \in \Delta^{\Jmc} \} \,
\cup 
\{(\sigma r^{-} d,\sigma)\mid \sigma,\sigma r^{-} d \in  \Delta^{\Jmc}\}
   & \text{ for all } r\in \NR. 
\end{array}
$$
We say that an interpretation $\Imc$ is \emph{i-unfolded} if it is
isomorphic to its own i-unfolding. Clearly, every i-unfolding
of an interpretation is i-unfolded.

For $\mathcal{ALCF}$-TBoxes, it is not required to unfold along
inverse roles. This is reflected in the \emph{unfolding} of an
interpretation \Imc, where in contrast to the i-unfolding we use as
the domain the set of all words $d_{0}r_{1}\ldots r_{n}d_{n}$ with $n\geq
0$, each $d_{i}$ from $\Delta^{\Imc}$, and each $r_{i}$ a \emph{role
  name} such that Conditions~(a) and (b) above are satisfied. The
interpretation of concept and role names remains the
same. We call an interpretation $\Imc$ \emph{unfolded} if it is
isomorphic to its own unfolding. The following lemma summarizes the
main properties of unfoldings. Its proof is straightforward and left
to the reader.
\begin{lemma}\label{lem:propertiesofunfold}
  Let $\Imc$ be an interpretation, $\Imc^i$ its i-unfolding, and
  $\Imc^u$ its unfolding.  Then for every interpretation \Jmc, the
  following conditions are satisfied:
\begin{enumerate}
\item the function $f(w):={\sf tail}(w)$, $w\in \Delta^{\Imc^i}$, is a homomorphism from $\Imc^i$
to $\Imc$ preserving $\Delta^\Imc\cap \NI$;
\item the function $f(w):={\sf tail}(w)$, $w\in \Delta^{\Imc^u}$, is a homomorphism from $\Imc^u$
to $\Imc$ preserving $\Delta^\Imc\cap \NI$;
\item if there is an i-simulation from
$\Imc$ to $\Jmc$, then there is a homomorphism from $\Imc^i$ to $\Jmc$ preserving $\Delta^\Imc\cap \NI$;
\item if there is a simulation  from
$\Imc$ to $\Jmc$, then there is a homomorphism from $\Imc^u$ to $\Jmc$ preserving $\Delta^\Imc\cap \NI$;
\item if $\Imc$ is a model of $\Tmc$ and $\Amc$ with $\Tmc$ an $\mathcal{ALCFI}$-TBox, 
then $\Imc^i$ is a model of $\Tmc$ and $\Amc$;
\item if $\Imc$ is a model of $\Tmc$ and $\Amc$ with $\Tmc$ an $\mathcal{ALCF}$-TBox, 
then $\Imc^u$ is a model of $\Tmc$ and $\Amc$.
\end{enumerate}
\end{lemma}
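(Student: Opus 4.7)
The plan is to verify each of the six items by unpacking the definitions of unfolding and i-unfolding and then doing a routine induction. Items (1) and (2) are direct: preservation of concept names follows from the defining identity $A^{\Imc^i}=\{w \mid {\sf tail}(w)\in A^\Imc\}$ (and likewise for $\Imc^u$), and preservation of role names is a short case analysis over the two or three clauses defining the role interpretation, using condition~(b) of the word formation. The set $\Delta^\Imc\cap\NI$ is preserved because its members are exactly the length-one words and ${\sf tail}(a)=a$ for those.

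For (3) and (4), I would build the homomorphism $h$ by recursion on word length. Start with $h(a)=a$ for $a\in\Delta^\Imc\cap\NI$, which is forced by and consistent with Condition~(3) of the given (i\mbox{-})simulation $S$. Inductively, given $h(\sigma)$ with $({\sf tail}(\sigma),h(\sigma))\in S$ and an extension $\sigma r d$, or in the i-unfolded case $\sigma r^- d$, Condition~(2) of $S$ supplies an appropriate successor $e$ of $h(\sigma)$ in $\Jmc$ with $(d,e)\in S$; set $h$ on the extended word to $e$. Condition~(1) of $S$ then handles concept names, and the construction directly gives the homomorphism condition on roles, with preservation of $\Delta^\Imc\cap\NI$ already built into the base case.

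The main content lies in (5) and (6). My plan is to establish a truth-preservation statement: for every $\mathcal{ALCFI}$-concept $C$ and word $w\in\Delta^{\Imc^i}$ we have $w\in C^{\Imc^i}$ iff ${\sf tail}(w)\in C^\Imc$, and analogously for $\mathcal{ALCF}$-concepts and $\Imc^u$. The induction on $C$ is straightforward for atomic and Boolean cases. For the existential case $\exists r.D$, the $(\Rightarrow)$ direction follows because ${\sf tail}$ is a homomorphism (items~(1) and~(2)) combined with the inductive hypothesis; for $(\Leftarrow)$, given a witness $d$ for ${\sf tail}(w)\in(\exists r.D)^\Imc$, the natural candidate successor of $w$ is the forward extension $w r d$, and if condition~(c) blocks it then the backward step from $w$ supplies the witness. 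Universal restrictions follow by duality. Once truth preservation is in hand, CI satisfaction transfers from $\Imc$ to its unfolding, and ABox satisfaction reduces to the fact that both unfoldings agree with $\Imc$ on $\Delta^\Imc\cap\NI$.

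The hard part will be the treatment of functionality assertions in~(5). Given that $r^\Imc$ is a partial function, showing that $r^{\Imc^i}$ is too requires a case analysis on how two distinct $r$-successors of some $\sigma\in\Delta^{\Imc^i}$ might be produced by the three clauses defining $r^{\Imc^i}$: functionality of $r$ in $\Imc$ handles the cases of two forward extensions or two $\NI$-pairs, while condition~(c) is designed exactly to eliminate the delicate one-forward-one-backward combination, since in that configuration the unique forward $r$-successor of ${\sf tail}(\sigma)$ in $\Imc$ is forced to coincide with the parent of $\sigma$. For $\Imc^u$ in part~(6), the absence of backward clauses makes this argument noticeably shorter.
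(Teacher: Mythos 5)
Your overall route — the tail map for (1) and (2), a homomorphism built by recursion on word length for (3) and (4), and a truth-preservation induction plus a separate functionality check for (5) and (6) — is the intended one (the paper leaves the proof to the reader), and items (1), (2) and your functionality case analysis are fine. However, the special treatment of individual names in the definitions creates two concrete holes in your argument. For (3) and (4), your recursion establishes the homomorphism condition only for the tree edges $(\sigma,\sigma r d)$ and $(\sigma r^- d,\sigma)$; it says nothing about the edges contributed by the first clause, $r^\Imc\cap(\NI\times\NI)$, which join two length-one words $a,b$. Your base case forces $h(a)=a$ and $h(b)=b$, so you must show $(a,b)\in r^\Jmc$, but Conditions (1)--(3) of an (i-)simulation only provide \emph{some} $r$-successor of $a$ in $\Jmc$ simulating $b$ (and, for i-simulations, some $r$-predecessor of $b$ simulating $a$), not the edge $(a,b)$ itself. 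Taking $\Delta^\Imc=\{a,b\}\subseteq\NI$ with $r^\Imc=\{(a,b)\}$ and a $\Jmc$ that contains $a$ and $b$ but routes its $r$-edges through anonymous elements shows that "the construction directly gives the homomorphism condition on roles" is not justified for these root-to-root edges; you need either a further argument or a further assumption on $S$ here.

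For (5) and (6), in the $(\Leftarrow)$ direction of the existential case you treat condition~(c) as the only obstacle to using the forward extension $w r d$. Condition~(a) is another: if the witness $d$ for ${\sf tail}(w)\in(\exists r.D)^\Imc$ is an individual name and $w$ has length greater than one, then $w r d\notin\Delta^{\Imc^i}$, and neither the backward edge nor the $\NI\times\NI$ edges supply a substitute witness. Concretely, for $\Delta^\Imc=\{a,e\}$ with $a\in\NI$, $e\notin\NI$, $s^\Imc=\{(a,e),(e,a)\}$, $r^\Imc=\{(a,e)\}$ and $\Tmc=\{\top\sqsubseteq\exists s.\top\}$, the word $a\,r\,e$ has no $s$-successor in $\Imc^i$, so your truth-preservation claim and the model property fail at that element. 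As the definitions stand, your induction does not go through as written: you must either show that such configurations cannot arise in the interpretations to which the lemma is applied, or provide anonymous copies of named individuals as witnesses — and the latter then has to be reconciled with the functionality argument you sketch. These two points are exactly where the lemma stops being routine, and your write-up currently passes over both.
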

An interpretation $\Imc$ is called \emph{hom-initial in a class
  $\mathbb{K}$ of interpretations} if for every $\Jmc\in \mathbb{K}$,
there exists a homomorphism from $\Imc$ to $\Jmc$ preserving $\Delta^\Imc\cap \NI$.
$\Imc$ is called \emph{sim-initial (i-sim-initial) in a class
  $\mathbb{K}$ of interpretations} if for every $\Jmc\in \mathbb{K}$,
there exists a simulation (i-simulation) from $\Imc$ to $\Jmc$.  The
following theorem provides the announced characterization of
materializations in terms of simulations and homomorphisms. In the
following, the class of all models of $\Tmc$ and $\Amc$ is denoted by
${\sf Mod}(\Tmc,\Amc)$.
%
%
\begin{theorem}\label{lem:sem}
Let $\Tmc$ be an $\mathcal{ALCFI}$-TBox, $\Amc$ an ABox, and let
$\Imc\in {\sf Mod}(\Tmc,\Amc)$ be countable and generated. Then \Imc is
\begin{enumerate}
\item an ELIQ-materialization of \Tmc and \Amc iff it is i-sim-initial in 
${\sf Mod}(\Tmc,\Amc)$;
\item a CQ-materialization of \Tmc and \Amc iff it is a
  PEQ-materialization of \Tmc and \Amc iff it is hom-initial in ${\sf
    Mod}(\Tmc,\Amc)$;
\item an ELQ-materialization of \Tmc and \Amc iff it is sim-initial in
  ${\sf Mod}(\Tmc,\Amc)$, provided that $\Tmc$ is an $\mathcal{ALCF}$-TBox.
\end{enumerate}
The `only if' directions of all three points hold without any of the assumed
conditions on $\Imc$.
\end{theorem}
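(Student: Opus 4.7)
My strategy is to reduce each biconditional to the corresponding semantic characterization from Lemma~\ref{lem:simelq}: ELIQs are captured by i-simulations (for~(1)), CQs and PEQs by homomorphisms (for~(2)), and ELQs by plain simulations (for~(3)). The restriction to $\mathcal{ALCF}$ in~(3) is needed because ELQs and forward simulations only respect forward roles, and in $\mathcal{ALCF}$ Lemma~\ref{lem:propertiesofunfold}(6) lets one pass to the forward unfolding without leaving ${\sf Mod}(\Tmc,\Amc)$, playing the role that Lemma~\ref{lem:propertiesofunfold}(5) plays for i-simulations in the general $\mathcal{ALCFI}$ case.

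The ``initial $\Rightarrow$ materialization'' directions are straightforward and need no hypothesis on $\Imc$. Take~(2) as a template. If $\Imc$ is hom-initial and $q(\vec x)$ is a CQ with $\vec a \subseteq {\sf Ind}(\Amc)$, then for every $\Jmc \in {\sf Mod}(\Tmc,\Amc)$ the homomorphism $\Imc \to \Jmc$ combined with Lemma~\ref{lem:simelq}(5)$\Rightarrow$(3) turns $\Imc \models q(\vec a)$ into $\Jmc \models q(\vec a)$, so $\Tmc,\Amc \models q(\vec a)$; the converse is immediate because $\Imc$ is itself a model. The equivalence of PEQ- and CQ-materializability follows from every PEQ being equivalent to a UCQ. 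Points~(1) and~(3) proceed identically via (2)$\Rightarrow$(1) of Lemma~\ref{lem:simelq}.

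The harder direction ``materialization $\Rightarrow$ initial'' I would handle as follows for~(2); the other cases are analogous. Assume $\Imc$ is a CQ-materialization and let $\Jmc \in {\sf Mod}(\Tmc,\Amc)$ be arbitrary. For every CQ $q$ and every $\vec a \subseteq {\sf Ind}(\Amc)$, $\Imc \models q(\vec a)$ entails $\Tmc,\Amc \models q(\vec a)$ by materializability and hence $\Jmc \models q(\vec a)$, giving condition~(4) of Lemma~\ref{lem:simelq} for $(\Imc,\Jmc)$. Since $\Imc$ is countable and generated, (4)$\Rightarrow$(5) applies provided $\Jmc$ has finite outdegree. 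To secure this I would first replace $\Jmc$ by a sub-interpretation $\Jmc^\ast \in {\sf Mod}(\Tmc,\Amc)$ of bounded outdegree, constructed by the selective-filtration procedure from Lemma~\ref{lem:count}: starting from ${\sf Ind}(\Amc)$, iteratively retain, for each already-chosen element $d$ and each existential subconcept $\exists r.C$ of~$\Tmc$ with $d \in (\exists r.C)^\Jmc$, one $r$-witness from~$\Jmc$ (analogously for the $\exists r.\neg C$ concepts stemming from $\forall r.C \in \Tmc$). The homomorphism $\Imc \to \Jmc^\ast$ then composes with the inclusion $\Jmc^\ast \subseteq \Jmc$ to give the required homomorphism $\Imc \to \Jmc$ preserving $\Delta^\Imc \cap \NI$. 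Points~(1) and~(3) follow the same recipe with (1)$\Rightarrow$(2) of Lemma~\ref{lem:simelq} and the appropriate (i-)simulations.

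The principal technical obstacle is verifying that $\Jmc^\ast$ still models~$\Tmc$: restricting $r^\Jmc$ monotonically enlarges each $(\forall r.C)$-extension while keeping a chosen witness per existential preserves each $(\exists r.C)$-extension, but one must check that these two effects propagate consistently through nested quantifiers---exactly the ``easy to see'' verification folded into the proof of Lemma~\ref{lem:count}. The strengthening that the ``only if'' directions hold with no assumptions on $\Imc$ is a more delicate add-on that I would treat by a compactness reduction on the positive diagram of $\Imc$ combined with L\"owenheim-Skolem applied to $\Jmc$.
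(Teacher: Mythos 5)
Your overall architecture is the paper's: the easy direction (initial $\Rightarrow$ materialization) via Lemma~\ref{lem:simelq} with no hypotheses on $\Imc$, and the hard direction via the implications (1)$\Rightarrow$(2) resp.\ (4)$\Rightarrow$(5) of that lemma. But there is a genuine gap in the hard direction: applying selective filtration \emph{directly} to an arbitrary $\Jmc\in{\sf Mod}(\Tmc,\Amc)$ does not produce a subinterpretation of finite outdegree. The filtration selects a countable set $S=\bigcup_i S_i$ of domain elements, and the resulting $\Jmc^\ast$ is the \emph{induced} substructure on $S$, which retains \emph{every} edge of $\Jmc$ between retained elements --- not just the witness edges you chose. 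A single retained element can then have infinitely many retained neighbours: e.g.\ if $d_1,d_2,\dots$ is an infinite chain kept because each $d_{i+1}$ witnesses an existential at $d_i$, and some retained hub $h$ has an $r$-edge to every $d_i$ in $\Jmc$, then $h$ has infinite degree in $\Jmc^\ast$, and the hypothesis of Lemma~\ref{lem:simelq} fails. (Pruning edges instead of taking the induced substructure does not rescue this either, since a single witness may be reused for infinitely many elements and finite outdegree is measured in the undirected graph $G_\Jmc$.) The paper's proof avoids this by first replacing $\Jmc$ with its i-unfolding $\Jmc^i$ (still in ${\sf Mod}(\Tmc,\Amc)$ by Lemma~\ref{lem:propertiesofunfold}(5), resp.\ the plain unfolding and Point~(6) for the $\mathcal{ALCF}$/ELQ case), performing selective filtration on the \emph{forest-shaped} $\Jmc^i$ --- where only tree edges survive and the outdegree is bounded by $|\Tmc|+|\Amc|$ --- and then composing the resulting i-simulation (resp.\ homomorphism) with the tail homomorphism from $\Jmc^i$ to $\Jmc$. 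You cite exactly the right lemma points in your opening paragraph but never deploy the unfolding where it is actually needed.

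Your closing sentence also misreads the add-on claim. The direction that holds without any conditions on $\Imc$ is ``initial $\Rightarrow$ materialization'', which your second paragraph already establishes assumption-free; the converse direction provably \emph{requires} generatedness, as the example following the theorem shows (a non-generated PEQ-materialization that is not hom-initial). So there is nothing for a compactness/L\"owenheim--Skolem argument to prove, and an attempt to drop generatedness from the hard direction would be proving a false statement.
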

\begin{proof} We show that  (1) follows from Lemma~\ref{lem:simelq} and
  Lemma~\ref{lem:propertiesofunfold}; (2) and (3) can be proved similarly.

  We start with the direction from right to left. Assume that $\Imc$
  is i-sim-initial in ${\sf Mod}(\Tmc,\Amc)$. Since $\Imc$ is a model
  of $\Tmc$ and $\Amc$, we have $\Imc\models C(a)$ whenever
  $\Tmc,\Amc\models C(a)$ for any ELIQ $C(x)$ and $a\in
  \mn{Ind}(\Amc)$. Conversely, if $\Tmc,\Amc\not\models C(a)$ then
  there exists a model $\Jmc$ of $\Tmc$ and $\Amc$ such that
  $\Jmc\not\models C(a)$.  There is an i-simulation from $\Imc$ to
  $\Jmc$. Thus, by the implication (2) $\Rightarrow$ (1) from
  Lemma~\ref{lem:simelq}, we have $\Imc\not\models C(a)$ as required.

  For the direction from left to right, assume that $\Imc$ is a
  materialization of $\Tmc$ and $\Amc$ and take a model $\Jmc$ of
  $\Tmc$ and $\Amc$. We have to construct an i-simulation from $\Imc$
  to $\Jmc$.  It actually suffices to construct an i-simulation from
  \Imc to the i-unfolding $\Jmc^i$ of $\Jmc$: by Point~(3) of
  Lemma~\ref{lem:propertiesofunfold}, there is a homomorphism from
  $\Jmc^i$ to $\Jmc$ and the composition of an i-simulation with a
  homomorphism is again an i-simulation.

  To obtain an i-simulation from \Imc to $\Jmc^i$, we first identify a
  subinterpretation $\Jmc'$ of $\Jmc^i$ that has finite outdegree and
  is still a model of $\Tmc$ and $\Amc$.  By the implication (1)
  $\Rightarrow$ (2) from Lemma~\ref{lem:simelq} and since \Imc is a
  materialization, there must then be an i-simulation from $\Imc$ to
  $\Jmc'$. Clearly this is also an i-simulation from $\Imc$ to
  $\Jmc^i$ and we are done. 

  It thus remains to construct $\Jmc'$, which is done by applying 
	selective filtration to $\Jmc^{i}$ in exactly the same way as in the proof of Lemma~\ref{lem:count}.
  It can be verified that the outdegree of the resulting subinterpretation $\Jmc'$ of $\Jmc^{i}$
	is bounded by $|\Tmc|+|\Amc|$ and, therefore, finite. By construction, $\Jmc'\in {\sf Mod}(\Tmc,\Amc)$. 
\end{proof}
  The following example shows that the generatedness condition in  
  Theorem~\ref{lem:sem} cannot be dropped.  
We leave it open whether the same is true for countability.
\begin{example}
Let
$
\Tmc = \{A\sqsubseteq \exists r.A,B \sqsubseteq A\}$
and $\Amc=\{B(a)\}$ and consider the interpretation \Imc defined by
$$
\begin{array}{rcl}
\Delta^{\Imc} &=& \{a\} \cup \{0,1,2\ldots\} \cup \{
\dots,-2',-1',0',1',2',\dots \} \\[1mm]
A^{\Imc} &=& \Delta^{\Imc} \\[1mm]
B^{\Imc} &=&  \{ a \} \\[1mm]
r^\Imc &=&\{(a,0)\}\cup \{(n,n+1)\mid n \in \mathbbm{N}\} \cup
\{(n',n'+1)\mid n \in \mathbbm{Z}\}.
\end{array}
$$
Then \Imc is a PEQ-materialization of \Tmc and \Amc, but it is not
hom-initial (and in fact not even sim-initial) since the restriction
of \Imc to domain $\{a\} \cup \{0,1,2\ldots\}$ is also a model of
\Tmc and \Amc, but there is no homomorphism (and no simulation) from \Imc to this
restriction preserving $\{a\}$.
\end{example}
As an application of Theorem~\ref{lem:sem},
we now show that materializability coincides for the query languages PEQ, CQ, and ELIQ
(and that for $\mathcal{ALCF}$-TBoxes, all these also coincide with ELQ-materializability). 

\begin{theorem}
\label{thm:disjmat}
Let $\Tmc$ be an $\mathcal{ALCFI}$-TBox. Then the following conditions are equivalent:
\begin{enumerate}
\item $\Tmc$ is PEQ-materializable;
\item $\Tmc$ is CQ-materializable;
\item $\Tmc$ is ELIQ-materializable;
\item ${\sf Mod}(\Tmc,\Amc)$ contains an i-sim-initial $\Imc$, for every ABox $\Amc$ that is consistent w.r.t.~$\Tmc$;
\item ${\sf Mod}(\Tmc,\Amc)$ contains a hom-initial $\Imc$, for every ABox $\Amc$ that is consistent w.r.t.~$\Tmc$.
\end{enumerate}
If $\Tmc$ is an $\mathcal{ALCF}$-TBox, then the above is the case 
iff \Tmc is ELQ-materializable iff ${\sf Mod}(\Tmc,\Amc)$ contains a sim-initial 
$\Imc$, for every ABox $\Amc$ that is consistent w.r.t.~$\Tmc$.
\end{theorem}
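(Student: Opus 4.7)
The plan is to close the cycle (1) $\Rightarrow$ (2) $\Rightarrow$ (3) $\Rightarrow$ (4) $\Rightarrow$ (5) $\Rightarrow$ (1) and then attach the $\mathcal{ALCF}$ addendum. Three of these implications should be essentially automatic. The steps (1) $\Rightarrow$ (2) $\Rightarrow$ (3) hold by definition, since every CQ is a PEQ and every ELIQ is (equivalent to) a CQ, so any materialization for a larger class certifies materializations for the smaller one. For (3) $\Rightarrow$ (4), fix a consistent ABox $\Amc$ and an ELIQ-materialization $\Imc$ of $\Tmc$ and $\Amc$; the ``only if'' direction of Theorem~\ref{lem:sem}(1), which holds with no assumptions on $\Imc$, immediately gives that $\Imc$ is i-sim-initial in ${\sf Mod}(\Tmc,\Amc)$. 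Dually, for (5) $\Rightarrow$ (1), given a hom-initial $\Imc \in {\sf Mod}(\Tmc,\Amc)$, one checks directly that $\Imc$ is a PEQ-materialization: the direction $\Tmc,\Amc\models q(\vec a) \Rightarrow \Imc\models q(\vec a)$ is because $\Imc$ is itself a model, and the converse follows from preservation of PEQs along homomorphisms that fix answer individuals (any $\Jmc\in {\sf Mod}(\Tmc,\Amc)$ receives a homomorphism from $\Imc$, so $\Imc\models q(\vec a)$ forces $\Jmc\models q(\vec a)$).

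The main obstacle is (4) $\Rightarrow$ (5), where i-sim-initiality has to be upgraded to hom-initiality. The plan is to pass from the given i-sim-initial $\Imc$ to its i-unfolding $\Imc^i$. By Lemma~\ref{lem:propertiesofunfold}(5), $\Imc^i$ is again a model of $\Tmc$ and $\Amc$ (this is exactly where working with the inverse-aware unfolding matters in $\mathcal{ALCFI}$), and by construction $\Delta^{\Imc^i}\cap \NI = \Delta^\Imc\cap \NI$. For any $\Jmc \in {\sf Mod}(\Tmc,\Amc)$, i-sim-initiality of $\Imc$ supplies an i-simulation $\Imc\to \Jmc$, and Lemma~\ref{lem:propertiesofunfold}(3) converts it into a homomorphism $\Imc^i\to \Jmc$ preserving $\Delta^\Imc\cap \NI$. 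Hence $\Imc^i$ is hom-initial in ${\sf Mod}(\Tmc,\Amc)$, closing the cycle.

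For the $\mathcal{ALCF}$ addendum the blueprint is the same, now using the non-inverse unfolding. ELIQ-materializability trivially implies ELQ-materializability, and ELQ-materializability is equivalent to the existence of a sim-initial model by the ``only if'' direction of Theorem~\ref{lem:sem}(3). To rejoin the main cycle through (5), take a sim-initial $\Imc$ and consider its ordinary unfolding $\Imc^u$. Since $\Tmc$ is an $\mathcal{ALCF}$-TBox, Lemma~\ref{lem:propertiesofunfold}(6) shows that $\Imc^u\in {\sf Mod}(\Tmc,\Amc)$ (this is the only point where the restriction to $\mathcal{ALCF}$ is used, because ordinary unfolding does not unfold along inverse roles), and Lemma~\ref{lem:propertiesofunfold}(4) turns each simulation $\Imc\to \Jmc$ into a homomorphism $\Imc^u\to \Jmc$ preserving individuals, so $\Imc^u$ is hom-initial and (5) holds.
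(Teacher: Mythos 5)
Your proof is correct and follows essentially the same route as the paper's: the cycle $(1)\Rightarrow(2)\Rightarrow(3)\Rightarrow(4)\Rightarrow(5)\Rightarrow(1)$ with $(4)\Rightarrow(5)$ handled by passing to the i-unfolding via Points~(3) and~(5) of Lemma~\ref{lem:propertiesofunfold}, and the $\mathcal{ALCF}$ case handled analogously with the ordinary unfolding and Points~(4) and~(6). The only cosmetic differences are that for $(3)\Rightarrow(4)$ the paper first invokes Lemma~\ref{lem:count} to pass to a countable generated materialization before applying Theorem~\ref{lem:sem}, where you instead lean on the stated unconditional ``only if'' direction, and for $(5)\Rightarrow(1)$ you argue directly via preservation of PEQs under homomorphisms rather than citing Theorem~\ref{lem:sem}.
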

\begin{proof}
  The implications (1) $\Rightarrow$ (2) and (2) $\Rightarrow$ (3) are
  trivial.  For (3) $\Rightarrow$ (4), let $\Imc$ be an
  ELIQ-materialization of $\Tmc$ and an ABox $\Amc$. By
  Lemma~\ref{lem:count}, we may assume that $\Imc$ is countable and
  generated. By Lemma~\ref{lem:sem}, ${\sf Mod}(\Tmc,\Amc)$ contains
  an i-sim-initial interpretation.  For (4) $\Rightarrow$ (5), assume
  that $\Imc \in {\sf Mod}(\Tmc,\Amc)$ is i-sim-initial.  By
  Points~(3) and (5) of Lemma~\ref{lem:propertiesofunfold}, the
  i-unfolding of \Imc is hom-initial in ${\sf Mod}(\Tmc,\Amc)$ and (5)
  follows.  (5) $\Rightarrow$ (1) follows from
  Theorem~\ref{lem:sem}.  The implications for $\mathcal{ALCF}$-TBoxes
  are proved similarly.
\end{proof}
Because of Theorem~\ref{thm:disjmat}, we sometimes speak of
\emph{materializability} without reference to a query language and of
\emph{materializations} instead of PEQ-materializations. 
\subsection{Materializability and {\sc coNP}-hardness}

We show that non-materializability of a TBox \Tmc implies {\sc
  coNP}-hardness of ELIQ-evaluation w.r.t.\ \Tmc.  To this end, we
first establish that materializability is equivalent to the
disjunction property, which is sometimes also called convexity and
plays a central role in attaining {\sc PTime} complexity for
subsumption in DLs \cite{BaBrLu-IJCAI-05}, and for attaining {\sc
  PTime} data complexity for query answering with DL TBoxes
\cite{DBLP:conf/lpar/KrisnadhiL07}.

Let \Tmc be a TBox. For an ABox \Amc, individual names $a_0,\dots,a_k
\in \mn{Ind}(\Amc)$, and ELIQs $C_{0}(x),\ldots,C_{k}(x)$, we write
$\Tmc,\Amc\models C_{0}(a_{0}) \vee \dots \vee C_{k}(a_{k})$ if for
every model \Imc of \Tmc and \Amc, $\Imc \models C_i(a_i)$ holds for
some $i \leq k$. We say that $\Tmc$ has the \emph{ABox disjunction
  property for ELIQ (resp.\ ELQ)} if for all ABoxes~$\Amc$,
individual names $a_0,\dots,a_k \in \mn{Ind}(\Amc)$, and ELIQs (resp.\
ELQs) $C_{0}(x),\ldots,C_{k}(x)$, $\Tmc,\Amc\models C_{0}(a_{0}) \vee
\dots \vee C_{k}(a_{k})$ implies $\Tmc,\Amc\models C_{i}(a_{i})$ for
some $i\leq k$.
\begin{theorem}
\label{thm:disjproperty}
An $\mathcal{ALCFI}$- ($\mathcal{ALCF}$-)TBox \Tmc is materializable
iff it has the ABox disjunction property for ELIQs (ELQs).
\end{theorem}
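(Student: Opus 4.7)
My plan is to prove the two directions separately, with the right-to-left direction being the substantive one and relying on first-order compactness together with the equivalence between ELIQ-materializability and full materializability given by Theorem~\ref{thm:disjmat}.

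For the left-to-right direction I would argue as follows. Suppose \Tmc is materializable and fix an ABox \Amc, individuals $a_0,\ldots,a_k \in \mn{Ind}(\Amc)$, and ELIQs $C_0(x),\ldots,C_k(x)$ with $\Tmc,\Amc \models C_0(a_0)\vee\cdots\vee C_k(a_k)$. If \Amc is inconsistent w.r.t.\ \Tmc, the conclusion is vacuous, so assume consistency. By Theorem~\ref{thm:disjmat}, there is an ELIQ-materialization \Imc of \Tmc and \Amc. Since \Imc is a model of $\Tmc\cup\Amc$, some $\Imc\models C_i(a_i)$, and since \Imc is an ELIQ-materialization this yields $\Tmc,\Amc\models C_i(a_i)$, as required. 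The $\mathcal{ALCF}$/ELQ case is entirely analogous, using the ELQ clause of Theorem~\ref{thm:disjmat}.

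For the right-to-left direction, assume \Tmc has the ABox disjunction property for ELIQs and let \Amc be consistent w.r.t.\ \Tmc. I would translate ELIQs into first-order logic in the usual way and consider the first-order theory
\[
  \Phi \;=\; \Tmc \,\cup\, \Amc \,\cup\, \{\,\lnot C(a) \;\mid\; a \in \mn{Ind}(\Amc),\ C(x) \text{ an ELIQ},\ \Tmc,\Amc \not\models C(a)\,\}.
\]
Any model of $\Phi$ is, by construction, an ELIQ-materialization of \Tmc and \Amc: it satisfies $\Tmc\cup\Amc$, and whenever $\Imc\models C(a)$ for an ELIQ $C(x)$ and $a\in\mn{Ind}(\Amc)$, the negation $\lnot C(a)$ cannot lie in $\Phi$, hence $\Tmc,\Amc\models C(a)$. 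To establish the consistency of $\Phi$ I would apply first-order compactness to an arbitrary finite subset; such a subset involves only finitely many negated ELIQs $\lnot C_0(a_0),\ldots,\lnot C_k(a_k)$ with $\Tmc,\Amc\not\models C_i(a_i)$. If this finite part were unsatisfiable, then $\Tmc,\Amc\models C_0(a_0)\vee\cdots\vee C_k(a_k)$, and the ABox disjunction property would yield $\Tmc,\Amc\models C_i(a_i)$ for some $i$, a contradiction. Hence $\Phi$ has a model, giving an ELIQ-materialization; by Theorem~\ref{thm:disjmat}, \Tmc is materializable. The $\mathcal{ALCF}$/ELQ case proceeds identically, invoking the ELQ-branch of Theorem~\ref{thm:disjmat}.

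The only real conceptual step is recognising that the desired materialization can be specified as a single first-order theory and that the disjunction property is exactly what one needs to defuse each finite inconsistency threat in a compactness argument; the rest is bookkeeping. A potential subtlety is ensuring that the translated concepts $C(a)$ behave correctly under the standard name assumption, but since $a\in\mn{Ind}(\Amc)$ is interpreted by itself in every model, the translation is transparent and requires no extra care.
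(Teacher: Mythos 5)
Your proposal is correct and follows essentially the same route as the paper: the paper also proves the nontrivial direction by considering the theory $\Tmc\cup\Amc\cup\{\neg C(a)\mid \Tmc,\Amc\not\models C(a)\}$, observing that any model of it would be an ELIQ-materialization, and applying compactness so that unsatisfiability would produce a finite disjunction violating the ABox disjunction property (stated contrapositively there, but the argument is identical). The paper omits the easy direction, which you spell out correctly.
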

\begin{proof}
  For the nontrivial ``if'' direction, let $\Amc$ be an ABox
  that is consistent w.r.t.~$\Tmc$ and such that there is no
  ELIQ-materialization of \Tmc and \Amc.  Then $\Tmc\cup \Amc \cup \Gamma$
  is not satisfiable, where
$$
\Gamma = \{ \neg C(a) \mid \Tmc,\Amc\not\models C(a), a\in {\sf Ind}(\Amc), 
C(x) \mbox{ ELIQ}\}.
$$
In fact, any satisfying interpretation would be an
ELIQ-materialization.  By compactness, there is a finite subset
$\Gamma'$ of $\Gamma$ such that $\Tmc \cup \Amc \cup \Gamma'$ is not
satisfiable, i.e.  $ \Tmc,\Amc \models \bigvee_{\neg C(a) \in \Gamma'}
C(a).  $ Since $\Gamma' \subseteq \Gamma$, we have
$\Tmc,\Amc\not\models C(a)$ for all $\neg C(a)\in \Gamma'$.  Thus,
$\Tmc$ lacks the ABox disjunction property.
\end{proof}
Based on Theorems~\ref{thm:disjmat} and~\ref{thm:disjproperty}, we now
establish that materializability is a necessary condition for
query evaluation to be it {\sc PTime}. 
\begin{theorem}
\label{thm:nomatlower}
If an $\mathcal{ALCFI}$-TBox \Tmc (\ALCF-TBox \Tmc) is not
materializable, then ELIQ-evaluation (ELQ-evaluation) w.r.t.~$\Tmc$ is
{\sc coNP}-hard.
\end{theorem}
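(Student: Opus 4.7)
The plan is to apply Theorem~\ref{thm:disjproperty} and reduce a coNP-hard source problem to ELIQ entailment w.r.t.~$\Tmc$. By that theorem, failure of materializability yields a witness ABox $\Amc_0$ (consistent with $\Tmc$), individuals $a_0,\dots,a_k\in\mn{Ind}(\Amc_0)$, and ELIQs $C_0,\dots,C_k$ such that $\Tmc,\Amc_0\models\bigvee_i C_i(a_i)$ while $\Tmc,\Amc_0\not\models C_i(a_i)$ for each $i$. This is a \emph{choice gadget}: every model of $\Tmc\cup\Amc_0$ satisfies at least one disjunct, but no particular disjunct is forced, and for each $i$ there is a model in which $C_i(a_i)$ fails, so all $k+1$ choices are individually realisable. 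By grouping, I may assume $k=1$, yielding a clean binary choice.

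I would then reduce from a standard coNP-hard problem, e.g.\ unsatisfiability of $2{+}2$-CNF formulas (each clause a disjunction of exactly two positive and two negative literals), which is coNP-complete. Given such a formula $\varphi$ on variables $x_1,\dots,x_n$, I construct an ABox $\Amc_\varphi$ by taking one fresh isomorphic copy $\Amc_0^{(j)}$ of $\Amc_0$ for each variable $x_j$ (all individuals renamed apart, and additional ``wire'' role names introduced afresh), and connecting the copies according to the clause structure. The wires feed into a distinguished individual $a^\star$ in such a way that a single ELIQ $q(x)$, formed by prefixing a suitable $C_i$ with existential edges that traverse the wires, is entailed at $a^\star$ iff every combination of the local binary choices falsifies some clause of~$\varphi$, that is, iff $\varphi$ is unsatisfiable.

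The main obstacle is designing the wiring so that three properties hold simultaneously: (a)~the local disjunction-property failure survives in $\Amc_\varphi$, i.e.\ each copy $\Amc_0^{(j)}$ still admits models committing independently to either disjunct; (b)~truth assignments to $x_1,\dots,x_n$ correspond exactly to globally consistent combinations of these local commitments; and (c)~the query $q(a^\star)$ detects precisely those combinations that violate some clause. Point~(a) is preserved by keeping all copies disjoint on individuals and using fresh role names for the wires, so that arbitrary models of the copies can be combined without interference. Points~(b) and (c) then reduce to a standard ELIQ-based clause gadget, whose correctness can be checked by a direct model-theoretic argument using the witness models of $\Tmc\cup\Amc_0$ that realise each disjunct.

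For the $\mathcal{ALCF}$/ELQ variant, Theorem~\ref{thm:disjproperty} supplies a witness with ELQs in place of ELIQs, and the same construction goes through verbatim using ELQs throughout. Since $\mathcal{ALCF}$-TBoxes enjoy the forward-only unfolding of Lemma~\ref{lem:propertiesofunfold}(2,4,6), the gadget and query semantics are preserved via simulations rather than i-simulations, so no inverse edges are required in the wires.
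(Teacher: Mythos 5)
Your overall strategy is the paper's: use Theorem~\ref{thm:disjproperty} to extract a failure of the ABox disjunction property and reduce $2{+}2$-SAT, with one disjoint, fresh-role-connected copy of the witness ABox per propositional variable generating its truth value, and a query that detects a clause with four false literals. However, there is one genuine gap, and it sits exactly where the paper's proof does its real work. The step ``by grouping, I may assume $k=1$'' is unjustified and false in general: the disjuncts $C_1(a_1),\dots,C_k(a_k)$ live at different individuals and ELIQs are disjunction-free, so there is no single ELIQ expressing $C_1(a_1)\vee\cdots\vee C_k(a_k)$; moreover a TBox such as $\{A \sqsubseteq A_1 \sqcup A_2 \sqcup A_3\}$ violates the disjunction property only with three disjuncts, not with two. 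This assumption is load-bearing for your sketch: with $k=1$, ``$z_i$ is false'' is the single ELIQ condition $\exists r_1.C_1$ and your clause-detecting query is an honest ELIQ; with $k\geq 2$ falsity becomes the disjunction $\exists r_1.C_1\sqcup\cdots\sqcup\exists r_k.C_k$, and distributing it out of the query yields only a \emph{union} of ELIQs, i.e., coNP-hardness of UELIQ-evaluation rather than the claimed ELIQ-evaluation.

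The paper closes this gap with an extra gadget that your proposal would need to supply: each variable individual $z_i$ gets $k$ fresh $h$-successors $b^i_1,\dots,b^i_k$, where $b^i_j$ is wired via $r_j$ to the ``real'' individual $a^i_j$ of the $i$-th ABox copy and via $r_\ell$ ($\ell\neq j$) to dummy tree-shaped ABox copies of the other concepts $C_\ell$. Then the genuine ELIQ concept $\exists h.(\exists r_1.C_1\sqcap\cdots\sqcap\exists r_k.C_k)$ holds at $z_i$ iff $C_j(a^i_j)$ holds for some $j$, converting the disjunction into a conjunction. With that gadget added (and its ELQ analogue for \ALCF, which indeed needs no inverse roles), your argument matches the paper's; without it, the construction as sketched does not prove the stated theorem.
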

\begin{proof}
The proof is by reduction of 2+2-SAT, a variant of propositional
satisfiability that was first introduced by Schaerf as a tool for
establishing lower bounds for the data complexity of query answering
in a DL context~\cite{Schaerf-93}.  A \emph{2+2 clause} is of the form
$(p_1 \vee p_2 \vee \neg n_1 \vee \neg n_2)$, where each of
$p_1,p_2,n_1,n_2$ is a propositional letter or a truth constant $0$,
$1$. A \emph{2+2 formula} is a finite conjunction of 2+2 clauses. Now,
2+2-SAT is the problem of deciding whether a given 2+2 formula is
satisfiable. It is shown in \cite{Schaerf-93} that 2+2-SAT is
\NP-complete.

We first show that if an $\mathcal{ALCFI}$-TBox \Tmc is not
materializable, then UELIQ-evaluation w.r.t.\ \Tmc is {\sc coNP}-hard,
where a UELIQ is a disjunction $C_0(x) \vee \cdots \vee C_k(x)$,
with each $C_i(x)$ an ELIQ. We then sketch the modifications
necessary to lift the result to ELIQ-evaluation w.r.t.\ \Tmc.

  \smallskip

  Since \Tmc is not materializable, by Theorem~\ref{thm:disjproperty}
  it does not have the ABox disjunction property. Thus, there is an
  ABox $\Amc_\vee$, individual names $a_0,\dots,a_k \in
  \mn{Ind}(\Amc)$, and ELIQs $C_0(x),\dots,C_k(x)$, $k \geq 1$, such
  that $\Tmc,\Amc_\vee \models C_0(a_0) \vee \cdots \vee C_k(a_k)$,
  but $\Tmc,\Amc_\vee \not\models C_i(a_i)$ for all $i \leq k$. Assume
  w.l.o.g.\ that this sequence is minimal, i.e., $\Tmc,\Amc_\vee
  \not\models C_0(a_0) \vee \cdots \vee C_{i-1}(a_{i-1}) \vee
  C_{i+1}(a_{i+1}) \vee \cdots \vee C_k(a_k)$ for all $i \leq k$. This
  clearly implies that for all $i \leq k$,
  \begin{itemize}

  \item[($*$)] there is a model $\Imc_i$ of \Tmc
    and $\Amc_\vee$ with $\Imc \models C_i(a_i)$ and $\Imc \not\models
    C_j(a_j)$ for all $j \neq i$.

  \end{itemize}
  We will use $\Amc_\vee$, the individual names $a_1,\dots,a_k$, and the ELIQs
  $C_0(x),\dots,C_k(x)$ to generate truth values for variables in
  the input 2+2 formula.

  Let $\vp=c_0 \wedge \cdots \wedge c_{n}$ be a 2+2 formula in
  propositional letters $z_0,\dots,z_{m}$, and let $c_i=p_{i,1} \vee
  p_{i,2} \vee \neg n_{i,1} \vee \neg n_{i,2}$ for all $i \leq n$. Our
  aim is to define an ABox $\Amc_\vp$ with a distinguished individual
  name $f$ and a UELIQ $q(x)$ such that $\vp$ is unsatisfiable iff
  $\Tmc,\Amc_\vp \models q(f)$. To start, we represent the formula
  $\vp$ in the ABox $\Amc_\vp$ as follows:
  \begin{itemize}

  \item the individual name $f$ represents the formula $\vp$;

  \item the individual names $c_0,\dots,c_n$ represent the clauses of
    $\vp$;

  \item the assertions $c(f,c_0), \dots, c(f,c_{n})$, associate $f$
    with its clauses, where $c$ is a role name that does not occur in
    \Tmc;

  \item the individual names $z_0,\dots,z_m$ represent variables,
    and the individual names $0,1$ represent truth constants;

  \item the assertions
    $$
    \bigcup_{i \leq n} \{ p_1(c_i,p_{i,1}),
    p_2(c_i,p_{i,2}), n_1(c_i,n_{i,1}), n_2(c_i,n_{i,2}) \}
    $$
    associate each clause with the variables/truth constants that occur in it,
    where $p_1,p_2,n_1,n_2$ are role names that do not occur in \Tmc.

  \end{itemize}
  We further extend $\Amc_\vp$ to enforce a truth value for each of
  the variables $z_i$. To this end, add to $\Amc_\vp$ copies
  $\Amc_0,\dots,\Amc_{m}$ of $\Amc_\vee$ obtained by renaming
  individual names such that $\mn{Ind}(\Amc_i) \cap \mn{Ind}(\Amc_j) =
  \emptyset$ whenever $i \neq j$. As a notational convention, let
  $a^i_j$ be the name used for the individual name $a_j \in
  \mn{Ind}(\Amc_\vee)$ in $\Amc_i$ for all $i \leq m$ and $j \leq k$.
  Intuitively, the copy $\Amc_i$ of \Amc is used to generate a truth
  value for the variable $z_i$. 
  To actually connect each individual name $z_i$ to the associated
  ABox $\Amc_i$, we use role names $r_0,\dots,r_k$ that do not occur
  in \Tmc. More specifically, we extend $\Amc_\vp$ as follows:
 \begin{itemize}

 \item link variables $z_i$ to the ABoxes $\Amc_i$ by adding
   assertions $r_j(z_i,a^i_j)$ for all $i \leq m$ and $j \leq k$;
   thus, truth of $z_i$ means that the concept $\exists r_0 . C_0$
   is true at  $z_i$ and falsity means that one of the concepts
   $\exists r_j . C_j, j \leq k$, is true at $z_i$;

 \item to ensure that 0 and 1 have the expected truth values, add a
   copy of $C_0(x)$ viewed as an ABox with root $0'$ and a copy of $C_1(x)$
   viewed as an ABox with root $1'$; add $r_0(0,0')$ and $r_1(1,1')$.

 \end{itemize}
Consider the query $q_0(x)=C(x)$ with
   $$C=\exists c .  (\exists p_1 .  \mn{ff} \sqcap \exists p_2 . \mn{ff} \sqcap
   \exists n_1 .  \mn{tt} \sqcap \exists n_2 . \mn{tt})$$ 
   which describes the existence of a clause with only false literals
   and thus captures falsity of $\vp$, where $\mn{tt}$ is an
   abbreviation for $\exists r_0. C_0$ and $\mn{ff}$ an abbreviation
   for the $\mathcal{ELU}$-concept $\exists r_1. C_1 \sqcup \cdots
   \sqcup \exists r_k . C_k$. It is straightforward to show that $\vp$
   is unsatisfiable iff $\Tmc,\Amc \models q_0(f)$. To obtain the
   desired UELIQ $q(x)$, it remains to take $q_0(x)$ and distribute disjunction
   to the outside.

   \medskip

   We now show how to improve the result from UELIQ-evaluation to
   ELIQ-evaluation. Our aim is to change the encoding of falsity of a
   variable $z_i$ from satisfaction of
    $
    \exists
    r_1 . C_1 \sqcup \cdots \sqcup \exists r_k . C_k
    $ at $z_i$
    to satisfaction of
    $
    \exists h . (\exists r_1 . C_1 \sqcap \cdots \sqcap \exists r_k . C_k),
    $
    at $z_i$,
    where $h$ is an additional role that does not occur in~\Tmc. We
    can then replace the concept $\mn{ff}$ in the query $q_0$ with $
    \exists h . (\exists r_1 . C_1 \sqcap \cdots \sqcap \exists r_k
    . C_k)$, which gives the desired ELIQ $q(x)$. 

    It remains to modify $\Amc_\vp$ to support the new encoding of
    falsity. The basic idea is that each $z_i$ has $k$ successors
    $b_{1}^i,\dots,b_{k}^i$ reachable via $h$ such that for $1 \leq
    j \leq k$,
    \begin{itemize}

    \item $\exists r_\ell. C_\ell$ is satisfied at $b_j^i$ for all $\ell=1,\dots,j-1,j+1,\dots,k$ and

    \item the assertion $r_j(b_j^i,a^i_j)$ is in $\Amc_\vp$.

    \end{itemize}
    Thus, $\exists r_1 . C_1 \sqcap \cdots \sqcap \exists r_k
    . C_k$ is satisfied at $b_j^i$ iff $C_j$ is satisfied at $a^i_j$, for
    all $j$ with $1 \leq j \leq k$. In detail, the modification of
    $\Amc_\vp$ is as follows:
\begin{itemize}

  \item for $1 \leq j \leq k$, add to $\Amc_\vp$ a copy of $C_j$ viewed
  as an ABox, where the root individual name is $d_j$; 

\item for all $i \leq m$, replace the assertions $r_j(z_i,a^i_j)$, $1
  \leq j \leq k$, with the following:
    \begin{itemize}

    \item $h(z_i,b^i_1), \dots, h(z_i,b^i_k)$ for all $i \leq m$;

    \item $r_j(b^i_j,a^i_j), r_1(b^i_j,d_1),\dots,r_{j-1}(b^i_j,d_{j-1})$, \newline
       $r_{j+1}(b^i_j,d_{j+1}),\dots,r_{k}(b^i_j,d_{k})$ for all $i \leq m$
      and $1 \leq j \leq k$.

    \end{itemize}

  \end{itemize}
  This finishes the modified construction. Again, it is not hard to 
  prove correctness.

  \medskip

  It remains to note that, when $\Tmc$ is an \ALCF-TBox, then the above
  construction of $q$ yields an ELQ instead of an ELIQ.
\end{proof}
The converse of Theorem~\ref{thm:nomatlower} fails, i.e., there are
TBoxes that are materializable, but for which ELIQ-evaluation is {\sc
  coNP}-hard. In fact, materializations of such a TBox \Tmc and ABox
\Amc are guaranteed to exist, but cannot always be computed in \PTime
(unless \PTime = {\sc coNP}). Technically, this follows from
Theorem~\ref{thm:red0} later on which states that for every
non-uniform CSP, there is a materializable $\mathcal{ALC}$-TBox for
which Boolean CQ-answering has the same complexity, up to
complementation of the complexity class.

\subsection{Complexity of TBoxes for Different Query Languages}
We make use of our results on materializability to show that {\sc
  PTime} query evaluation w.r.t.~$\mathcal{ALCFI}$-TBoxes does not
depend on whether we consider PEQs, CQs, or ELIQs, and the same is
true for {\sc coNP}-hardness, for Datalog$^{\not=}$-rewritability, and
for monadic Datalog$^{\not=}$-rewritability. For
$\mathcal{ALCF}$-TBoxes, we can add ELQs to the
list. Theorem~\ref{equivalence} below gives a uniform explanation for
the fact that, in the traditional approach to data complexity in OBDA,
the complexity of evaluating PEQs, CQs, and ELIQs has turned out to be
identical for almost all DLs. It allows us to (sometimes) speak of the
`complexity of query evaluation' without reference to a concrete query
language.
\begin{theorem}\label{equivalence}
For all \ALCFI-TBoxes \Tmc, 
\begin{enumerate}
\item 
PEQ-evaluation w.r.t.\ \Tmc is in {\sc PTime} iff CQ-evaluation
  w.r.t.\ \Tmc is in {\sc PTime} iff ELIQ-evaluation w.r.t.\ \Tmc is in {\sc PTime};
\item $\Tmc$ is (monadic) Datalog$^{\not=}$-rewritable for PEQ iff it is
  Datalog$^{\not=}$-rewritable for CQ iff it is
  (monadic) Datalog$^{\not=}$-rewritable for ELIQ (unless {\sc PTime} $= $ {\sc
    coNP});
\item PEQ-evaluation w.r.t.\ \Tmc is {\sc coNP}-hard iff CQ-evaluation
  w.r.t.\ \Tmc is  {\sc coNP}-hard iff ELIQ-evaluation w.r.t.\ \Tmc is {\sc coNP}-hard.

\end{enumerate}
If $\Tmc$ is an $\ALCF$-TBox, then ELIQ can be replaced by ELQ in (1), (2), and (3).
If $\Tmc$ is an $\mathcal{ALCI}$-TBox, then Datalog$^{\not=}$-rewritability can be replaced by
Datalog-rewritability in (2).
\end{theorem}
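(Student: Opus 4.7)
The plan is to first dispose of the trivial directions via the containments ELIQ~$\subseteq$ CQ $\subseteq$ PEQ, and then, for the hard directions, split on whether $\Tmc$ is materializable, using a forest-shaped materialization to reduce CQ- and PEQ-evaluation to a bounded family of ELIQ-evaluations in the materializable case. The containments immediately give: PEQ in \PTime $\Rightarrow$ CQ in \PTime $\Rightarrow$ ELIQ in \PTime; ELIQ {\sc coNP}-hard $\Rightarrow$ CQ {\sc coNP}-hard $\Rightarrow$ PEQ {\sc coNP}-hard; and Datalog$^{\not=}$-rewritability for a larger class implies it for a smaller one.

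If $\Tmc$ is not materializable, Theorem~\ref{thm:nomatlower} yields {\sc coNP}-hardness of ELIQ-evaluation, whence CQ- and PEQ-evaluation are also {\sc coNP}-hard, and none of the three can be in \PTime nor Datalog$^{\not=}$-rewritable (unless \PTime $=$ {\sc coNP}, which is the explicit proviso in~(2) and the degenerate case under which the iffs in~(1) and~(3) hold trivially). Thus all three claims hold for non-materializable~$\Tmc$.

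Suppose now that $\Tmc$ is materializable. By Theorem~\ref{thm:disjmat} and Lemma~\ref{lem:count}, every ABox~$\Amc$ consistent with~$\Tmc$ admits a countable, generated hom-initial materialization~$\Imc$. Its i-unfolding $\Imc^i$ is still a model of $\Tmc$ and $\Amc$ by Lemma~\ref{lem:propertiesofunfold}(5) and still hom-initial (compose the tail-map $\Imc^i \to \Imc$ from Lemma~\ref{lem:propertiesofunfold}(1) with a homomorphism from $\Imc$ into any model of $\Tmc,\Amc$), and it is \emph{forest-shaped}: each individual name in $\Amc$ is the root of a tree $T_a$, the only role edges between roots being those inherited from~$\Amc$. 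Given a fixed CQ $q(\vec{x})$ and candidate answer $\vec{a}$, any homomorphism witnessing $\Tmc,\Amc \models q(\vec{a})$ partitions the variables of~$q$ into $V_A$ (mapped into $\mn{Ind}(\Amc)$) and $V_T$ (mapped into the trees). Each connected component of $q$ restricted to $V_T$ lies within a single $T_a$ and can be linked to $V_A$ only through variables whose image is the root~$a$; each such component together with its attachment variables is therefore an ELIQ $C_i(x)$ rooted at an ABox individual. Since $q$ is fixed, the number of such decompositions is polynomial in $|\mn{Ind}(\Amc)|$ and constant in~$q$, so CQ-evaluation for~$q$ reduces in polynomial time to ABox-atom checks plus a bounded number of ELIQ-evaluations $\Tmc,\Amc \models C_i(a_i)$. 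For PEQ, distribute disjunction into a UCQ $q_1 \vee \dots \vee q_k$; materializability yields $\Tmc,\Amc \models q(\vec a)$ iff $\Tmc,\Amc \models q_i(\vec a)$ for some~$i$ (evaluate in the materialization), reducing PEQ- to CQ-evaluation.

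This reduction preserves \PTime, proving~(1); combining the ELIQ Datalog$^{\not=}$-rewritings with additional rules that encode the finitely many decompositions and the required ABox atoms yields a Datalog$^{\not=}$-rewriting for the CQ/PEQ, proving~(2). For~(3) in the materializable case, {\sc coNP}-hardness transfers by composing any many-one hardness reduction to PEQ-evaluation with our decomposition: the resulting reduction targets the finite family $q_1,\dots,q_k$ of ELIQs extracted from the PEQ, and since each $q_i$ lies in {\sc coNP} while finite Boolean combinations of \PTime problems are in \PTime, at least one $q_i$ must itself be {\sc coNP}-hard (otherwise the combination places $q$ in \PTime, contradicting {\sc coNP}-hardness of~$q$). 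The $\mathcal{ALCF}$-with-ELQ variant is analogous, with the i-unfolding replaced by the unfolding and i-simulations by simulations (Points~(4) and~(6) of Lemma~\ref{lem:propertiesofunfold} together with the $\mathcal{ALCF}$-clause of Theorem~\ref{thm:disjmat}). The $\mathcal{ALCI}$-with-Datalog clause follows from the cited Feder--Vardi result~\cite{DBLP:conf/lics/FederV03}, which, via the CSP connection of Section~\ref{sect:dicho}, equates Datalog$^{\not=}$- and Datalog-rewritability in the absence of functional roles. I expect the main technical obstacle to be the {\sc coNP}-hardness transfer in~(3): pinpointing a single hard ELIQ within the finite family arising from the decomposition requires careful use of the constant bound on that family together with closure of {\sc coNP} under finite Boolean combinations of polynomial-time reductions.
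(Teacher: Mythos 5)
Your overall strategy---trivial directions from the query-class containments, a case split on materializability via Theorem~\ref{thm:nomatlower}, and a decomposition of CQ matches over a forest-shaped (i-unfolded) materialization into ABox atoms plus rooted ELIQs---is the same as the paper's. There are, however, two genuine gaps.

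First, your decomposition only covers components of the query whose image touches $\mn{Ind}(\Amc)$. A connected component of $q$ that is not linked to any answer variable may be mapped entirely into the \emph{interior} of some tree $T_a$, with no ABox individual in its image; for such a component you only obtain $\Tmc,\Amc\models \exists x\, C_M(x)$ for the corresponding $\mathcal{ELI}$-concept $C_M$, not an ELIQ evaluated at an ABox individual. To anchor it you must prepend a role path $\vec r$ from some $a\in\mn{Ind}(\Amc)$ down to the component's root, and the length of that path is a priori unbounded: it depends on the ABox and the materialization, not just on $q$. The paper's Claim~2 closes exactly this hole with a pumping argument on $\Tmc,q$-types, showing that a path of length at most $n_0$ (exponential in $|\Tmc|+|C|$) always suffices. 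Without that bound your family of ELIQs is infinite and the reduction is not well defined. (The paper motivates this with the query $\exists x\, B(x)$.)

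Second, the {\sc coNP}-hardness transfer in Point~(3) does not go through as you argue it. You claim that if no $q_i$ is {\sc coNP}-hard then each is in \PTime and hence $q$ is, contradicting hardness of $q$. But ``not {\sc coNP}-hard'' does not imply ``in \PTime''---the paper itself exhibits \ALCF-TBoxes with {\sc coNP}-intermediate query evaluation---and even if the inference were valid it would only establish~(3) under the proviso \PTime $\neq$ {\sc coNP}, whereas~(3) is stated unconditionally. The paper's route is different: from $\Amc$ and $\vec a$ it constructs in polynomial time a modified ABox $\Amc'$ with fresh roles encoding all pairs $(\varphi,\Cmc)\in\Qmc$ and all admissible assignments $\pi$, so that $\Tmc,\Amc\models q(\vec a)$ iff $\Tmc,\Amc'\models q'(a_0)$ for a \emph{single} fixed UELIQ $q'$, which is then turned into an ELIQ using the gadget from the proof of Theorem~\ref{thm:nomatlower}. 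This genuine many-one reduction is what you need in place of the Boolean-combination argument.
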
 
\begin{proof}
  We start with Points~(1) and~(2), for which the ``only if''
  directions are trivial. For the converse directions, we may assume
  by Theorem~\ref{thm:nomatlower} that the TBox $\Tmc$ is
  materializable.  The implications from CQ to PEQ in Points~(1)
  and~(2) follow immediately from this assumption: one can first
  transform a given PEQ $q(\vec{x})$ into an equivalent disjunction of
  CQs $\bigvee_{i\in I}q_{i}(\vec{x})$.  CQ-materializability of
  $\Tmc$ implies that, for any ABox $\Amc$ and $\vec{a}$ in
  $\mn{Ind}(\Amc)$, $\Tmc,\Amc\models q(\vec{a})$ iff there exists
  $i\in I$ such that $\Tmc,\Amc\models q_{i}(\vec{a})$.  Thus if
  CQ-evaluation w.r.t.~$\Tmc$ is in {\sc PTime}, 
  evaluation of $(\Tmc,q(\vec{x}))$ is in {\sc PTime}. The same holds
  for (monadic) Datalog$^{\not=}$-rewritability because the class of
  Datalog$^{\not=}$-queries is closed under finite union.

  We now consider the implications from ELIQ to CQ (and from ELQ to CQ
  if $\Tmc$ is a $\mathcal{ALCF}$-TBox) in Points~(1) and~(2).  The
  following claim is the main step of the proof. It states that for
  any CQ $q(\vec{x})$, we can reduce the evaluation of $q(\vec{x})$
  w.r.t.\ \Tmc on an ABox \Amc to evaluating quantifier-free CQs and
  ELIQs $C(x)$ w.r.t.\ \Tmc (ELQs if $\Tmc$ is an
  $\mathcal{ALCF}$-TBox), both on \Amc.

\medskip
\noindent
{\bf Claim 1}. For any materializable TBox $\Tmc$ and CQ $q(\vec{x})$
with $\vec{x}=x_1\cdots x_{n}$,
one can construct a finite set $\mathcal{Q}$ of pairs $(\varphi(\vec{x},\vec{y}),\mathcal{C})$, where
\begin{itemize}
\item $\varphi(\vec{x},\vec{y})$ is a (possibly empty) conjunction of atoms of the form $x=y$ or $r(x,y)$, 
where $r$ is a role name in $q(\vec{x})$ and
\item $\mathcal{C}$ is a finite set of ELIQs
\end{itemize}
such that the following conditions are equivalent for any ABox $\Amc$ and 
$\vec{a}=a_{1}\cdots a_{n}$ from $\mn{Ind}(\Amc)$:
\begin{enumerate}
\item[(i)] $\Tmc,\Amc\models q(\vec{a})$;
\item[(ii)] there exists $(\varphi(\vec{x},\vec{y}),\mathcal{C})\in \mathcal{Q}$ and an assignment $\pi$ in $\mn{Ind}(\Amc)$
with $\pi(x_{i})=a_{i}$ for $1\leq i \leq n$, $\Amc\models_{\pi} \varphi(\vec{x},\vec{y})$,
and $\Tmc,\Amc\models C(\pi(x))$ for all $C(x)\in \mathcal{C}$.
\end{enumerate}
If $\Tmc$ is an $\mathcal{ALCF}$-TBox, then one can choose ELQs instead of ELIQs in each $\mathcal{C}$ 
in $\mathcal{Q}$.

\medskip
\noindent
Before we prove Claim~1, we show how the desired results follow from
it.  Let a CQ $q(\vec{x})$ be given and let $\mathcal{Q}$ be the set
of pairs from Claim~1.
\begin{itemize}
\item Assume that ELIQ-evaluation w.r.t.\ \Tmc is in {\sc PTime}. Then $\Tmc,\Amc\models q(\vec{a})$ can be decided in
  polynomial time since there are only polynomially many assignments
  $\pi$ and for any such $\pi$, $\Amc\models_{\pi}
  \varphi(\vec{x},\vec{y})$ can be checked in polynomial time (using a
  naive algorithm) and $\Tmc,\Amc\models C(\pi(x))$ can be checked in
  polynomial time for each ELIQ $C(x)\in \mathcal{C}$.
\item Assume that $\Tmc$ is (monadic) Datalog$^{\not=}$-rewritable for
  ELIQ. Let $p=(\varphi(\vec{x},\vec{y}),\mathcal{C})\in
  \mathcal{Q}$. For each $C(x)\in \mathcal{C}$, choose a (monadic)
  Datalog$^{\not=}$-rewriting $\Pi_{C}(x)$ of $(\Tmc,C(x))$, assume
  w.l.o.g\ that none of the chosen programs share any IDB relations,
  and that the goal relation of $\Pi_{C}(x)$ is $\mn{goal}_C$. Let
  $\Pi_p$ be the (monadic) Datalog$^{\not=}$ program that consists of
  the rules of all the chosen programs, plus the following rule:
  $$
  \begin{array}{rcl}
    \mn{goal}(\vec{x}) &\leftarrow& \displaystyle \varphi(\vec{x},\vec{y})
    \wedge \bigwedge_{C(x)\in \mathcal{C}}\mn{goal}_{C}(x).
  \end{array}
  $$
  The desired (monadic) Datalog$^{\not=}$-rewriting of
  $(\Tmc,q(\vec{x}))$ is obtained by taking the union of all the
  constructed (monadic) Datalog$^{\not=}$ queries.
\end{itemize}
The implications from ELQs to CQs for $\mathcal{ALCF}$-TBoxes in
Points~(1) and~(2) follow in the same way since, then, each
$\mathcal{C}$ in $\mathcal{Q}$ consists of ELQs only.

For the proof of Claim~1, we first require a technical
observation that allows us to deal with subqueries that are not
connected to an answer variable in the CQ $q(\vec{x})$.  To
illustrate, consider the query $q_{0}=\exists x \, B(x)$. To prove
Claim~1 for $q_{0}$, we have to find a set $\mathcal{Q}$ of pairs
$(\varphi(\vec{y}),\mathcal{C})$ satisfying Conditions (i) and
(ii). Clearly, in this case the components $\varphi(\vec{y})$ will be
empty and so we have to construct a finite set $\mathcal{C}$ of ELIQs
such that for any ABox $\Amc$, $\Tmc,\Amc\models \exists x \, B(x)$ iff
there exists an ELIQ $C(x)\in \mathcal{C}$ and an assignment $\pi$ in
$\mn{Ind}(\Amc)$ such that $\Tmc,\Amc\models C(\pi(x))$. An
\emph{infinite} set $\mathcal{C}$ with this property is given by the
set of all ELIQs $\exists \vec{r}.B(x)$, where $\vec{r}$ is a sequence
$r_{1}\cdots r_{n}$ of roles $r_{i}$ in $\Tmc$ and
$\exists \vec{r}.B$ stands for $\exists r_{1}\cdots \exists
r_{n}.B$---this follows immediately from the assumption that $\Tmc$ is
materializable and that, by Lemma~\ref{lem:count}, for any ABox $\Amc$
that is consistent w.r.t.~$\Tmc$, there exists a generated CQ-initial
model of $\Tmc$ and ABox $\Amc$.  The following result states that it
is sufficient to include in $\mathcal{C}$ the set of all $\exists
\vec{r}.B(x)$ with $\vec{r}$ of length bounded by
$n_0 := 2^{(2(|\Tmc|+|C|)}\cdot 2|\Tmc|+1$.

\medskip
\noindent
{\bf Claim 2.}
Let $C$ be an $\mathcal{ELI}$-concept and assume that
$\Tmc,\Amc\models \exists x \, C(x)$.
If $\Tmc$ is materializable, then there exists a sequence of roles $\vec{r}= r_{1}\cdots r_{n}$ 
with $r_{i}$ in $\Tmc$ and of length $n \leq n_0$ 
and an $a\in {\sf Ind}(\Amc)$ such that $\Tmc,\Amc \models \exists \vec{r}.C(a)$.
If $C$ is an $\mathcal{EL}$-concept and $\Tmc$ an $\mathcal{ALCF}$-TBox, 
then the sequence $\vec{r}$ consists of role names in $\mathcal{T}$.

\medskip
\noindent
{\bf Proof of Claim 2.} 
Let $\Imc$ be a CQ-materialization of $\Tmc$ and $\Amc$. By Points~(3)
and (5) of Lemma~\ref{lem:propertiesofunfold}, we may assume that
$\Imc$ is i-unfolded. From $\Tmc,\Amc\models \exists x \, C(x)$, we
obtain $C^{\Imc}\not=\emptyset$.  Let $n$ be minimal such that there
are $a\in \mn{Ind}(\Amc)$ and $d\in C^{\Imc}$ with $n={\sf
  dist}_{\Imc}(d,a)$ where ${\sf dist}_{\Imc}(d,a)$ denotes the length
of
the shortest path from $d$ to $a$ in the undirected graph $G_\Imc$.
If $n\leq n_0$, we are done. Otherwise fix an $a\in \mn{Ind}(\Amc)$ and denote
by $M$ the set all $e\in C^{\Imc}$ with $n={\sf dist}_{\Imc}(e,a)$. 
Let $d_{0},\ldots,d_{n}$ with $a=d_{0}$, $d_{n}=d$, and $(d_{i},d_{i+1})\in r_{i+1}^{\Imc}$ for $i<n$ be
the shortest path from $a$ to $d$. Observe that $\Tmc,\Amc\models \exists \vec{r}.C(a)$ for 
$\vec{r}=r_{0}\cdots r_{n-1}$ since $\Imc$ is a materialization of $\Tmc$ and $\Amc$. We now
employ a pumping argument to show that this leads to a contradiction.
Let ${\sf cl}(\Tmc,C)$ denote the closure under single negation of the 
set of subconcepts of concepts in $\Tmc$ and $C$. Set 
$$
t^{\Imc}_{\Tmc,C}(e)= \{ D \in {\sf cl}(\Tmc,C) \mid e\in D^{\Imc}\}.
$$
As $n> n_0$, then there exist $d_{i}$ and $d_{i+j}$ with $j>1$ and $i+j<n$ 
such that
$$
t^{\Imc}_{\Tmc,C}(d_{i})=t^{\Imc}_{\Tmc,C}(d_{i+j}), \quad t^{\Imc}_{\Tmc,C}(d_{i+1})=t^{\Imc}_{\Tmc,C}(d_{i+j+1}), \quad
r_{i+1}= r_{i+j+1}
$$
Now replace in $\Imc$ the interpretation induced by the subtree rooted at $d_{i+j+1}$ by the interpretation
induced by the subtree rooted at $d_{i+1}$. We do the same construction for all elements of $M$
and denote the resulting interpretation by $\Jmc$. 
One can easily show by induction that $\Jmc$ is still a model of $\Tmc$ and $\Amc$,
but now $\Jmc\not\models \exists \vec{r}.C(a)$ and so $\Tmc,\Amc\not\models \exists \vec{r}.C(a)$.
This contradiction finishes the proof of Claim~2.
For $\mathcal{EL}$-concepts and $\mathcal{ALCF}$-TBoxes, Claim~2 can
be proved similarly by using an unfolded (rather than i-unfolded)
materialization, which exists by Points~(4) and (6) of
Lemma~\ref{lem:propertiesofunfold}.

\medskip
\noindent
{\bf Proof of Claim~1.}
Assume that $\Tmc$ and $q(\vec{x})=\exists \vec{x} \,
\psi(\vec{x},\vec{y})$ are given.  We have to construct a set \Qmc
such that Conditions~(i) and~(ii) are satisfied. Let \Amc be an ABox
with $\Tmc,\Amc\models q(\vec{a})$, \Imc a materialization of $\Tmc$
and $\Amc$ that is i-unfolded, and $\pi$ an assignment in $\Imc$ such
that $\Imc\models_{\pi} \psi(\vec{x},\vec{y})$. We define a
corresponding pair
$p=(\varphi(\vec{x},\vec{z}),\mathcal{C})$ to be included in \Qmc
(and these are the only pairs in~\Qmc).

For identifying $\varphi(\vec{x},\vec{z})$, set $x \sim y$ if
$\pi(x)=\pi(y)$ and denote by $[x]$ the equivalence class of $x$
w.r.t.\ ``$\sim$''.  Let $\varphi_{0}$ be the set of all atoms
$r([x],[y])$ such that $\pi(x),\pi(y)\in \mn{Ind}(\Amc)$ and there are
$x'\in [x]$ and $y'\in [y]$ with $r(x',y')$ in $\psi$. We obtain
$\varphi(\vec{x},\vec{y})$ by selecting an answer variable $y \in [x]$
for every $[x]$ that contains such a variable and then replacing $[x]$
by $y$ in $\varphi_{0}$, adding $x_{i}=x_{j}$ to
$\varphi(\vec{x},\vec{z})$ for any two (selected) answer variables $x_{i},x_{j}$
with $x_i \sim x_j$, and by regarding the remaining equivalences
classes $[y]$ that do not contain answer variables as variables in
$\vec{z}$. 

We now identify $\mathcal{C}$. Assume w.l.o.g.\ that \Imc uses the
naming scheme of i-unravelings. Let $a\in \mn{Ind}(\Amc)$. By
$\Imc_{a}$, we denote the subinterpretation of $\Imc$ induced by the
set of all elements $a r_{1}d_{1}\cdots r_{n}d_{n}\in \Delta^{\Imc}$.
Let $M$ be a maximal connected component of $\Delta^{\Imc_{a}}\cap
\{\pi(y)\mid y\in \mn{var}(\psi)\}$.  We associate with $M$ an ELIQ to
be included in \Cmc (and these are the only ELIQs in~\Cmc).

The conjunctive query $\varphi_{M}$ consists of all atoms $r([x],[y])$
such that $\pi(x),\pi(y)\in M$ and there are $x'\in [x],y'\in [y]$
with $r(x',y')$ in $\psi$ and all atoms $A([x])$ such that $\pi(x)\in
M$ and there is $x'\in [x]$ with $A(x')$ in $\psi$. We again assume
that equivalence classes $[x]$ that contain an answer variable (there
is at most one such class in $\varphi_{M}$) are replaced with an
answer variable from $[x]$ and regard the remaining equivalences
classes as variables. 
Note that $\varphi_{M}$ is tree-shaped since $\Imc_a$ is. We can thus
pick a variable $x_0$ with $\pi(x_0)\in M$ such that
$\mn{dist}_{\Imc}(a,\pi(x_0))$ is minimal.  Let $x$ be $[x_0]$ if
$[x_0]$ contains no answer variable and, otherise, let $x$ be the
answer variable that $[x_0]$ has been replaced with.  Let
$[y_{1}],\ldots,[y_{m}]$ be the variables in $\varphi_M$ that are
distinct from $x$ and consider the ELIQ $\exists
[y_{1}]\cdots \exists [y_{m}]\varphi_{M}(x,[y_{1}],\ldots,[y_{m}])$,
which we write as $C_{M}(x)$ where $C_M$ is an appropriate
$\mathcal{ELI}$-concept. We now distinguish the following cases:
\begin{itemize}
\item $\pi(x) = a$. In this case, we include $C_{M}(x)$ in $\mathcal{C}$;
\item otherwise, we still know that 
$\Tmc,\Amc\models \exists x \, C(x)$. Thus, by
  Claim~2 there is a sequence of roles $\vec{r}= r_{1}\cdots r_{n}$
  with $r_{i}$ in $\Tmc$ and $n \leq n_0$ such that $\Tmc,\Amc \models
  \exists \vec{r}.C_{M}(a)$ for some $a\in {\sf Ind}(\Amc)$.  In this
  case, we include $\exists \vec{r}.C_{M}(y)$ in $\mathcal{C}$ for some
  fresh variable~$y$.
\end{itemize}
This finishes the construction of \Cmc and thus of \Qmc. 
Clearly, $\mathcal{Q}$ is finite. The stated properties of
$\mathcal{Q}$ follow directly from its construction.
For $\mathcal{ALCF}$-TBoxes and ELQs, Claim~1 can be proved
similarly using an unfolded materialization (instead of an i-unfolded
one) and the observation that in this case all $C_{M}([x])$ and
$\exists \vec{r}.C_{M}(y)$ are ELQs ($\vec{r}$ uses role names only by
Claim~2).

\medskip

We now turn our attention to Point~(3). Here, the ``if'' directions
are trivial and we prove the ``only if'' part. It suffices to show
that  if PEQ-evaluation w.r.t.\ a TBox \Tmc is {\sc coNP}-hard, then
so is ELIQ-evaluation. We start with showing the slightly simpler
result that {\sc coNP}-hardness of CQ-evaluation w.r.t.\ \Tmc 
implies {\sc coNP}-hardness of UELIQ-evaluation, and then sketch
the modifications required to strengthen the proof to attain the
original statement. 

Thus assume that evaluating the CQ $q(\vec{x})$ w.r.t.\ \Tmc is {\sc
  coNP}-hard. We shall exhibit an UELIQ $q'(x)$ such that for every
ABox \Amc and all $\vec{a} \in \mn{Ind}(\Amc)$, one can produce in
polynomial time an ABox $\Amc'$ with a distinguished individual name
$a_0$ such that $\Tmc,\Amc \models q(\vec{a})$ iff $\Tmc,\Amc' \models
q'(a_0)$. Instead of constructing $q'(\vec{x})$ right away, we will
start with describing the translation of \Amc to $\Amc'$. Afterwards,
it will be clear how to construct $q'(\vec{x})$.

Thus, let \Amc be an ABox and $\vec{a}$ from $\mn{Ind}(\Amc)$. The
construction of $\Amc'$ builds on Claim~1 above. Let $\Qmc$ be the set
of pairs from that claim and reserve a fresh individual name $a_0$. To
obtain the desired ABox $\Amc'$, we extend \Amc for every pair
$p=(\varphi(\vec{x},\vec{y}),\Cmc)$ in \Qmc. Let
$\Cmc = C_{p,1}(x_1),\dots,C_{p,k_p}(x_{k_p})$. Then
\begin{itemize}

\item introduce a fresh individual name $a_p$ and fresh role names
  $r_p,r,r_{p,1},\dots,r_{p,k_p}$;

\item add the assertion $r_p(a_0,a_p)$;

\item for every assignment $\pi$ in
  $\mn{Ind}(\Amc)$ with
  $\Amc \models_\pi \varphi(\vec{x},\vec{y})$ and
  $\pi(\vec{x})=\vec{a}$, introduce

\begin{itemize}

\item a fresh individual name $a_{p,\pi}$ and the assertion
  $r(a_p,a_{p,\pi})$;

\item the assertion $r_{p,i}(a_p,\pi(x_i))$ for $1 \leq i \leq k_p$.

\end{itemize}
\end{itemize}
From Claim~1, it is immediate that $\Tmc,\Amc \models q(\vec{a})$ iff
$\Tmc,\Amc' \models q'(x)$ where $q'(x)$ is the UELIQ $\midsqcup_{p
  \in Q} q'(x)$ 
with
$
C_p = \exists r_p . \exists r . \midsqcap_{1 \leq i \leq k_p} \exists
r_{p,i} . C_{p,i}.
$
Thus, evaluating $q'(x)$ w.r.t.\ \Tmc is {\sc coNP}-hard, as required.
It remains to modify the reduction by replacing CQs with PEQs and
UELIQs with ELIQs. The former is straightforward: every PEQ is
equivalent to a finite disjunction of CQs, and thus we can construct
$\Amc'$ and $q'(x)$ in essentially the same way as before; instead of
considering all pairs from \Qmc for a single CQ, we now use the union
of all sets \Qmc for the finitely many CQs in question. Finally, we
can replace UELIQs with ELIQs by using the same construction as in the
proof of Theorem~\ref{thm:nomatlower}: after adding some
straightforward auxiliary structure to $\Amc'$, one can replace a
disjunction of ELIQs by (essentially) their conjunction, which is
again an ELIQ. Details are left to the reader.
\end{proof}
We remark that Theorem~\ref{equivalence} can be extended to also cover
rewritability into first-order (FO) queries, and that the proof is
almost identical to that for Datalog$^{\not=}$-rewritability.

\section{Unraveling Tolerance}
\label{sect:unrav}
We develop a condition on TBoxes, called unraveling tolerance, that is
sufficient for the TBox to be monadic Datalog$^{\neq}$-rewritable for
PEQ, and thus also sufficient for PEQ-evaluation w.r.t.\ the TBox
being in {\sc PTime}. Unraveling tolerance strictly generalizes
syntactic `Horn conditions' such as the ones used to define the DL
Horn-$\shiq$, which was designed as a (syntactically) maximal DL with
{\sc PTime} query evaluation
\cite{journals/jar/HustadtMS07,conf/jelia/EiterGOS08}.

Unraveling tolerance is based on an unraveling operation on ABoxes, in
the same spirit as the unfolding of an interpretation into a tree
interpretation we discussed above.
Formally, the \emph{unraveling} $\Amc^u$ of an ABox \Amc is the
following (possibly infinite) ABox:
\begin{itemize}

\item $\mn{Ind}(\Amc^u)$ is the set of
  sequences \mbox{$b_0 r_0 b_1 \cdots r_{n-1} b_n$, $n \geq 0$}, with $b_0,\dots,b_n \in
  \mn{Ind}(\Amc)$ and $r_0,\dots,r_{n-1} \in \NR \cup \Nsf_\Rsf^-$
  such that for all $i <n$, we have $r_i(b_i,b_{i+1}) \in \Amc$ and
  $(b_{i-1},r_{i-1}^-) \neq (b_{i+1},r_i)$ when $i>0$;


\item for each $C(b) \in \Amc$ and $\alpha=b_0 r_0 \cdots 
  r_{n-1}b_n \in \mn{Ind}(\Amc^u)$ with $b_n=b$, $C(\alpha) \in \Amc^u$;


\item for each $\alpha=b_0 r_{0} \cdots r_{n-1} b_n \in \mn{Ind}(\Amc^u)$ with $n >
  0$, $r_{n-1}(b_0r_0 \cdots r_{n-2}b_{n-1},\alpha) \in
  \Amc^u$.

\end{itemize}
For all $\alpha = b_0 \cdots b_n \in \mn{Ind}(\Amc^u)$, we write
$\mn{tail}(\alpha)$ to denote~$b_n$. Note that the condition
$(b_{i-1},r_{i-1}^-) \neq (b_{i+1},r_i)$ is needed to ensure that
functional roles can still be interpreted in a
functional way after unraveling.
%
\begin{definition}[Unraveling Tolerance]
  A TBox \Tmc is \emph{unraveling tolerant} if for all ABoxes \Amc and
  ELIQs $q$, we have that $\Tmc,\Amc \models q$ implies $(\Tmc,\Amc^u)
  \models q$.
\end{definition}
It is not hard to prove that the converse direction `$\Tmc,\Amc^u
\models q$ implies $\Tmc,\Amc \models q$' is true for \emph{all}
$\mathcal{ALCFI}$-TBoxes. Note that it is pointless to define
unraveling tolerance for queries that are not necessarily tree shaped,
such as CQs.
%
%
\begin{example}
\label{ex:unrav}
~

\smallskip 
\noindent 
(1)~The \ALC-TBox $\Tmc_1 = \{ A \sqsubseteq \forall r . B \}$ is
unraveling tolerant. This can be proved by showing that (i)~for any
(finite or infinite) ABox \Amc, the interpretation $\Imc^+_\Amc$ that
is obtained from $\Amc$ by extending $B^{\Imc^+_\Amc}$ with all $a \in
\mn{Ind}(\Amc)$ that satisfy $\exists r^- . A$ in \Amc (when viewed as an
interpretation) is an ELIQ-materialization of $\Tmc_1$ and \Amc; and
(ii)~$\Imc^+_\Amc \models C(a)$ iff $\Imc^+_{\Amc^u} \models C(a)$ for
all ELIQs $C(x)$ and $a \in \mn{Ind}(\Amc)$.  The proof of (ii) is
based on a straightforward induction on the structure of the
$\mathcal{ELI}$-concept~$C$. As illustrated by the ABox $\Amc =
\{r(a,b), A(a) \}$ and the fact that $\Amc^u,\Tmc \models B(b)$, the
use of inverse roles in the definition of $\Amc^u$ is crucial here
despite the fact that $\Tmc_1$ does not use inverse roles.

\smallskip
\noindent
(2)~A simple example for an \ALC-TBox that is not unraveling tolerant
is 
$$\Tmc_2= \{ A \sqcap \exists r . A \sqsubseteq B, \neg A \sqcap
\exists r . \neg A \sqsubseteq B \}. 
$$
For $\Amc = \{ r(a,a) \}$, it is easy to see that we have
$\Tmc_2,\Amc \models B(a)$ (use a case distinction on the truth
value of $A$ at $a$), but $\Tmc_2,\Amc^u \not\models B(a)$.  
%
\end{example}
Before we show that unraveling tolerance indeed implies \PTime query
evaluation, we first demonstrate the generality of this property by
relating it to Horn-\ALCFI, the \ALCFI-fragment of Horn-\shiq.
Different versions of Horn-$\mathcal{SHIQ}$ have been proposed in the
literature, giving rise to different versions of
Horn-$\mathcal{ALCFI}$
\cite{journals/jar/HustadtMS07,2007cbfhdl,conf/jelia/EiterGOS08,conf/ijcai/Kazakov09}.
As the original definition from \cite{journals/jar/HustadtMS07} based
on polarity is rather technical, we prefer to work with the following
equivalent and less cumbersome definition.  A
\emph{Horn-$\mathcal{ALCFI}$-TBox} \Tmc is a finite set of concept
inclusions $L \sqsubseteq R$ and functionality assertions where $L$
and $R$ are built according to the following syntax rules:
$$
\begin{array}{r@{\,}c@{\,}l}
    R,R' &::=& \top \mid \bot \mid A \mid \neg A \mid R \sqcap R' \mid 
L \rightarrow R \mid \exists r . R \mid
   \forall r . R \\[1mm]
    L,L' &::=& \top \mid \bot \mid A \mid L \sqcap L' \mid L \sqcup L' \mid \exists r . L 
\end{array}
$$
with $r$ ranging over $\NR \cup \Nsf^-_\Rsf$ and $L\rightarrow R$
abbreviating $\neg L \sqcup R$. Whenever convenient, we may assume
w.l.o.g.\ that \Tmc contains only a single concept inclusion $\top
\sqsubseteq C_\Tmc$ where $C_\Tmc$ is built according to the topmost
rule above.

By applying some simple transformations, it is not hard to show that
every Horn-$\mathcal{ALCFI}$-TBox according to the original
polarity-based definition is equivalent to a
Horn-$\mathcal{ALCFI}$-TBox of the form introduced here. Although not
important in our context, we note that even a polynomial time
transformation is possible.
\begin{theorem}
\label{lem:hornalc}
Every Horn-$\mathcal{ALCFI}$-TBox is unraveling tolerant.
\end{theorem}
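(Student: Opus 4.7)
The plan is to prove the claim by passing through canonical models and reducing to the existence of an i-simulation. Fix a Horn-$\mathcal{ALCFI}$-TBox $\Tmc$, an ABox $\Amc$ consistent with $\Tmc$, an ELIQ $C(x)$, and $a \in \mn{Ind}(\Amc)$ with $\Tmc,\Amc \models C(a)$; identify $a$ with the length-$0$ sequence in $\mn{Ind}(\Amc^u)$. The aim will be to derive $\Tmc, \Amc^u \models C(a)$.

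\emph{Step 1 (canonical models).} I will build a canonical model $\Imc_\Amc$ of $(\Tmc, \Amc)$ by the standard Horn chase: start with $\Amc$ viewed as an interpretation and saturate under the inclusions of $\Tmc$, adding fresh anonymous successors to witness existentials, propagating universal and negated-atom consequences, and identifying successors forced to be equal by functionality assertions. Because $\Tmc$ is Horn, no disjunctive case splits arise and the chase is deterministic. A standard fixpoint argument then shows that $\Imc_\Amc$ is a generated countable model of $(\Tmc, \Amc)$ which is hom-initial in ${\sf Mod}(\Tmc, \Amc)$, hence by Theorem~\ref{lem:sem} also i-sim-initial and thus a materialization for every query class considered. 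Applying the same construction to the (possibly infinite) ABox $\Amc^u$ yields a canonical model $\Imc_{\Amc^u}$ with the analogous properties.

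\emph{Step 2 (reduction to a simulation).} Since $\Imc_\Amc$ is an ELIQ-materialization, $\Imc_\Amc \models C(a)$; and since $\Imc_{\Amc^u}$ is one for $(\Tmc, \Amc^u)$, it suffices to show $\Imc_{\Amc^u} \models C(a)$. By the implication (2) $\Rightarrow$ (1) of Lemma~\ref{lem:simelq}, this follows once I exhibit an i-simulation $S$ from $\Imc_\Amc$ to $\Imc_{\Amc^u}$ containing $(b, b)$ for every $b \in \mn{Ind}(\Amc)$.

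\emph{Step 3 (building $S$).} Both canonical models split into an ABox core, carried by $\mn{Ind}(\Amc)$ or $\mn{Ind}(\Amc^u)$ respectively, together with anonymous tree-shaped extensions attached by the chase, and the extension attached at an element depends only on the concept type produced at it. Starting from the seed $\{(b,b) : b \in \mn{Ind}(\Amc)\}$, I extend $S$ coinductively. For a role edge in the ABox core of $\Imc_\Amc$ induced by $r(a,b) \in \Amc$, with $(a, \alpha) \in S$ already in place, I match the successor $b$ either to the sequence $\alpha \cdot r \cdot b \in \mn{Ind}(\Amc^u)$ when the non-backtracking condition permits this extension, or, in the only blocked case (where $\alpha$ already ends in $\cdots r^- \cdot b$), to the prefix of $\alpha$, which is an $r$-successor of $\alpha$ via the existing incoming edge of $\alpha$ read in reverse. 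For an edge of $\Imc_\Amc$ leading into an anonymous tree extension, I match to the corresponding element in the isomorphic tree extension of $\Imc_{\Amc^u}$ attached at the already-matched element.

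\emph{Step 4 (the main obstacle).} The technical heart of the proof is the atomic compatibility of the i-simulation for ABox-core pairs $(a, \alpha)$: whenever $a \in A^{\Imc_\Amc}$ for a concept name $A$, one must have $\alpha \in A^{\Imc_{\Amc^u}}$. This reduces to showing that any concept name derived by the Horn chase at $a \in \mn{Ind}(\Amc)$ is also derived, along the matched sequences, by the chase for $\Amc^u$. Horn derivations are tree-shaped and their leaves are ABox facts reached along walks from $a$; such walks translate canonically into non-backtracking sequences of $\Amc^u$, which lets the derivation be transcribed in $\Imc_{\Amc^u}$. The delicate case is a walk step that would violate the non-backtracking condition in $\Amc^u$, which happens precisely when a functional role is used to loop back to the immediately preceding element; there the existing incoming edge of the relevant sequence plays the role of the forbidden outgoing edge, and the functionality assertion together with Hornness guarantee that the re-routed derivation yields the same conclusion. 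Managing this interplay between Hornness, the tree-shape of derivations, and functional roles is the principal difficulty.
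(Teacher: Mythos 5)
Your proposal is correct and follows essentially the same route as the paper: both run a Horn chase (the paper's completion $\Amc_c$ with rules R1--R7) on $\Amc$ and on $\Amc^u$, use that the resulting canonical interpretation is a materialization, and then transfer derived facts between the two chases via the tail-based correspondence $a \sim \alpha$ — your ``atomic compatibility'' obstacle in Step 4, including the re-routing of backtracking walks through the incoming edge of $\alpha$, is exactly what the paper's Claim~3 establishes by induction on chase steps. The only cosmetic difference is that you package the transfer as an i-simulation and invoke Lemma~\ref{lem:simelq}, whereas the paper argues directly on the syntactic completions.
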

\begin{proof} 
  As a preliminary, we give a characterization of the entailment of
  ELIQs in the presence of Horn-\ALCFI-TBoxes which is in the spirit
  of the chase procedure as used in database theory \cite{DBLP:journals/tcs/FaginKMP05,DBLP:journals/jair/CaliGK13}
  and of consequence-driven algorithms as used for reasoning in Horn
  description logics such as $\EL^{{+}{+}}$ and Horn-\shiq
  \cite{BaBrLu-IJCAI-05,conf/ijcai/Kazakov09,conf/jelia/Krotzsch10}.

  We use \emph{extended ABoxes}, i.e., finite sets of assertions
  $C(a)$ and $r(a,b)$ with $C$ a \emph{potentially compound} concept.
  An \emph{$\mathcal{ELIU}_\bot$-concept} is a concept that is formed according
  to the second syntax rule in the definition of Horn-\ALCFI.
  For an extended ABox $\Amc'$ and
  an assertion $C(a)$, $C$ an $\mathcal{ELIU}_\bot$-concept, we write
  $\Amc' \vdash C(a)$ if $C(a)$ \emph{has a syntactic match} in $\Amc'$,
  formally:
  \begin{itemize}

  \item $\Amc' \vdash \top(a)$ is unconditionally true;

  \item $\Amc' \vdash \bot(a)$ if $\bot(b) \in \Amc'$ for some $b \in \mn{Ind}(\Amc)$;

  \item $\Amc' \vdash A(a)$ if $A(a) \in \Amc'$;

  \item $\Amc' \vdash C\sqcap D(a)$ if $\Amc' \vdash C(a)$ and $\Amc' \vdash D(a)$;

  \item $\Amc' \vdash C\sqcup D(a)$ if $\Amc' \vdash C(a)$ or $\Amc' \vdash D(a)$;

  \item $\Amc' \vdash \exists r . C(a)$ if there is an $r(a,b) \in
    \Amc'$ such that $\Amc' \vdash C(b)$.

  \end{itemize}
  Now for the announced characterization.  Let $\Tmc=\{\top
  \sqsubseteq C_\Tmc \}$ be a Horn-$\mathcal{ALCFI}$-TBox and $\Amc$ a
  potentially infinite ABox (so that the characterization also applies
  to unravelings of ABoxes).  We produce a sequence of extended ABoxes
  $\Amc_0,\Amc_1,\dots$, starting with $\Amc_0 = \Amc$.
  In what follows, we use additional individual
  names of the form $a r_1 C_1 \cdots r_k C_k$ with $a \in
  \mn{Ind}(\Amc_0)$, $r_1,\dots,r_k$ roles that occur in \Tmc, and
  $C_1,\dots,C_k$ subconcepts of concepts in $\Tmc$.  We assume that \NI contains
  such names as needed and use the symbols $a,b,\dots$ also to refer to
  individual names of this compound form. Each extended ABox
  $\Amc_{i+1}$ is obtained from $\Amc_i$ by applying the following
  rules in a fair way:
  \begin{itemize}[leftmargin=11mm]

 \item[{\sf R1}] if $a \in \mn{Ind}(\Amc_i)$, then add $C_\Tmc(a)$.
 
 \item[{\sf R2}] if $C \sqcap D(a) \in \Amc_i$, then add 
   $C(a)$ and $D(a)$;

 \item[{\sf R3}] if $C \rightarrow D(a) \in \Amc_i$ and $\Amc_i \vdash C(a)$, 
  then add $D(a)$;

\item[{\sf R4}] if $\exists r . C(a) \in \Amc_i$ and $\mn{func}(r)
  \notin \Tmc$, then add $r(a,a r C)$ and $C(a r C)$;

\item[{\sf R5}] if $\exists r . C(a) \in \Amc_i$, $\mn{func}(r) \in
  \Tmc$, and $r(a,b) \in \Amc_i$, then add $C(b)$;

\item[{\sf R6}] if $\exists r . C(a) \in \Amc_i$, $\mn{func}(r) \in
  \Tmc$, and there is no $r(a,b) \in \Amc_i$, then add $r(a,a r C)$ and $C(a r C)$;

 \item[{\sf R7}] if $\forall r . C(a) \in \Amc_i$ and $r(a,b) \in \Amc_i$,
   then add $C(b)$.

\end{itemize}
Let $\Amc_c = \bigcup_{i \geq 0} \Amc_i$ be the \emph{completion} of
the original ABox \Amc.\footnote{Order of rule application has an
  impact on the shape of $\Amc_c$, but is irrelevant for the remainder
  of the proof.}
Note that $\Amc_c$ may be infinite even if $\Amc$
is finite, and that none of the above rules is applicable in $\Amc_c$.
%
We write `$\Amc_c \vdash \bot$' instead of `$\Amc_c \vdash \bot(a)$ for some $a \in \NC$'.
If $\Amc\not\vdash\bot$, then $\Amc_{c}$ corresponds to an interpretation $\Imc_{c}$ in the standard way, i.e.,
$$
  \begin{array}{rcll}
  \Delta^{\Imc_c}&=&\mn{Ind}(\Amc_c) \\[1mm]
    A^{\Imc_c} &=& \{ a \mid A(a) \in \Amc_c \} & \text{for all } A \in \NC\\[1mm]
   r^{\Imc_c} &=& \{ r(a,b) \mid r(a,b) \in  \Amc_c \} & \text{for all } r
   \in \NR 
  \end{array}
  $$
where in $\Imc_{c}$ we assume that only the individual names in $\mn{Ind}(\Amc)$ are elements of $\NI$.
\\[2mm]
{\bf Claim~1}. If $\Amc_{c}\not\vdash\bot$, then $\Imc_{c}$ is a PEQ-materialization of $\Tmc$ and $\Amc$.
\\[2mm]
To prove Claim~1 it suffices to show that there is a homomorphism $h$
preserving $\mn{Ind}(\Amc)$ from $\Imc_c$
into every model $\Jmc$ of $\Tmc$ and $\Amc$ and that $\Imc_{c}$ is a
model of $\Tmc$ and $\Amc$. The latter is immediate by construction of 
$\Amc_{c}$. Regarding the former, the desired homomorphism $h$ can can
be constructed inductively starting with the ABox $\Amc_0$ and then
extending to $\Amc_1,\Amc_2,\dots$.
Using Claim~1 and the easily proved fact that $\Amc_{c}\not\vdash\bot$ iff $\Amc$ is consistent w.r.t.~$\Tmc$ one 
can now show the following.
  %
 \\[2mm]
 {\bf Claim~2}. $\Tmc,\Amc \models C(a)$ iff $\Amc_c \vdash C(a)$ or
 $\Amc_c \vdash \bot$, for all ELIQs $C(x)$ and $a \in
 \mn{Ind}(\Amc)$. 
 %
%
%
%
\\[2mm]
 %
%
%
%
We now turn to the actual proof of Theorem~\ref{lem:hornalc}.
  Consider the application of the above completion construction to
  both the original ABox \Amc and its unraveling $\Amc^u$. Recall that
  individual names in $\Amc^u$ are of the form $a_0 r_0 a_1 \cdots r_{n-1}
  a_n$. Consequently, individual names in $\Amc_c^u$ take the form $a_0 r_0 a_1
  \cdots r_{n-1} a_n s_1 C_1 \cdots s_k C_k$. For $a \in
  \mn{Ind}(\Amc_c)$ and $\alpha \in \mn{Ind}(\Amc_c^u)$, we write
  $a \sim \alpha$ if $a$ and $\alpha$ are of the form
  %
$a_ns_1 C_1 \cdots s_k C_k$
and
$a_0 r_0 a_1 \cdots r_{n-1} a_n s_1 C_1 \cdots s_k C_k$,
  %
respectively, with $k \geq 0$. Note that, in particular, $a \sim a$
for all $a \in \mn{Ind}(\Amc)$. 
  The following
  claim can be shown by induction on $i$.
  \\[2mm]
  {\bf Claim~3}. For all $a \in \mn{Ind}(\Amc_i)$ and $\alpha \in
  \mn{Ind}(\Amc^u_i)$ with $a \sim \alpha$, we have
 \begin{enumerate}

 \item $\Amc_i \vdash C(a)$ iff $\Amc^u_i \vdash C(\alpha)$ for 
   all $\mathcal{ELI}$-concepts $C$;

 \item $C(a) \in \Amc_i$ iff $C(\alpha) \in \Amc^u_i$ for all 
   subconcepts $C$ of concepts in $\Tmc$.

 \end{enumerate}
 Now, unraveling tolerance of \Tmc follows from Claims~2 and~3.
\end{proof}
%
%
%
Theorem~\ref{lem:hornalc} shows that unraveling tolerance and Horn
logic are closely related. Yet, the next example demonstrates that
there are unraveling tolerant \ALCFI-TBoxes that are not equivalent to
any Horn sentence of FO. Since any Horn-\ALCFI-TBox is equivalent to
such a sentence, it follows that unraveling tolerant
$\mathcal{ALCFI}$-TBoxes strictly generalize
Horn-$\mathcal{ALCFI}$-TBoxes. This increased generality will pay off
in Section~\ref{sect:depth1} when we establish a dichotomy result for
TBoxes of depth one.
\begin{example}
\label{ref:matexa} 
Take the \ALC-TBox
$$
  \Tmc = \{ \exists r.(A \sqcap \neg B_1 \sqcap \neg B_2) \sqsubseteq
\exists r.(\neg A \sqcap \neg B_1 \sqcap \neg B_2)\}.
$$
One can show as in Example~\ref{ex:unrav} (1) that \Tmc is unraveling
tolerant; here, the materialization is actually $\Amc$ itself rather
than some extension thereof, i.e., as far as ELIQ (and even PEQ)
evaluation is concerned, \Tmc cannot be distinguished from the empty
TBox.

It is well-known that FO Horn sentences are preserved under direct
products of interpretations \cite{ChangKeisler}.  To show that \Tmc is
not equivalent to any such sentence, it thus suffices to show that
$\Tmc$ is not preserved under direct products. This is simple: let
$\Imc_1$ and $\Imc_2$ consist of a single $r$-edge between elements
$d$ and $e$, and let $e \in (A \sqcap B_1 \sqcap \neg B_2)^{\Imc_1}$
and $e \in (A \sqcap \neg B_1 \sqcap B_2)^{\Imc_2}$; then the direct
product \Imc of $\Imc_1$ and $\Imc_2$ still has the $r$-edge between
$(d,d)$ and $(e,e)$ and satisfies $(e,e) \in (A \sqcap \neg B_1 \sqcap
\neg B_2)^{\Imc}$, thus is not a model of \Tmc. 
\end{example}
We next show that unraveling tolerance is indeed a sufficient
condition for monadic Datalog$^{\neq}$-rewritability (and thus for
\PTime query evaluation). In Section~\ref{sect:dicho}, we will
establish a connection between query evaluation under DL TBoxes and
constraint satisfaction problems (CSPs). The monadic Datalog$^{\not=}$
programs that we construct resemble the canonical monadic Datalog
programs for CSPs~\cite{DBLP:journals/siamcomp/FederV98}. 

%
%

Let $\Tmc$ be an unraveling tolerant \ALCFI-TBox and $q=C_0(x)$ an
ELIQ. We show how to construct a Datalog$^{\not=}$-rewriting of the
OMQ $(\Tmc,q(x))$. Using the construction from the proof of
Theorem~\ref{equivalence}, one can extend this construction from ELIQs
to PEQs. Recall from the proof of Theorem~\ref{equivalence} that
$\mn{cl}(\Tmc,C_{0})$ denotes the closure under single negation of the
set of subconcepts of \Tmc and $C_{0}$.
For an interpretation \Imc and $d \in \Delta^\Imc$, we use
$t^\Imc_{\Tmc,q}(d)$ to denote the set of concepts $C \in
\mn{cl}(\Tmc,C_{0})$ such that $d\in C^\Imc$. A \emph{$\Tmc,q$-type}
is a subset $t \subseteq \mn{cl}(\Tmc,C_{0})$ such that for some model
\Imc of \Tmc, we have $t=t^\Imc_{\Tmc,q}(d)$. We use $\mn{tp}(\Tmc,q)$
to denote the set of all \Tmc,$q$-types. Observe that one can
construct the set $\mn{tp}(\Tmc,q)$ in exponential time as the set of
all $t\subseteq \mn{cl}(\Tmc,C_{0})$ such that for any concept $\neg
C\in\mn{cl}(\Tmc,C_{0})$ either $C\in t$ or $\neg C \in t$ and the
concept $\bigsqcap_{C\in t}C$ is satisfiable in a model of~$\Tmc$.

For $t,t' \in \mn{tp}(\Tmc,q)$ and $r$ a role, we write $t
\rightsquigarrow_r t'$ if there are a model \Imc of \Tmc and $d,d' \in
\Delta^\Imc$ such that $t^\Imc_{\Tmc,q}(d)=t$,
$t^\Imc_{\Tmc,q}(d')=t'$, and $(d,d') \in r^\Imc$. One can construct
the set of all $(t,t',r)$ such that $t \rightsquigarrow_r t'$ in
exponential time by checking for each candidate tuple $(t,t',r)$
whether the concept
$$
\big(\bigsqcap_{C\in t}C\big)\sqcap \exists r.\big(\bigsqcap_{C\in t'}C\big)
$$
is satisfiable in a model of $\Tmc$. 

%
%
%
%
%
%
  Introduce, for every set $T \subseteq \mn{tp}(\Tmc,C_{0})$ a unary IDB
  relation $P_{T}$. Let $\Pi$ be the monadic Datalog$^{\neq}$ program that contains the following rules:
\begin{enumerate}

\item $P_{T}(x) \leftarrow A(x)$ for all concept names
  $A \in \mn{cl}(\Tmc,C_{0})$ and $T=\{ t\in \mn{tp}(\Tmc,q)\mid A\in t\}$;

\item $P_{T}(x) \leftarrow P_{T_{0}}(x) \wedge r(x,y) \wedge
  P_{T_{1}}(y)$ for all $T_0,T_1 \subseteq \mn{tp}(\Tmc,q)$ and all
  role names $r$ that occur in $\mn{cl}(\Tmc,C_{0})$ and their
  inverses, where $T = \{ t\in T_{0} \mid \exists t' \in T_1: t
  \rightsquigarrow_r t' \}$;

\item $P_{T_0 \cap T_1}(x) \leftarrow P_{T_{0}}(x) \wedge
  P_{T_{1}}(x)$ for all $T_0,T_1 \subseteq \mn{tp}(\Tmc,q)$;

\item $\mn{goal}(x) \leftarrow P_T(x)$ for all $T \subseteq
  \mn{tp}(\Tmc,q)$ such that $t \in T$ implies $C_0 \in T$;

\item $\mn{goal}(x) \leftarrow P_\emptyset(y)$;



\item $\mn{goal}(x) \leftarrow r(y,z_1) \wedge r(y,z_2) \wedge z_1\not= z_2$
  for all ${\sf func}(r)\in \Tmc$.

\end{enumerate}
To show that $\Pi$ is a rewriting of the OMQ $(\Tmc,C_0(x))$, it suffices
to establish the following lemma.
\begin{lemma}\label{datalogrewr}
$\Amc\models \Pi(a_0)$ iff $\Tmc,\Amc\models C_{0}(a_0)$, for
all ABoxes $\Amc$ and $a_0\in \mn{Ind}(\Amc)$.
\end{lemma}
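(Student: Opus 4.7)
\textbf{Soundness direction} ($\Amc \models \Pi(a_0) \Rightarrow \Tmc,\Amc \models C_0(a_0)$): I would prove by induction on the stage of the least fixed-point computation of $\Pi$ on~$\Amc$ the invariant that, for every derived $P_T(a)$ and every model $\Imc$ of $\Tmc \cup \Amc$, one has $t^\Imc_{\Tmc,q}(a) \in T$. Rule~(1) is the base case; the inductive case for rule~(2) is immediate from the definition of $\rightsquigarrow_r$; rule~(3) is trivial. From this invariant, rule~(4) forces $a_0 \in C_0^\Imc$ in every model, while rules~(5) and~(6) exhibit an inconsistency of $\Tmc \cup \Amc$ (via the empty type set, and via a direct functionality violation under the standard name assumption, respectively), so $\Tmc,\Amc \models C_0(a_0)$ holds vacuously in those two cases.

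\textbf{Completeness direction} ($\Tmc,\Amc \models C_0(a_0) \Rightarrow \Amc \models \Pi(a_0)$): By unraveling tolerance, $\Tmc,\Amc^u \models C_0(a_0)$ (with inconsistency of $\Amc$ w.r.t.~$\Tmc$ propagating to $\Amc^u$ via the same property applied to a fresh-concept-name query). The plan is to first establish the key lemma $\Amc^u \models \Pi(a_0)$, and then transfer to $\Amc$ via the homomorphism $\mn{tail}:\mn{Ind}(\Amc^u) \to \mn{Ind}(\Amc)$. The pure rules~(1)--(5) of $\Pi$ propagate along homomorphisms by a standard induction on derivations; for rule~(6), a short case analysis using the unraveling side-condition $(b_{i-1},r_{i-1}^-) \neq (b_{i+1},r_i)$ shows that two distinct $r$-successors of some $\alpha \in \mn{Ind}(\Amc^u)$ descend from two distinct $r$-successors of $\mn{tail}(\alpha)$ in~$\Amc$, so rule~(6) also fires on~$\Amc$ whenever it fires on~$\Amc^u$.

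\textbf{Proof of the key lemma and main obstacle}: To show $\Amc^u \models \Pi(a_0)$, I would argue by contradiction. Suppose $\Amc^u \not\models \mn{goal}(a_0)$, and let $T^*(\alpha) := \bigcap\{T \mid \Amc^u \models P_T(\alpha)\}$, which is itself derivable thanks to rule~(3). Since rule~(5) does not fire, $T^*(\beta) \neq \emptyset$ for every $\beta$; since rule~(6) does not fire, $\Amc^u$ satisfies all functionality assertions of $\Tmc$; and since rule~(4) does not fire, some $t \in T^*(a_0)$ has $C_0 \notin t$. Rules~(1) and~(2) force $T^*$ to be closed under the local constraints of $\Amc^u$, namely that every $A(\alpha) \in \Amc^u$ lies in all $t \in T^*(\alpha)$ and, for every $r(\alpha,\beta) \in \Amc^u$ and $t \in T^*(\alpha)$, some $t' \in T^*(\beta)$ satisfies $t \rightsquigarrow_r t'$. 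Exploiting the forest structure of $\Amc^u$, I would propagate a type assignment $\tau$ from $\tau(a_0)=t$ along the forest edges and then realize each $\tau(\alpha)$ by grafting fresh tree-shaped witnesses for those existentials of $\tau(\alpha)$ not already witnessed by an ABox edge. This produces a model $\Imc$ of $\Tmc \cup \Amc^u$ with $a_0 \notin C_0^\Imc$, contradicting $\Tmc,\Amc^u \models C_0(a_0)$. The main obstacle is the grafting step: functional existentials must reuse the unique existing ABox successor (chosen consistently via the $\rightsquigarrow_r$ witnesses), and the unraveling side-condition is exactly what prevents forward and backward witness demands at any node of $\Amc^u$ from being forced to collide.
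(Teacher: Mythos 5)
Your proof is correct and follows essentially the same strategy as the paper's: soundness by induction on rule applications showing that every derived $P_T(a)$ constrains $t^\Imc_{\Tmc,q}(a)$ in all models, and completeness by building a countermodel of $C_0(a_0)$ over the forest-shaped $\Amc^u$ from the program-derived type sets, with unraveling tolerance bridging $\Amc$ and $\Amc^u$. The only organizational difference is that you evaluate $\Pi$ on $\Amc^u$ and transfer the goal derivation back to $\Amc$ along the $\mn{tail}$ homomorphism (correctly noting that the inequality rule~(6) also transfers thanks to the side condition $(b_{i-1},r_{i-1}^-)\neq(b_{i+1},r_i)$), whereas the paper evaluates $\Pi$ on $\Amc$ and lifts the minimal derived sets $T_a$ to $\Amc^u$ via $\mn{tail}$, which avoids that extra transfer step.
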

\begin{proof}
The ``if'' direction is straightforward: by induction on the number of
rule applications, one can show that whenever $\Pi$ derives $P_T(a)$,
then every model of $\Tmc$ and $\Amc$ satisfies
$t^\Imc_{\Tmc,q}(a) \in T$.  By definition of the goal rules of $\Pi$,
$\Amc \models \Pi(a_0)$ thus implies that every model of \Tmc and \Amc
makes $C_0(a_0)$ true or that \Amc is inconsistent
w.r.t. $\Tmc$. Consequently, $\Tmc,\Amc\models C_{0}(a_0)$.

For the ``only if'' direction, it suffices to show that
$\Amc \not\models \Pi(a_0)$ implies $\Tmc,\Amc^u \not\models C_0(a_0)$
since \Tmc is unraveling tolerant. Because of the rules in $\Pi$ of
the form (3), for every $a \in \mn{Ind}(\Amc)$ we can find a unique
minimal $T_a$ such that $P_{T_a}(a)$ is derived by $\Pi$.  Observe
that, $A(\alpha) \in \Amc^u$, $\mn{tail}(\alpha)=a$, and $t \in T_a$
implies $A \in t$ because of the rules of the form (1) in $\Pi$ and by
construction of $\Amc^u$.

We first associate with every $\alpha \in \mn{Ind}(\Amc^u)$ a concrete
$\Tmc,q$-type $t_\alpha \in T_{\mn{tail}(\alpha)}$.  To start, we
choose $t_a \in T_a$ arbitrarily for all $a \in \mn{Ind}(\Amc)$. Now
assume that $t_\alpha$ has already been chosen and that
$\beta=\alpha r b \in \mn{Ind}(\Amc^u)$. Then
$r(\mn{tail}(\alpha),b) \in \Amc$. Because of the rules in $\Pi$ of the
form (2) and (5), we can thus choose $t_\beta \in T_b$ such that
$t_\alpha \rightsquigarrow_{r} t_\beta$. In this way, all types
$t_\alpha$ will eventually be chosen.
We now construct an interpretation \Imc, starting with
$$
\begin{array}{rcl}
  \Delta^\Imc &=& \mn{Ind}(\Amc^u) \\[1mm]
  A^\Imc &=& \{  \alpha \mid A \in t_\alpha \} \text{ for all concept names } A\\[1mm]
  r^\Imc &=& \{ (\alpha,\beta) \mid r(\alpha,\beta) \in \Amc^u \} \text{ for all role names } r.
\end{array}
$$
Next, consider every $\alpha \in \mn{Ind}(\Amc^u)$ and every
$\exists r. C \in t_\alpha$ such that $\Amc^u$ does not contain an
assertion $r(\alpha,\beta)$ with $C\in t_{\beta}$. 
First assume that $\mn{func}(r)\not\in \Tmc$.
There must be a $\Tmc,q$-type $t$ such
that $t_\alpha \rightsquigarrow_r t$ and $C \in t$. Choose a model
$\Jmc_{\alpha,\exists r .C}$ of \Tmc and
$D=\midsqcap t_a \sqcap \exists r . \midsqcap t$, a
$d \in D^{\Jmc_{\alpha,\exists r .C}}$, and an
$e \in (\midsqcap t)^{\Jmc_{\alpha,\exists r .C}}$ with
$(d,e) \in r^{\Jmc_{\alpha,\exists r .C}}$. W.l.o.g., we can assume
that $\Jmc_{\alpha,\exists r .C}$ is tree-shaped with root $d$. Let
$\Jmc^-_{\alpha,\exists r .C}$ be obtained from
$\Jmc_{\alpha,\exists r .C}$ by dropping the subtree rooted at $e$.
Now disjointly add $\Jmc^-_{\alpha,\exists r .C}$ to \Imc,
additionally including $(a,d)$ in~$r^\Imc$. 
Now assume that $\mn{func}(r)\in \Tmc$.
Then, if there exists $r(\alpha,\beta)\in \Amc^{u}$, then $C\in t_{\beta}$
as otherwise we do not have $t_{\alpha}\rightsquigarrow_r t_{\beta}$. Thus, assume
there is no $r(\alpha,\beta)\in \Amc^{u}$. There must be a $\Tmc,q$-type $t$ such
that $t_\alpha \rightsquigarrow_r t$ and $C \in t$. We then have $D\in t$ for
all $\exists r.D \in t_{\alpha}$ and so construct only a single $\Jmc^-_{\alpha,\exists r .C}$ for
the role $r$ and disjointly add $\Jmc^-_{\alpha,\exists r .C}$ to \Imc,
additionally including $(a,d)$ in~$r^\Imc$. 
This finishes the
construction of \Imc. The following claim can be proved by induction
on $C$, details are omitted.
\\[2mm]
{\bf Claim.}  For all $C \in \mn{cl}(\Tmc,C_0)$ :
\begin{enumerate}[label=(\alph*)]

\item $\alpha \in C^\Imc$ iff $C \in t_\alpha$ for all $\alpha \in \mn{Ind}(\Amc^u)$ and

\item $d \in C^{\Jmc_{\alpha,\exists r .D}}$ iff $d \in C^{\Imc}$ for
  all $\Jmc_{\alpha,\exists r .D}$ and all $d \in \Delta^{\Jmc^-_{\alpha,\exists r .D}}$.

\end{enumerate}
By construction of \Imc and since $A(\alpha) \in \Amc^u$ implies
$A \in t_\alpha$, \Imc is a model of \Amc. Due to the rules in $\Pi$
that are of the form (4), Point~(a) of the claim yields
$\Imc \not\models C_0(a_0)$. Finally, we observe that \Imc is a model
of \Tmc. The concept inclusions in \Tmc are satisfied by the above
claim, since $C \sqsubseteq D \in \Tmc$ means that $C \in t$ implies
$D \in t$ for every $\Tmc,q$-type $t$, and since each
$\Jmc_{\alpha,\exists r .C}$ is a model of \Tmc. Due to the rules in
$\Pi$ that are of the form (6) and since each
$\Jmc_{\alpha,\exists r .C}$ is a model of \Tmc, all functionality
assertions in \Tmc are satisfied as well.  Summing up, we have shown
that $\Tmc,\Amc^u \not\models C_0(a_0)$, as required.
\end{proof}
Together with Theorem~\ref{equivalence}, we have established the following result.
\begin{theorem}
\label{thm:unravupperlem:hornalclem:hornalc}
Every unraveling tolerant $\mathcal{ALCFI}$-TBox is
monadic Datalog$^{\neq}$-rewritable for PEQ.
\end{theorem}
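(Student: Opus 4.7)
The theorem follows by combining the monadic Datalog$^{\neq}$ program $\Pi$ constructed just before Lemma~\ref{datalogrewr} with the ELIQ-to-PEQ lifting developed in the proof of Theorem~\ref{equivalence}.

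First, Lemma~\ref{datalogrewr} establishes that $\Pi$ is a monadic Datalog$^{\neq}$ rewriting of $(\Tmc, C_0(x))$ for every ELIQ $C_0(x)$; hence every unraveling tolerant $\mathcal{ALCFI}$-TBox is already monadic Datalog$^{\neq}$-rewritable for ELIQ. The ``if'' direction works for arbitrary $\Tmc$ by a straightforward induction on rule applications, while the ``only if'' direction uses unraveling tolerance to pass to $\Amc^u$ and then builds an explicit countermodel $\Imc$ of $(\Tmc, \Amc^u)$ with $\Imc \not\models C_0(a_0)$ out of the types $t_\alpha$ chosen from the minimal sets $T_\alpha$ derived by $\Pi$.

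To lift the rewriting from ELIQ to PEQ, I would reuse Claim~1 from the proof of Theorem~\ref{equivalence}: for materializable $\Tmc$ and any CQ $q(\vec{x})$, that claim supplies a finite set $\mathcal{Q}$ of pairs $(\varphi(\vec{x},\vec{y}), \mathcal{C})$, with $\varphi$ a quantifier-free conjunction of role atoms and equalities and $\mathcal{C}$ a finite set of ELIQs, such that $\Tmc,\Amc \models q(\vec{a})$ iff some pair in $\mathcal{Q}$ admits a matching assignment $\pi$ in $\Amc$ that additionally witnesses $\Tmc,\Amc \models C(\pi(x))$ for every $C(x) \in \mathcal{C}$. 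Plugging in, for each $C(x) \in \mathcal{C}$, the monadic Datalog$^{\neq}$ rewriting $\Pi_C$ from the first step (with disjoint IDB alphabets and a dedicated goal predicate $\mn{goal}_C$) and adding one rule per pair,
\[
\mn{goal}(\vec{x}) \;\leftarrow\; \varphi(\vec{x},\vec{y}) \wedge \bigwedge_{C(x) \in \mathcal{C}} \mn{goal}_C(x),
\]
yields a monadic Datalog$^{\neq}$ rewriting of $(\Tmc, q(\vec{x}))$. Since every PEQ is equivalent to a finite disjunction of CQs and monadic Datalog$^{\neq}$ is closed under finite unions via a shared goal head, this handles arbitrary PEQs.

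The main obstacle is Claim~1's hypothesis that $\Tmc$ is materializable. By Lemma~\ref{datalogrewr}, ELIQ-evaluation w.r.t.\ $\Tmc$ is in \PTime, and Theorem~\ref{thm:nomatlower} thus gives materializability provided \PTime $\neq$ \coNP, which already suffices via the matching direction of Theorem~\ref{equivalence}(2). For an unconditional statement, I would refine Claim~1 so as to bypass materializability by working inside an i-unfolded model of $(\Tmc, \Amc^u)$ rather than in a materialization of $(\Tmc, \Amc)$: the type-based decomposition of $q(\vec{x})$ goes through on any i-unfolded model (which exists by Lemma~\ref{lem:propertiesofunfold}), and unraveling tolerance then transports the resulting ELIQ side-conditions back from $\Amc^u$ to $\Amc$, closing the argument without appealing to a globally defined materialization.
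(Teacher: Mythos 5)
Your overall route is the same as the paper's: the type-based monadic Datalog$^{\neq}$ program $\Pi$ together with Lemma~\ref{datalogrewr} gives rewritability for ELIQs, and Claim~1 from the proof of Theorem~\ref{equivalence} (plus closure of monadic Datalog$^{\neq}$ under finite union and the equivalence of PEQs with finite disjunctions of CQs) lifts this to CQs and then to PEQs. You also correctly identify that the only obstacle is Claim~1's hypothesis that \Tmc is materializable.

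The gap is in how you propose to discharge that hypothesis unconditionally. Deriving materializability from Theorem~\ref{thm:nomatlower} only works under the assumption that \PTime $\neq$ {\sc coNP}, whereas the theorem carries no such side condition. Your fallback --- reworking Claim~1 ``inside an i-unfolded model of $(\Tmc,\Amc^u)$ rather than in a materialization'' --- does not go through: the direction (i)~$\Rightarrow$~(ii) of Claim~1, and the pumping argument of Claim~2 inside it, genuinely require the chosen model to be hom-initial. Only for a materialization does $\Imc \models C(\pi(x))$ imply $\Tmc,\Amc \models C(\pi(x))$; in an arbitrary i-unfolded model the ELIQ side conditions extracted from a match of $q$ need not be certain answers, so the decomposition becomes unsound precisely when \Tmc lacks the disjunction property. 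The paper closes this gap differently: Theorem~\ref{lem:utimpliesmat} shows, unconditionally and independently of the Datalog construction (so there is no circularity), that every unraveling tolerant $\mathcal{ALCFI}$-TBox is materializable. Its proof goes through the ABox disjunction property (Theorems~\ref{thm:disjmat} and~\ref{thm:disjproperty}): from a minimal failing disjunction $C_0(a_0) \vee \cdots \vee C_k(a_k)$ over an ABox $\Amc_\vee$ one assembles an ABox and a single ELIQ witnessing that \Tmc is not unraveling tolerant. Substituting that argument for your final paragraph makes your proof complete.
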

Together with Theorems~\ref{equivalence} and~\ref{lem:hornalc},
Theorem~\ref{thm:unravupperlem:hornalclem:hornalc} also reproves the
known \PTime upper bound for the data complexity of CQ-evaluation in
Horn-$\mathcal{ALCFI}$ \cite{conf/jelia/EiterGOS08}. Note that it is
not clear how to attain a proof of
Theorem~\ref{thm:unravupperlem:hornalclem:hornalc} via the CSP
connection established in Section~\ref{sect:dicho} since functional
roles break this connection.

\smallskip 

By Theorems~\ref{thm:nomatlower} and~\ref{thm:unravupperlem:hornalclem:hornalc}, unraveling
tolerance implies materializability unless $\PTime = \text{\sc{NP}}$.
Based on the disjunction property, this implication can also be proved
without the side condition.
\begin{theorem}
  \label{lem:utimpliesmat}
  Every unraveling tolerant $\mathcal{ALCFI}$-TBox is materializable.
\end{theorem}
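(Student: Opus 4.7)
The plan is to invoke the characterization provided by Theorem~\ref{thm:disjproperty}: materializability of $\Tmc$ is equivalent to the ABox disjunction property for ELIQs, so it suffices to show that an unraveling tolerant $\Tmc$ enjoys this property. I will argue by contradiction: suppose $\Tmc,\Amc\models C_0(a_0)\vee\cdots\vee C_k(a_k)$ with $\Amc$ consistent w.r.t.~$\Tmc$ and the $C_i$ ELIQs, while $\Tmc,\Amc\not\models C_i(a_i)$ for every~$i$. A preliminary reduction ensures the answer individuals are pairwise distinct: introduce fresh role names $r_i$ and fresh individuals $b_i$, set $\Amc'=\Amc\cup\{r_i(b_i,a_i):0\leq i\leq k\}$, and replace each $C_i(a_i)$ by the ELIQ $(\exists r_i.C_i)(b_i)$. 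Since the $r_i$ do not occur in $\Tmc$, unraveling tolerance of $\Tmc$ is preserved, the failure pattern persists in $\Amc'$, and all $b_i$ are now distinct.

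The main construction exploits that $(\Amc')^u$ decomposes as a disjoint union of tree-shaped ABoxes $(\Amc')^u_b$ indexed by $b\in\mn{Ind}(\Amc')$, since sequences beginning with different individuals share no role edges in the unraveling. For each~$i$, the converse of the trivial direction applied to $\Tmc,\Amc'\not\models(\exists r_i.C_i)(b_i)$ yields a model $\Jmc_i$ of $\Tmc$ and $(\Amc')^u$ with $\Jmc_i\not\models(\exists r_i.C_i)(b_i)$; after passing to a generated submodel, $\Jmc_i$ decomposes into components, one per root, and its $b_i$-component $\Jmc_i^{b_i}$ is a model of $\Tmc$ and $(\Amc')^u_{b_i}$ still refuting the query (because the query is localized to that component). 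Since the $b_i$ are pairwise distinct, one assembles a single model $\Jmc$ of $\Tmc,(\Amc')^u$ that simultaneously refutes every $(\exists r_i.C_i)(b_i)$: use $\Jmc_i^{b_i}$ on the $b_i$-component and any model of $\Tmc$ and $(\Amc')^u_b$ on the remaining components. Hence $\Tmc,(\Amc')^u\not\models(\exists r_0.C_0)(b_0)\vee\cdots\vee(\exists r_k.C_k)(b_k)$.

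The remaining step is to transfer this joint non-entailment back from $(\Amc')^u$ to $\Amc'$, producing a single model of $\Tmc,\Amc'$ that refutes all disjuncts and thereby contradicting $\Tmc,\Amc'\models(\exists r_0.C_0)(b_0)\vee\cdots\vee(\exists r_k.C_k)(b_k)$. This is the main obstacle, because unraveling tolerance as stated concerns only single ELIQs rather than disjunctions. The plan is to fold $\Jmc$ along the tail map $\mn{tail}:\mn{Ind}((\Amc')^u)\to\mn{Ind}(\Amc')$, exploiting the distinctness of the $b_i$ and their tree-like attachment to $\Amc$ via the fresh roles $r_i$: this should make the fold well-defined on each $b_i$-component without merging conflicting types, and unraveling tolerance applied individual-by-individual to each $(\exists r_i.C_i)(b_i)$ should guarantee that the folded interpretation is still a model of $\Tmc$ refuting every disjunct. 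Obtaining such a model yields the desired contradiction.
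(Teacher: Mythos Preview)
Your proposal has a genuine gap at exactly the point you yourself flag as the ``main obstacle.'' Having built a model $\Jmc$ of $\Tmc$ and $(\Amc')^u$ that simultaneously refutes all $(\exists r_i.C_i)(b_i)$, you need to produce a single model of $\Tmc$ and $\Amc'$ refuting all disjuncts. Your proposed remedy --- ``fold $\Jmc$ along the tail map'' --- does not work. Even within a single $b_i$-component of $(\Amc')^u$, an individual $a\in\mn{Ind}(\Amc)$ typically appears many times (once per path from $b_i$ to $a$), and $\Jmc$ may assign incompatible types to these copies; collapsing them along $\mn{tail}$ need not yield a model of~$\Tmc$ (this is precisely the phenomenon that makes unraveling tolerance nontrivial). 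Your appeal to ``unraveling tolerance applied individual-by-individual'' does not help either: for each $i$ it only yields a \emph{separate} model of $\Tmc,\Amc'$ refuting the $i$-th disjunct, which is what you started from. So the final step is circular and the argument does not close.

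The paper circumvents this obstacle entirely by never working with a disjunction of ELIQs. Instead it converts the disjunction into a \emph{single} ELIQ via a gadget: for each $j\le k$ it introduces an individual $c_j$ that is hard-wired (via fresh role edges into tree-ABoxes encoding the $C_\ell$) to already satisfy $\exists r_\ell.C_\ell$ for all $\ell\neq j$, and is linked to $a_j$ by $r_j$. A root $c$ is connected to all the $c_j$ via a fresh role $r$. One then takes the single ELIQ $q=\exists r.(\exists r_0.C_0\sqcap\cdots\sqcap\exists r_k.C_k)(x)$ and shows $\Tmc,\Amc\models q(c)$ (because some disjunct must fire in every model, completing one $c_j$) but $\Tmc,\Amc^u\not\models q(c)$ (because in the unraveling the copies of $\Amc_\vee$ hanging below the different $c_j$ are independent, so one can combine the refuting models $\Imc_i$ componentwise). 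This directly witnesses failure of unraveling tolerance without ever needing to transfer a disjunctive non-entailment back from the unraveling.
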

\begin{proof}
  We show the contrapositive using a proof strategy that is very
  similar to the second step in the proof of
  Theorem~\ref{thm:nomatlower}. Thus, take an $\mathcal{ALCFI}$-TBox
  \Tmc that is not materializable. By Theorem~\ref{thm:disjmat}, \Tmc
  does not have the disjunction property. Thus, there are an ABox
  $\Amc_\vee$, ELIQs $C_0(x_0),\dots,C_k(x_k)$, and $a_1,\dots,a_k \in
  \mn{Ind}(\Amc_\vee)$ such that $\Tmc,\Amc_\vee \models C_0(a_0) \vee
  \cdots \vee C_k(a_k)$, but $\Tmc,\Amc_\vee \not\models C_i(a_i)$ for
  all $i \leq k$. Let $\Amc_i$ be $C_i$ viewed as a tree-shaped ABox
  with root $b_i$, for all $i \leq k$. Assume w.l.o.g.\ that none of
  the ABoxes $\Amc_\vee, \Amc_0,\dots,\Amc_k$ share any individual
  names and reserve fresh individual names $c_0,\dots,c_k$ and fresh
  role names $r,r_0,\dots,r_k$. Let the ABox \Amc be the union of 
  $$
  \Amc_\vee \cup \Amc_0 \cup \cdots \cup \Amc_k \cup \{
  r(c,c_0),\dots,r(c,c_k) \} 
  $$
  and 
  $$
    \{r_0(c_j,b_0),\dots,r_{j-1}(c_j,b_{j-1}),r_j(c_j,a_j),
    r_{j+1}(c_j,b_{j+1}),\dots,r_k(c_j,b_k) \}
  $$
  for $1 \leq j \leq k$. Consider the ELIQ
  $$
  q=\exists r . (\exists r_0 . C_0 \sqcap \cdots \sqcap \exists r_k . C_k)(x).
  $$
  By the following claim, $\Amc$ and $q$ witness that \Tmc is not
  unraveling tolerant.
  \\[2mm]
  {\bf Claim.}  $\Tmc,\Amc \models q(c)$, but $\Tmc,\Amc^u \not\models
  q(c)$.
  \\[2mm]
  \emph{Proof.}  ``$\Tmc,\Amc \models q(c)$''. Take a model \Imc of
  \Tmc and \Amc. By construction of \Amc, we have
  $a_i^\Imc \in (\exists r_j . C_j)^\Imc$ whenever $i \neq j$. Due to
  the edges $r_0(c_0,a_0),\dots,r_k(c_k,a_k)$ and since
  $\Tmc,\Amc_\vee \models C_0(a_0) \vee \cdots \vee C_k(a_k)$, we thus
  find
  at least one $c_i$ such that
  $c_i^\Imc \in (\exists r_i . C_i)^\Imc$. Consequently,
  $\Imc \models q(c)$.

  \smallskip

  ``$\Tmc,\Amc^u \not\models q(c)$'' (sketch). Consider the elements
  $crc_ir_ia_i$ in $\Amc^u$. Each such element is the root of a copy
  of the unraveling $\Amc_\vee^u$ of $\Amc_\vee$, restricted to those
  individual names in $\Amc_\vee$ that are reachable from~$a_i$.  Since
  $\Tmc,\Amc_\vee \not\models C_i(a_i)$, we find a model $\Imc_i$ of
  \Tmc and $\Amc_\vee$ with $a_i \notin C_i^{\Imc_i}$. By unraveling
  $\Imc_i$, we obtain a model $\Imc^u_i$ of \Tmc and $\Amc^u_\vee$
  with $a_i \notin C_i^{\Imc^u_i}$. Combining the models
  $\Imc^u_0,\dots,\Imc^u_k$ in a suitable way, one can craft a model
  \Imc of \Tmc and $\Amc_\vee^u$ such that $crc_ir_ia_i \notin
  C_i^\Imc$ for all $i \leq k$ and the `role edges of \Imc' that
  concern the roles $r,r_0,\dots,r_k$ are exactly those in \Amc. This
  implies $\Imc \not\models q(c)$ as desired. 

  In some more detail, \Imc is obtained as follows. We can assume
  w.l.o.g.\ that the domains of $\Imc^u_0,\dots,\Imc^u_k$ are
  disjoint.  Take the disjoint union of $\Imc^u_0,\dots,\Imc^u_k$,
  renaming $a_i$ in $\Imc^u_i$ to $crc_ir_ia_i$ for all~$i$. Now take
  copies $\Jmc,\Jmc_0,\dots,\Jmc_k$ of any model of \Tmc, make sure
  that their domains are disjoint and that they are also disjoint from
  the domain of the model constructed so far. Additionally make sure
  that $c \in \Delta^\Jmc$ and $c_i \in \Delta^{\Jmc_i}$ for all
  $i$. Disjointly add these models to the model constructed so far.
  It can be verified that the model constructed up to this point is a
  model of \Tmc.  Add all role edges from \Amc that concern the roles
  $r,r_0,\dots,r_k$ to the resulting model, which has no impact on the
  satisfaction of \Tmc since $r,r_0,\dots,r_k$ do not occur in \Tmc.
  It can be verified that \Imc is as required.
\end{proof}
%


\section{Dichotomy for $\mathcal{ALCFI}$-TBoxes of Depth One}
\label{sect:depth1}

In practical applications, the concepts used in TBoxes are often of
very limited quantifier depth. Motivated by this observation, we
consider TBoxes of \emph{depth one} which are sets of CIs
$C \sqsubseteq D$ such that no restriction $\exists r . E$ or $\forall r . E$ in $C$ and $D$ is
in the scope of another restriction of the form $\exists r.E$ or $\forall r.E$.
To confirm that this is indeed a practically relevant case, we
have analyzed the 429 ontologies in the BioPortal repository,\footnote{The ontologies are available at 
https://bioportal.bioontology.org/ontologies.} finding that after removing all constructors that
are not part of $\ALCFI$, more than 80\% of them are of depth one.
The main result of this section is a dichotomy between \PTime and {\sc
  coNP} for TBoxes of depth one which is established by proving a
converse of Theorem~\ref{lem:utimpliesmat}, that is, showing that
materializability implies unraveling tolerance (and thus \PTime query
evaluation and even monadic Datalog$^{\neq}$-rewritability by
Theorem~\ref{thm:unravupperlem:hornalclem:hornalc}) for TBoxes of depth one. Together with
Theorem~\ref{thm:nomatlower}, which says that non-materializability
implies {\sc coNP}-hardness, this yields the dichotomy.

We remark that the same strategy cannot be used to obtain a dichotomy
in the case of unrestricted depth. In particular, the well-known
technique of rewriting a TBox into depth one by introducing fresh
concept names can change its complexity because it enables querying
for concepts such as $\neg A$ or $\forall r . A$ which are otherwise
`invisible' to (positive existential) queries. 
For TBoxes of unrestricted depth (and even in \ALC) it is in fact
neither the case that \PTime query evaluation implies unraveling
tolerance (or even Datalog$^{\not=}$-rewritability) nor that
materializability implies \PTime query
evaluation.  
This is formally established in Section~\ref{sect:dicho}.
\begin{theorem}
\label{thm:depthoneimpliesut}
  Every materializable $\mathcal{ALCFI}$-TBox of depth one is unraveling tolerant. 
\end{theorem}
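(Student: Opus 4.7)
The plan is to prove the contrapositive. Suppose $\Tmc,\Amc^u \not\models q(a_0)$ for some ABox $\Amc$, ELIQ $q(x)$, and $a_0 \in \mn{Ind}(\Amc)$, and take a model $\Jmc$ of $\Tmc$ and $\Amc^u$ with $\Jmc \not\models q(a_0)$. By Lemma~\ref{lem:count} and Lemma~\ref{lem:propertiesofunfold}, I may assume $\Jmc$ is countable, generated, and i-unfolded. The goal is to construct $\Imc \models \Tmc, \Amc$ with $\Imc \not\models q(a_0)$, which yields $\Tmc,\Amc \not\models q(a_0)$ as required.

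The key structural observation is the strong locality provided by depth one: whether $\Tmc$ is satisfied at an element $d$ depends only on the concept names holding at $d$ together with, for each (possibly inverse) role $r$, the set of concept-name types realized at the $r$-neighbors of $d$. Call this the \emph{1-local configuration} of $d$. My construction of $\Imc$ then proceeds by ``folding'' $\Jmc$ along the map $\mn{tail}:\mn{Ind}(\Amc^u) \to \mn{Ind}(\Amc)$: take $\Delta^\Imc = \mn{Ind}(\Amc) \cup (\Delta^\Jmc \setminus \mn{Ind}(\Amc^u))$, interpret each $a\in \mn{Ind}(\Amc)$ using the concept names at the length-zero copy of $a$ in $\Jmc$, add the ABox edges $(a,b)\in r^\Imc$ whenever $r(a,b)\in \Amc$, and retain the anonymous fragment of $\Jmc$. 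By construction $\Imc \models \Amc$, and because the anonymous subtree attached to $a_0$ is exactly the one in $\Jmc$, we obtain $\Imc \not\models q(a_0)$.

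The main obstacle is that the naive fold may break $\Tmc$ at ABox individuals: the $r$-successor $b$ of $a$ in $\Imc$ carries the type of $b$'s length-zero copy, while in $\Jmc$ the corresponding successor was $arb$, whose type may differ. To repair this, I would first surgically modify $\Jmc$: for each $\alpha\in \mn{Ind}(\Amc^u)\setminus\mn{Ind}(\Amc)$ with $\mn{tail}(\alpha)=a$, splice into $\Jmc$ at $\alpha$ a fresh isomorphic copy of the anonymous subtree of $\Jmc$ rooted at the length-zero copy of $a$, while leaving the portion of $\Jmc$ relevant to $q(a_0)$ intact. Since $\mn{tail}(\alpha)$ and $\alpha$ carry identical concept names by the definition of $\Amc^u$, the splice preserves both $\Tmc$ (by depth-one locality) and $\Amc^u$. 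After the splice, every copy of $a$ has the same 1-local configuration as $a$ itself, and the subsequent fold preserves $\Tmc$-satisfaction. This is precisely where materializability is used: by Theorem~\ref{thm:disjmat} and Theorem~\ref{lem:sem}, the materialization $\Mmc$ of $\Tmc,\Amc$ is i-sim-initial, and the i-simulation from $\Mmc$ into the modified $\Jmc$ supplies, for every ABox individual, the compatible anonymous witnesses needed to validate the 1-local configurations fixed by the splice.

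The hardest technical step is the handling of the boundary between the $a_0$-component (which must be preserved in order to falsify $q$) and the rest of the interpretation (where splicing happens), together with functionality assertions on inverse roles: each splice must agree with incoming ABox edges through depth-one-valid successor profiles, and functional constraints rule out arbitrary choices of witnesses, forcing us to respect the original $\Jmc$-witnesses along functional roles. The depth-one hypothesis is essential throughout, since without it, a local modification of one node's 1-neighborhood would propagate into higher-level quantifier restrictions of the TBox and the 1-local reasoning that underlies both the splice and the fold would no longer be sound.
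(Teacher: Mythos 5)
Your overall architecture (uniformize the tree components attached to copies of the same ABox individual, then fold along $\mn{tail}$, exploiting depth-one locality) is the same as the paper's. But there is a genuine gap at the very first step, and it is exactly the point where materializability must enter.

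You start from an \emph{arbitrary} countermodel $\Jmc$ of $\Tmc$ and $\Amc^u$ and assert that the splice is sound because ``$\mn{tail}(\alpha)$ and $\alpha$ carry identical concept names by the definition of $\Amc^u$.'' The definition of $\Amc^u$ only guarantees that $\alpha$ and $\mn{tail}(\alpha)$ satisfy the same \emph{asserted} concept memberships; in an arbitrary model the interpretation of concept names at the two elements can differ completely (e.g.\ if $\Tmc$ forces $A_1\sqcup A_2$ everywhere, $\Jmc$ may choose $A_1$ at $\alpha$ and $A_2$ at the length-zero copy of $a$). Then splicing the anonymous subtree of $a$ onto $\alpha$ breaks $\Tmc$ at $\alpha$ (the witnesses fit $A_2$, not $A_1$), and the subsequent fold has no consistent type to assign to $a$. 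Your later appeal to materializability --- an i-simulation from the materialization of $\Tmc,\Amc$ into the modified $\Jmc$ supplying ``compatible anonymous witnesses'' --- cannot repair this, because the disagreement sits at the named elements themselves, not in the anonymous parts. The paper resolves this by taking $\Jmc$ to be a \emph{materialization} of $\Tmc$ and $\Amc^u$ from the outset (after first arguing, via the disjunction property, that materializability extends to the infinite ABox $\Amc^u$): in a materialization, $\alpha\in A^{\Jmc}$ iff $\Tmc,\Amc^u\models A(\alpha)$, and certain answers are invariant under the tail-preserving automorphisms of $\Amc^u$, which is precisely what forces all copies of $a$ to agree on Boolean combinations of concept names and makes the uniformization sound. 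This is the one place where the hypothesis of materializability does real work, and your proof does not use it there.

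A secondary weakness: ``leaving the portion of $\Jmc$ relevant to $q(a_0)$ intact'' does not suffice to conclude $\Imc\not\models q(a_0)$ after folding, since the fold creates new paths through $\mn{Ind}(\Amc)$ and a match of the ELIQ may use them even though no match existed in $\Jmc$. The paper needs a separate argument here: take a hypothetical match in the uniformized model, decompose it into a quantifier-free part over $\mn{Ind}(\Amc^u)$ plus ELIQs rooted at ABox individuals, and pull it back using the fact that all tree components over copies of the same individual are identical and that ELIQ-consequences at $\alpha$ and $\mn{tail}(\alpha)$ coincide. You would need to supply this step as well.
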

\begin{proof} Let \Tmc be a materializable TBox of depth~one, \Amc an
  ABox, $C_0(x)$ an ELIQ, and $a_0 \in \mn{Ind}(\Amc)$ such that
  $\Tmc,\Amc^u \not \models C_0(a_0)$. We have to show that
  $\Tmc,\Amc \not\models C_0(a_0)$. It follows from
  $\Tmc,\Amc^u \not \models C_0(a_0)$ that $\Amc^u$ is consistent
  w.r.t.~\Tmc. There must thus be a materialization $\Imc^u$ for \Tmc
  and $\Amc^u$, despite the fact that $\Amc^u$ is infinite: by
  Theorem~\ref{thm:depthoneimpliesut}, \Tmc has the disjunction
  property and the argument from the proof of
  Theorem~\ref{thm:depthoneimpliesut} that the disjunction property
  implies materializability goes through without modification also for
  infinite ABoxes.  Our aim is to turn $\Imc^u$ into a model \Imc of
  \Amc and \Tmc such that $\Imc \not \models C_0(a_0)$. To achieve
  this, we first uniformize $\Imc^u$ in a suitable way. 

  We assume w.l.o.g.\ that $\Imc^u$ has forest-shape, i.e., that
  $\Imc^u$ can be constructed by selecting a tree-shaped
  interpretation $\Imc_\alpha$ with root $\alpha$ for each $\alpha \in
  \mn{Ind}(\Amc^u)$, then taking the disjoint union of all these
  interpretations, and finally adding role edges $(\alpha,\beta)$ to
  $r^{\Imc^u}$ whenever $r(\alpha,\beta) \in \Amc^u$. In fact, to
  achieve the desired shape we can take the i-unfolding of $\Imc^u$ defined
  and analysed in Lemmas~\ref{lem:propertiesofunfold} and~\ref{lem:sem}, where we start
  the i-unfolding from the elements of $\mn{Ind}(\Amc^u) \subseteq \Delta^{\Imc^u}$.

  We start with exhibiting a self-similarity inside the unraveled ABox
  $\Amc^u$ and inside $\Imc^u$.
  \\[2mm]
  {\bf Claim~1}.  For all $\alpha, \beta \in \mn{Ind}(\Amc^u)$ with
    $\mn{tail}(\alpha)=\mn{tail}(\beta)$,
  \begin{enumerate}

  \item 
    $\Amc^u \models C(\alpha)$ iff $\Amc^u \models C(\beta)$
    for all ELIQs $C(x)$;

  \item $\alpha \in C^{\Imc^u}$ iff $\beta \in C^{\Imc^u}$ for all
    concepts $C$ built only from concept names, $\neg$, and $\sqcap$.

  \end{enumerate}
  To establish Point~(1), take 
  $\alpha, \beta \in \mn{Ind}(\Amc^u)$ such that
  $\mn{tail}(\alpha)=\mn{tail}(\beta)$ and
  $\Amc^u \not\models C(\alpha)$. Then there is a model \Imc of
  $\Amc^u$ and \Tmc such that $\Imc \not\models C(\alpha)$. One can find
  a model \Jmc of $\Amc^u$ and \Tmc such that
  $\Jmc \not\models C(\beta)$, as follows. By construction of
  $\Amc^u$, there is an isomorphism $\iota: \mn{Ind}(\Amc^u) \rightarrow
  \mn{Ind}(\Amc^u)$ with $\iota(\alpha)=\beta$ such that 
$A(\gamma) \in \Amc^u$ iff $A(\iota(\gamma)) \in
    \Amc^u$ and
%
$r(\gamma,\gamma') \in \Amc^u$ iff
    $r(\iota(\gamma),\iota(\gamma')) \in \Amc^u$
%
  %
    for all $\gamma \in \mn{Ind}(\Amc^u)$, all concept names $A$, and
    all role names $r$. We obtain $\Jmc$ from \Imc by
    renaming each $\gamma \in \mn{Ind}(\Amc^u)$ with $\iota(\gamma)$.
%
    Point~(2) can be proved by a straightforward induction on $C$. The
    base case uses Point~(1) and the fact that $\Imc^u$ is a
    materialization of \Tmc and \Amc.  This finishes the proof of
    Claim~1.

%
%
%

  \smallskip

  Now for the announced uniformization of $\Imc^u$. What we want to
  achieve is that for all $\alpha,\beta \in \mn{Ind}(\Amc^u)$,
  $\mn{tail}(\alpha)=\mn{tail}(\beta)$ implies $\Imc_\alpha =
  \Imc_\beta$ (recall that $\Imc_\alpha$ is the tree component of
  $\Imc^u$ rooted at $\alpha$, and likewise for
  $\Imc_\beta$). Construct the interpretation $\Jmc^u$ as follows:
  \begin{itemize}

  \item for each $\alpha \in \mn{Ind}(\Amc^u)$ with
    $\mn{tail}(\alpha)=a$, take a copy $\Jmc_\alpha$ of $\Imc_a$
    with the root $a$ renamed to~$\alpha$;

  \item then $\Jmc^u$ is the disjoint union of all interpretations
    $\Jmc_\alpha$, $\alpha \in \mn{Ind}(\Amc^u)$,  extended with a role
    edge $(\alpha,\beta) \in r^{\Jmc^u}$ whenever $r(\alpha,\beta)
    \in \Amc^u$.

  \end{itemize}
  Our next aim is to show that $\Jmc^u$ is as required, that is, it is
  a model of $\Tmc$ and $\Amc^u$ and satisfies
  $\Jmc^u \not\models C_0(a_0)$.

  It is indeed straightforward to verify that $\Jmc^u$ is a model of
  $\Amc^u$: all role assertions are satisfied by construction; moreover, $A(\alpha) \in \Amc^u$ implies $A(a) \in \Amc^u$
  where $a=\mn{tail}(\alpha)$ , thus $a \in A^{\Imc_u}$ and
  $\alpha \in A^{\Jmc_u}$.

  Next, we show that $\Jmc^u$ is a model of \Tmc. Let
  $f:\Delta^{\Jmc^u} \rightarrow \Delta^{\Imc^u}$ be a mapping that
  assigns to each domain element of $\Jmc^u$ the original element in
  $\Imc^u$ of which it is a copy.
  \\[2mm]
  {\bf Claim 2}. 
$d \in C^{\Jmc^u}$ iff $f(d) \in C^{\Imc^u}$ for all $d \in \Delta^{ \Jmc^u}$ and \ALCI-concepts
  $C$ of depth one.
  \\[2mm]
    The proof of claim~2 is by induction on the structure of $C$. We
  assume w.l.o.g.\ that $C$ is built only from the constructors
  $\neg$, $\sqcap$, and $\exists r . C$. The base case, where $C$ is a
  concept name, is an immediate consequence of the definition of
  $\Jmc^u$. The case where $C= \neg D$ and $C=D_1 \sqcap D_2$ is
  routine. It thus remains to consider the case $C=\exists r . D$,
  where $D$ is built from $\neg$ and $\sqcap$ only.



  First let $d \in C^{\Jmc^u}$.  Then there is a
  $(d,e) \in r^{\Jmc^u}$ with $e \in D^{\Jmc^u}$. First assume that
  the edge $(d,e)$ was added to $r^{\Jmc^u}$ because $d=\alpha$ and
  $e=\beta$ for some $\alpha,\beta \in \mn{Ind}(\Amc^u)$ with
  $r(\alpha,\beta) \in \Amc^u$. Let $\mn{tail}(\alpha)=a$ and
  $\mn{tail}(\beta)=b$.  Then we have $f(\alpha)=a$ and
  $f(\beta)=b$. By construction of $\Amc^u$,
  $r(\alpha,\beta) \in \Amc^u$ implies that $\beta = \alpha r b$ or
  $\alpha = \beta r^- a$.  In both cases we have $r(a,b) \in \Amc$,
  thus $r(a,arb) \in \Amc^u$, thus $(a,arb) \in r^{\Imc^u}$. Since
  $\beta=e \in D^{\Jmc^u}$, induction hypothesis yields that $b \in D^{\Imc^u}$. From
  Point~(2) of Claim~1, we obtain $arb \in D^{\Imc^u}$ and we are
  done. Now assume that there is an $\alpha \in \mn{Ind}(\Amc^u)$ such
  that $(d,e) \in \Jmc_\alpha$. By construction of $\Jmc^u$, we then
  have $(f(d),f(e)) \in r^{\Imc^u}$ and induction hypothesis yields
  $f(e) \in D^{\Imc^u}$.
  
  Now let $f(d) \in C^{\Imc^u}$. Then there is an
  $(f(d),e) \in r^{\Imc^u}$ with $e \in D^{\Imc^u}$. First assume that
  $f(d)=\alpha$ and $e=\beta$ for some
  $\alpha,\beta \in \mn{Ind}(\Amc^u)$ with
  $r(\alpha,\beta) \in \Amc^u$. Since $f(d) \in \mn{Ind}(\Amc^u)$, we
  must have $d = \gamma \in \mn{Ind}(\Amc^u)$ and
  $f(d)=a \in \mn{Ind}(\Amc)$ with $\mn{tail}(\gamma)=a$. By
  construction of $\Amc^u$, $r(a,\beta) \in \Amc^u$ implies that
  $\beta = a r b$, thus $r(a,b) \in \Amc$, thus
  $r(\gamma,\delta) \in \Amc^u$ with (i)~$\delta=\gamma r b$ or
  (ii)~$\gamma=\delta r^- a$ and $\mn{tail}(\delta)=b$. Since
  $arb=e \in D^{\Imc^u}$, Point~(2) of Claim~1 yields
  $b \in D^{\Imc_u}$. Since $\mn{tail}(\delta)=b$ implies
  $f(\delta)=b$, induction hypothesis yields $\delta \in D^{\Jmc^u}$
  and we are done. Now assume that there is an
  $\alpha \in \mn{Ind}(\Amc^u)$ such that $(f(d),e) \in
  \Imc_\alpha$.
  By construction of $\Jmc^u$, $f(d)$ being in $\Imc_\alpha$ implies
  that $\alpha = a$ for some $a \in \mn{Ind}(\Amc)$ and that there is
  an $\alpha' \in \mn{Ind}(\Amc^u)$ such that $d$ is in
  $\Jmc_{\alpha'}$ and $\mn{tail}(\alpha')=a$. Again by construction
  of $\Jmc^u$, we thus find an $e'$ in $\Jmc_{\alpha'}$ with $f(e')=e$
  and $(d,e') \in r^{\Jmc_{\alpha'}} \subseteq r^{\Jmc^u}$. Induction
  hypothesis yields $e' \in D^{\Jmc^u}$. This finishes the proof of
  Claim~2.

  \smallskip It follows from Claim~2 that $\Jmc^u$ satisfies all CIs
  in \Tmc. To show that $\Jmc^u$ is a model of \Tmc, it remains to
  show that $\Jmc^u$ satisfies all functionality assertions. Thus, let
  $\mn{func}(r) \in \Tmc$ and $d \in \Delta^{\Jmc^u}$. If
  $d \notin \mn{Ind}(\Amc^u)$, then it is straightforward to verify
  that, by construction of $\Jmc^u$, $d$ has at most one $r$-successor
  in $\Jmc^u$. Now assume $d=\alpha \in \mn{Ind}(\Amc^u)$ and let
  $\mn{tail}(\alpha)=a$. By construction of $\Jmc^u$ and $\Amc^u$,
  $\alpha$ has the same number of $r$-successors in $\Jmc^u$ as $a$ in
  $\Imc^u$. Since $\Imc^u$ satisfies $\mn{func}(r)$, $\alpha$ can have
  at most one $r$-successor in~$\Jmc^u$.

\smallskip

The final condition that $\Jmc^u$ should satisfy is
$\Jmc^u \not\models C_0(a_0)$. Assume to the contrary. We view
$C_0(x_0)$ as a (tree-shaped) CQ. Take a homomorphism $h$ from
$C_0(x_0)$ to $\Jmc^u$ with $h(x_0)=a_0$. (In this proof we consider homomorphisms that do not have to preserve
any individual names.) Let the CQ $q(x)$ be obtained from $C_0(x_0)$ by identifying variables
$y_1,y_2$ whenever $h(y_1)=h(y_2)$. To achieve a contradiction, it
suffices to exhibit a homomorphism $h'$ from $q(x_0)$ to $\Imc^u$ with
$h'(x_0)=a_0$.  We start with setting $h'(x)=h(x)$ whenever
$h(x) \in \mn{Ind}(\Amc^u)$. Let $q'$ be obtained from $q(x_0)$ by
dropping all role atoms $r(x,y)$ with $h'(x)$ and $h'(y)$ already
defined (which are satisfied under $h'$ by construction of $\Jmc^u$
and since $\Imc^u$ is a model of \Amc). Because of the forest shape of
$\Jmc^u$ and by construction, $q'$ is a disjoint union of ELIQs such
that, in each ELIQ $C(x)$ contained in $q'$, $h'$ is defined for the
root $x$ of $C(x)$, but not for any other variable in
it. Consequently, it suffices to show that whenever
$\Jmc_\alpha \models C(\alpha)$ for some ELIQ $C(x)$ and
$\alpha \in \mn{Ind}(\Amc^u)$, then $\Imc^u \models C(\alpha)$; the
remaining part of $h'$ can then be constructed in a straightforward
way. Now $\Jmc_\alpha \models C(\alpha)$ implies $\Imc_a \models C(a)$
where $a=\mn{tail}(\alpha)$ by choice of $\Jmc_\alpha$, which yields
$\Imc^u \models C(a)$ and thus $\Imc^u \models C(\alpha)$ by Point~(1)
of Claim~1.

  \medskip This finishes the construction and analysis of the uniform
  model $\Jmc^u$. It remains to convert $\Jmc^u$ into a model \Imc of
  \Tmc and the original ABox \Amc such that $\Imc \not\models C_{0}(a_{0})$:
  \begin{itemize}

  \item take the disjoint union of the components $\Jmc_{a}$ of
    $\Jmc^u$, for each $a \in \mn{Ind}(\Amc)$;
    

  \item add the edge $(a,b)$ to $r^{\Imc}$ whenever $r(a,b)
    \in \Amc$.

  \end{itemize}
  It is straightforward to verify that \Imc is a model of \Amc: all
  role assertions are satisfied by construction of \Imc; moreover,
  $A(a) \in \Amc$ implies $A(a) \in \Amc^u$ implies $a \in A^{\Jmc^u}$
  implies $a \in A^\Imc$.  To show that \Imc is a model of \Tmc and
  that $\Imc \not\models C_0(a_0)$, we first observe the following.  A
  \emph{bisimulation between interpretations $\Imc_1$ and
$\Imc_2$} is a relation
  $S \subseteq \Delta^{\Imc_1} \times \Delta^{\Imc_2}$ such that
%
%
%
\begin{enumerate}
\item for all $A\in \NC$ and $(d_{1},d_{2})\in S$: $d_{1}\in
  A^{\Imc_{1}}$ iff $d_{2}\in A^{\Imc_{2}}$; 
\item for all $r\in \NR \cup \{s^- \mid s \in \NR\}$: if $(d_{1},d_{2})\in S$ and $(d_{1},d_{1}')\in r^{\Imc_{1}}$,
then there exists $d_{2}' \in \Delta^{\Imc_{2}}$ such that $(d_{1}',d_{2}')\in S$ and 
$(d_{2},d_{2}')\in r^{\Imc_{2}}$;
\item for all $r\in \NR \cup \{s^- \mid s \in \NR\}$: if $(d_{1},d_{2})\in S$ and $(d_{2},d_{2}')\in r^{\Imc_{2}}$,
then there exists $d_{1}' \in \Delta^{\Imc_{1}}$ such that $(d_{1}',d_{2}')\in S$ and 
$(d_{1},d_{1}')\in r^{\Imc_{2}}$.
\end{enumerate}
Recall that, whenever there is a bisimulation $S$ between $\Imc_1$ and
$\Imc_2$ with $(d,e) \in S$, then $d \in C^{\Imc_1}$ iff
$e \in C^{\Imc_2}$ for all \ALCI-concepts $C$ \cite{goranko20075,TBoxpaper}.
\\[2mm]
{\bf Claim 3}. There is a bisimulation $S$ between $\Jmc^u$ and
$\Imc$ such that $(a,a) \in S$ for all $a \in \mn{Ind}(\Amc)$.
\\[2mm]
Since $\Jmc^u$ is uniform in the sense that $\Jmc_\alpha$ is
isomorphic to $\Jmc_\beta$ whenever
$\mn{tail}(\alpha)=\mn{tail}(\beta)$, we find a bisimulation between
$\Jmc_\alpha$ and $\Jmc_a$ whenever $\mn{tail}(\alpha)=a$. It can be
verified that the union of all these bisimulations qualifies as the
desired bisimulation $S$ for Claim~3.  Thus, Claim~3 is proved.

\smallskip

It follows from Claim~3 that $\Imc$ satisfies all concept inclusions
in \Tmc, and that $\Imc \not\models C_0(a_0)$. It thus remains to
verify that \Imc also satisfies all functionality assertions in
\Tmc. This can be done in the same way in which we have verified
that $\Jmc^u$ satisfies all those assertions.
\end{proof}

The desired dichotomy follows: If an $\mathcal{ALCFI}$-TBox \Tmc of
depth one is materializable, then PEQ-evaluation w.r.t.\ \Tmc is in
{\sc PTime} and monadic Datalog$^{\neq}$-rewritable by
Theorems~\ref{thm:depthoneimpliesut}
and~\ref{thm:unravupperlem:hornalclem:hornalc}. Otherwise, ELIQ-evaluation w.r.t.\ \Tmc is
{\sc coNP}-complete by  Theorem~\ref{thm:nomatlower}.
\begin{theorem}[Dichotomy]
\label{thm:depthonedicho}
  For every $\mathcal{\ALCFI}$-TBox \Tmc of depth one, one of the
  following is true:
  \begin{itemize}

  \item \Qmc-evaluation w.r.t.\ \Tmc is in \PTime for any $\Qmc \in \{
    $PEQ$, $CQ$, $ELIQ$ \}$ (and monadic Datalog$^{\neq}$-rewritable);

  \item \Qmc-evaluation w.r.t.\ \Tmc is {\sc coNP}-complete for any
    $\Qmc \in \{ $PEQ$, $CQ$, $ELIQ$ \}$.

  \end{itemize}
\end{theorem}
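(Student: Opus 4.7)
The plan is to derive the dichotomy directly from the results accumulated earlier in the paper by splitting on whether the given depth-one TBox $\Tmc$ is materializable. Note that the two cases of the dichotomy are indeed mutually exclusive assuming $\PTime \neq \text{\sc coNP}$, so the argument really amounts to identifying a structural criterion that separates them and showing it coincides with materializability.

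First I would handle the tractable case. Assume $\Tmc$ is materializable. Since $\Tmc$ has depth one, Theorem~\ref{thm:depthoneimpliesut} applies and yields that $\Tmc$ is unraveling tolerant. Feeding this into Theorem~\ref{thm:unravupperlem:hornalclem:hornalc} gives that $\Tmc$ is monadic Datalog$^{\neq}$-rewritable for PEQ, which in particular places PEQ-evaluation w.r.t.~$\Tmc$ in \PTime. The equivalence between the query languages then comes from Theorem~\ref{equivalence}: PEQ-, CQ-, and ELIQ-evaluation w.r.t.\ $\Tmc$ all have the same status with respect to being in \PTime and with respect to (monadic) Datalog$^{\neq}$-rewritability. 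This handles the first bullet of the theorem.

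Next I would handle the hard case. Assume $\Tmc$ is not materializable. Then Theorem~\ref{thm:nomatlower} gives \textsc{coNP}-hardness of ELIQ-evaluation w.r.t.\ $\Tmc$, and the equivalence established by Theorem~\ref{equivalence} propagates \textsc{coNP}-hardness to CQ- and PEQ-evaluation as well. The matching \textsc{coNP} upper bound is a standard, well-known consequence of the data complexity of $\mathcal{ALCFI}$ for PEQs (cf.\ the upper bounds cited in the related work section, e.g., \cite{jair2008g}), which of course transfers to the syntactic subclasses CQ and ELIQ. This yields \textsc{coNP}-completeness uniformly across all three query languages, giving the second bullet.

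Finally, to present the dichotomy as a genuine two-way split, I would observe that materializability is well-defined for $\Tmc$ independently of the query language by Theorem~\ref{thm:disjmat}, so the case distinction is unambiguous, and the two cases exhaust all possibilities. No new technical obstacle arises at this stage; the theorem is a direct synthesis of Theorems~\ref{thm:nomatlower}, \ref{equivalence}, \ref{thm:depthoneimpliesut}, and \ref{thm:unravupperlem:hornalclem:hornalc}, with all the real work having been done in the proof of Theorem~\ref{thm:depthoneimpliesut} (the converse of Theorem~\ref{lem:utimpliesmat} in the depth-one setting), which is the only place where depth one is genuinely used.
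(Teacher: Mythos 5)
Your proposal is correct and follows essentially the same route as the paper: the paper also derives the dichotomy by splitting on materializability, combining Theorem~\ref{thm:depthoneimpliesut} with Theorem~\ref{thm:unravupperlem:hornalclem:hornalc} for the tractable case and Theorem~\ref{thm:nomatlower} (plus the known \textsc{coNP} upper bound for $\mathcal{SHIQ}$) for the hard case. You are slightly more explicit than the paper in invoking Theorem~\ref{equivalence} to spread the result over the three query languages, though for the directions actually needed here (hardness up the inclusion ELIQ $\subseteq$ CQ $\subseteq$ PEQ, and rewritability down it) the transfer is already immediate.
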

For example of depth one TBoxes for which query evaluation is in
\PTime and for which it is {\sc coNP}-hard, please see
Example~\ref{ex5}; there, cases~(1) and~(2) are materializable and
thus in \PTime while case~(3) is not materializable and thus {\sc
  coNP}-hard.

%
\section{Query Evaluation in $\mathcal{ALC}/\mathcal{ALCI}$ = CSP}
\label{sect:dicho}

We drop functional roles and consider TBoxes formulated in \ALC and in
\ALCI showing that query evaluation w.r.t.\ TBoxes from these classes
has the same computational power as non-uniform CSPs, in the following
sense:
\begin{enumerate}

\item for every OMQ $(\Tmc,q)$ with $\Tmc$ an \ALCI-TBox and $q$ an 
  ELIQ, there is a CSP such that the complement of the CSP and the 
  query evaluation problem for the $(\Tmc,q)$ are 
  reducible to each other in polynomial time;

\item for every CSP, there is an \ALC-TBox $\Tmc$ such that the CSP is
  equivalent to the complement of evaluating an OMQ
  $(\Tmc,\exists x \, M(x))$ and, conversely, for \emph{every} OMQ
  $(\Tmc,q)$ query evaluation can be reduced in polynomial time to the
  CSP's complement.

\end{enumerate}
This result has many interesting consequences. In particular, the
\PTime/{\sc NP}-dichotomy for non-uniform CSPs~\cite{Dicho1,Dicho2},
formerly also known as the Feder-Vardi conjecture, yields a {\sc
  PTime}/{\sc coNP}-dichotomy for query evaluation w.r.t.\
$\mathcal{ALC}$-TBoxes (equivalently: w.r.t.\
$\mathcal{ALCI}$-TBoxes).
%
%
%
Remarkably, all this is true already for \emph{materializable} TBoxes. By Theorem~\ref{equivalence}
and since we carefully choose the appropriate query language in each
technical result below, it furthermore holds for any of the query languages
ELIQ, CQ, and PEQ (and ELQ for \ALC-TBoxes).

\smallskip

We begin by introducing CSPs. Since every CSP is equivalent to a CSP
with a single predicate that is binary, up to polynomial time
reductions \cite{DBLP:journals/siamcomp/FederV98}, we consider CSPs
over unary and binary predicates (concept names and role names) only.
A \emph{signature} $\Sigma$ is a finite set of concept and role
names. We use $\mn{sig}(\Tmc)$ to denote the set of all concept and
role names that occur in the TBox \Tmc.  An ABox $\Amc$ is a
\emph{$\Sigma$-ABox} if all concept and role names in $\Amc$ are in
$\Sigma$. Moreover, we write $\Amc|_{\Sigma}$ to denote the
restriction of an ABox \Amc to the assertions that use a symbol from
$\Sigma$.
For two finite $\Sigma$-ABoxes $\Amc$ and $\Bmc$, we write $\Amc
\rightarrow \Bmc$ if there is a
homomorphism from $\mathcal{A}$ to $\mathcal{B}$ 
that does not have to preserve any individual names.
A $\Sigma$-ABox $\Bmc$ gives rise to the following
(non-uniform) constraint satisfaction problem ${\sf CSP}(\mathcal{B})$: given a finite $\Sigma$-ABox
  $\mathcal{A}$, decide whether $\mathcal{A} \rightarrow \mathcal{B}$.
$\Bmc$ is called the \emph{template} of ${\sf CSP}(\Bmc)$.
Many problems in {\sc NP} can be given in the form ${\sf
  CSP}(\mathcal{B})$. For example,
$k$-colorability 
is ${\sf CSP}(\Cmc_{k})$, where $\Cmc_{k}$ is the $\{r\}$-ABox
that contains $r(i,j)$ for all $1\leq i\not= j \leq k$.

We now formulate and prove Points (1) and (2) from above, starting
with (1).  The following is proved in
\cite{DBLP:journals/tods/BienvenuCLW14}.
\begin{theorem}\label{thm:tods}
For every $\mathcal{ALCI}$-TBox $\Tmc$ and ELIQ $C(x)$, one can compute a template $\Bmc$ in exponential time such that
the query evaluation problem for the OMQ $(\Tmc,C(x))$ and the complement of
${\sf CSP}(\Bmc)$ are polynomial time
reducible to each other.
\end{theorem}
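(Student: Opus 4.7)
The plan is to build $\Bmc$ as a ``type template'' in the spirit of the monadic Datalog construction from the proof of Lemma~\ref{datalogrewr}. Extend the signature by a fresh unary marker $P$, set $\Sigma = \mn{sig}(\Tmc) \cup \{P\}$, and take the individual names of $\Bmc$ to be pairs $(t,\epsilon)$ with $t$ a $\Tmc,C$-type (as in the proof of Lemma~\ref{datalogrewr}) and $\epsilon \in \{0,1\}$, keeping only those $(t,1)$ for which $C \notin t$. The assertions are $A((t,\epsilon)) \in \Bmc$ whenever $A \in t$, $P((t,1)) \in \Bmc$, and $r((t,\epsilon),(t',\epsilon')) \in \Bmc$ whenever $t \rightsquigarrow_r t'$, for every role $r$ from $\Tmc$. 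Since there are at most exponentially many types in $|\Tmc|+|C|$ and the $\rightsquigarrow_r$ relation is computable, $\Bmc$ can be produced in exponential time.

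For the reduction from $(\Tmc,C(x))$-evaluation to the complement of ${\sf CSP}(\Bmc)$, on input $(\Amc,a)$ I would form $\Amc^a := \Amc \cup \{P(a)\}$ and prove the key equivalence $\Tmc,\Amc \not\models C(a)$ iff $\Amc^a \to \Bmc$. The forward direction sends each $b$ in a countermodel $\Imc$ to the pair whose first component is $t^\Imc_{\Tmc,C}(b)$ and whose second component is $1$ iff $b=a$; this is well-defined because $a \notin C^\Imc$ in a countermodel, and it is a homomorphism by the very definitions of the assertions of $\Bmc$ and of $\rightsquigarrow_r$. The backward direction reconstructs a countermodel from a homomorphism $h$ by attaching to each $b$ a tree-shaped realizer of the type given by the first component of $h(b)$ and wiring the realizers together along the edges of $\Amc$, mirroring the construction in the second half of the proof of Lemma~\ref{datalogrewr}.

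The reverse direction---from the complement of ${\sf CSP}(\Bmc)$ to query evaluation---is the main obstacle. Given a $\Sigma$-ABox $\Amc$ with $P$-marked individuals $b_1,\dots,b_k$ and restriction $\Amc^-$ to $\mn{sig}(\Tmc)$, the definition of $\Bmc$ makes $\Amc \to \Bmc$ equivalent to the existence of a model of $\Tmc$ and $\Amc^-$ in which no $b_i$ lies in $C$, that is, to $\Tmc,\Amc^- \not\models C(b_1) \vee \cdots \vee C(b_k)$. My plan is to handle this first as a polynomial-time Turing reduction, probing $\Tmc,\Amc^- \models C(b_i)$ for each $i$ separately and combining the outcomes. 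Collapsing this to a many-one reduction---which is needed if ``polynomial time reducible'' is to be read strictly---requires attaching to $\Amc^-$ a small gadget that funnels the disjunction into a single ELIQ, in a spirit similar to the ``disjunction-to-conjunction'' gadget used in the second step of the proof of Theorem~\ref{thm:nomatlower}. Establishing correctness of that gadget, rather than the construction of $\Bmc$ itself, is the most delicate part of the proof; once this is done, polynomial-time interreducibility is immediate from the fact that $\Bmc$ depends only on $\Tmc$ and $C$, and not on the input ABox.
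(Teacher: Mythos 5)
Your template and the direction from evaluating $(\Tmc,C(x))$ to the complement of ${\sf CSP}(\Bmc)$ are essentially sound for $\mathcal{ALCI}$: in the absence of functionality, a type assignment that respects $\rightsquigarrow_r$ along the ABox edges does extend to a model of \Tmc and \Amc directly (no unraveling needed), so your key equivalence $\Tmc,\Amc\not\models C(a)$ iff $\Amc|_\Sigma\cup\{P(a)\}\rightarrow\Bmc$ holds. This matches the construction the paper actually spells out, but note that the paper deliberately restricts that construction (Theorem~\ref{thm:red2}) to \emph{Boolean} ELIQs, where no marker is needed and the two problems are literal complements of one another; for the general statement it does not give a self-contained proof at all but cites \cite{DBLP:journals/tods/BienvenuCLW14}, whose route handles the answer variable via \emph{generalized CSPs with constants} (a finite set of templates plus a constant naming the distinguished individual) together with a separate, nontrivial theorem that such CSPs are mutually polynomial-time reducible with standard CSPs.

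The gap in your proposal is the reverse reduction, and it is precisely the point this extra machinery exists to address. An arbitrary instance \Amc of ${\sf CSP}(\Bmc)$ may carry the marker $P$ on individuals $b_1,\dots,b_k$ with $k\neq 1$, and you correctly observe that $\Amc\rightarrow\Bmc$ is then equivalent to $\Tmc,\Amc^-\not\models C(b_1)\vee\cdots\vee C(b_k)$. But your Turing reduction that probes $\Tmc,\Amc^-\models C(b_i)$ for each $i$ and combines the outcomes decides the wrong predicate: entailment of a disjunction is not the disjunction of the entailments. By Theorem~\ref{thm:disjproperty} the two coincide on all ABoxes exactly when \Tmc has the ABox disjunction property, i.e., is materializable; for non-materializable \Tmc---the cases where Theorem~\ref{thm:tods} does real work---there are instances where the disjunction is entailed while no single disjunct is, so your reduction returns the wrong answer regardless of whether ``reducible'' is read as many-one or Turing. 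The fallback gadget you gesture at does not apply either: in the proof of Theorem~\ref{thm:nomatlower} the disjunction is folded into a \emph{new} query $\exists h.(\exists r_1.C_1\sqcap\cdots\sqcap\exists r_k.C_k)$, which is legitimate there because one is free to choose the query when proving hardness, whereas here the target is evaluation of the \emph{fixed} OMQ $(\Tmc,C(x))$ and $k$ varies with the input instance (the same difficulty already appears for $k=0$, where $\Amc\rightarrow\Bmc$ degenerates to consistency of $\Amc^-$ w.r.t.\ \Tmc). Because your template interprets $P$ by many elements $(t,1)$, you also cannot collapse the marked individuals by quotienting, which is the device that makes the constants-based route work. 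As written, the reverse reduction therefore does not go through, and closing it requires either importing the generalized-CSP-with-constants theorem or a genuinely new idea.
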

The proof of Theorem~\ref{thm:tods} given in
\cite{DBLP:journals/tods/BienvenuCLW14} proceeds in two steps. To deal
with answer variables, it uses \emph{generalized CSPs with constants},
defined by a finite set of templates (instead of a single one) and
admitting the inclusion of constant symbols in the signature of the
CSP (instead of only relation symbols). One shows that (i) for every
OMQ $(\Tmc,C(x))$, one can construct a generalized CSP with a single
constant whose complement is mutually reducible in polynomial time with the
query evaluation problem for $(\Tmc,C(x))$ and (ii) every generalized
CSP with constants is mutually reducible in polynomial time with some
standard CSP.  For the reader's convenience, we illustrate the
construction of the template from a given OMQ, concentrating on
\emph{Boolean ELIQs} which are of the form $\exists x \, C(x)$ with
$C(x)$ an ELIQ. In this special case, one can avoid the use of generalized
CSPs with constants.

\begin{theorem}\label{thm:red2}
  Let $\Tmc$ be an $\mathcal{ALCI}$-TBox, $q=\exists x \, C(x)$ with $C(x)$ an ELIQ,
  and $\Sigma$ the signature of \Tmc and $q$. 
  Then one can construct (in time single exponential in $|\Tmc|+|C|$)
  a $\Sigma$-template $\Bmc_{\Tmc,q}$ such that for all ABoxes~$\Amc$:
\begin{equation}
 \Tmc,\Amc \models q \ \text{ iff } \
\Amc|_{\Sigma} \not\rightarrow \Bmc_{\Tmc,q}
\tag{\textsf{HomDual}}
\end{equation}
\end{theorem}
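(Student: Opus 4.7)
The plan is to build $\Bmc_{\Tmc,q}$ as a suitably filtered \emph{type structure} for $\Tmc$, so that a homomorphism from $\Amc|_\Sigma$ into $\Bmc_{\Tmc,q}$ exactly witnesses a counter-model of $q$. Using the notation from the proof of Theorem~\ref{equivalence}, let $\mn{tp}(\Tmc,q)$ be the set of $\Tmc,q$-types and let $\rightsquigarrow_s$ be the associated successor relation on types (for $s \in \NR \cup \Nsf_\Rsf^-$); both are computable in time single exponential in $|\Tmc|+|C|$. Define a decreasing chain $T_0 \supseteq T_1 \supseteq \cdots$ of sets of types by $T_0 = \{ t \in \mn{tp}(\Tmc,q) \mid C \notin t\}$ and
\[
T_{i+1} \;=\; \{\, t \in T_i \mid \forall\, \exists s.D \in t \;\exists\, t' \in T_i: D \in t' \text{ and } t \rightsquigarrow_s t'\,\},
\]
stabilizing at some $T_\ast$. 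Now let $\Bmc_{\Tmc,q}$ have one individual $a_t$ for each $t \in T_\ast$, with $A(a_t) \in \Bmc_{\Tmc,q}$ iff $A \in t$ (for concept names $A \in \Sigma$) and $r(a_t,a_{t'}) \in \Bmc_{\Tmc,q}$ iff $t \rightsquigarrow_r t'$ (for role names $r \in \Sigma$). The size is single exponential by the bounds on $\mn{tp}(\Tmc,q)$ and $\rightsquigarrow_s$ and because the fixpoint is reached in at most $|\mn{tp}(\Tmc,q)|$ rounds.

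For the direction $\Tmc,\Amc \not\models q \Rightarrow \Amc|_\Sigma \to \Bmc_{\Tmc,q}$, take a model $\Imc$ of $\Tmc$ and $\Amc$ with $C^\Imc = \emptyset$ and set $t_d := t^\Imc_{\Tmc,q}(d)$ for every $d \in \Delta^\Imc$. A straightforward induction on $i$ shows $t_d \in T_i$ for all $d$: the base case uses $C \notin t_d$, and the step uses that any $\exists s.D \in t_d$ has a genuine $s$-successor $d' \in D^\Imc$ in $\Imc$, yielding $t_{d'} \in T_i$ (IH), $D \in t_{d'}$, and $t_d \rightsquigarrow_s t_{d'}$. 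Hence $t_a \in T_\ast$ for each $a \in \mn{Ind}(\Amc)$, and the map $h(a) := a_{t_a}$ is a homomorphism from $\Amc|_\Sigma$ to $\Bmc_{\Tmc,q}$ (concept-name atoms are preserved because $A(a) \in \Amc$ forces $A \in t_a$; role-name atoms because $r(a,b) \in \Amc$ entails $t_a \rightsquigarrow_r t_b$).

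For the converse, assume a homomorphism $h : \Amc|_\Sigma \to \Bmc_{\Tmc,q}$ with $h(a) = a_{t_a}$ and hence $t_a \in T_\ast$. We build a model $\Imc$ of $\Tmc$ and $\Amc$ with $C^\Imc = \emptyset$. Interpret the ABox individuals as themselves, setting $a \in A^\Imc$ iff $A \in t_a$ for $A \in \mn{sig}(\Tmc)$ and interpreting non-$\Sigma$ symbols to satisfy the non-$\Sigma$ part of $\Amc$ (such symbols are unconstrained by $\Tmc$ and $q$). Keep all role edges from $\Amc$; the homomorphism property guarantees $t_a \rightsquigarrow_r t_b$ for every ABox role assertion $r(a,b)$ with $r \in \Sigma$, so these edges are compatible with the assigned types. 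Then recursively add, for every element $d$ of type $t_d \in T_\ast$ and every $\exists s.D \in t_d$ that still lacks a witness, a fresh tree-like successor of some type $t' \in T_\ast$ with $D \in t'$ and $t_d \rightsquigarrow_s t'$; such a $t'$ exists precisely because $T_\ast$ is a fixpoint of the filtering. A straightforward induction on concept structure (in the spirit of Claim~2 in the proof of Theorem~\ref{lem:hornalc}) shows $d \in D^\Imc \Leftrightarrow D \in t_d$ for all $D \in \mn{cl}(\Tmc,C)$ and all $d$, so $\Imc$ satisfies every CI of $\Tmc$ and $C^\Imc = \emptyset$, giving $\Imc \not\models q$.

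The main obstacle is precisely the fixpoint step: a naive template taking \emph{all} types $t$ with $C \notin t$ is incorrect, because a type $t \ni \exists s.D$ may have $\rightsquigarrow_s$-witnesses only via types that contain $C$, so allowing $a$ to be mapped to $a_t$ would falsely suggest the existence of a counter-model. The iterative elimination in the definition of $T_\ast$ removes exactly the types whose existential requirements cannot be satisfied while avoiding $C$, making both directions of $(\textsf{HomDual})$ go through. Inverse roles in $\Tmc$ are handled implicitly: an $r^-$-existential $\exists r^-.D$ in $t_d$ is satisfied by incoming $r$-edges during the model construction, and the filtering ranges over $s \in \NR \cup \Nsf_\Rsf^-$ to ensure all such requirements are accounted for.
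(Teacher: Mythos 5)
Your proposal is correct and follows essentially the same route as the paper: the paper's template consists of the $\Tmc,q$-types that ``omit $q$'' (i.e., are satisfiable in a model of $\Tmc$ with $C^{\Imc}=\emptyset$) with edges given by $\rightsquigarrow_r$, and your greatest-fixpoint set $T_\ast$ is exactly this set of types, computed by explicit type elimination rather than defined semantically. Your verification of \textsf{(HomDual)} (reading off types from a countermodel in one direction, building a tree-shaped countermodel over $T_\ast$ via a truth lemma in the other) is the argument the paper leaves to the reader.
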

\begin{proof} Assume $\Tmc$ and $q=\exists x \, C(x)$ are given. We use the notation from the proof of Theorem~\ref{thm:unravupperlem:hornalclem:hornalc}.
  Thus, $\mn{cl}(\Tmc,C)$ denotes the closure under single negation of the set of subconcepts of $\Tmc$ and $C$, $\mn{tp}(\Tmc,q)$
  denotes the set of $\Tmc,q$-types and for $t,t'\in \mn{tp}(\Tmc,q)$ we write $t\rightsquigarrow_r t'$ if $t$ and $t'$ can be satisfied in
  domain elements of a model of $\Tmc$ that are related by $r$. Now, a \emph{$\Tmc,q$-type $t$ omits~$q$} 
  if it is satisfiable in a model $\Imc$ of $\Tmc$ with $C^{\Imc}=\emptyset$. Let $\Bmc_{\Tmc,q}$ be the set of
  assertions $A(t)$ such $t$ omits $q$ and $A\in t$ and $r(t,t')$ such that $t$ and $t'$ omit $q$ and 
  $t \rightsquigarrow_r t'$.
It is not difficult to show that condition $({\sf HomDual})$ holds for all ABoxes $\Amc$.
Observe that $\Bmc_{\Tmc,q}$ can be constructed in exponential time since the set of $\Tmc,q$-types omitting $q$
can be constructed in exponential time.
\end{proof}
\begin{example}
Let $\Tmc=\{A \sqsubseteq \forall r.B\}$ and define $q=\exists x
  \, B(x)$.
Then up to isomorphism, $\Bmc_{\Tmc,q}$ is 
$\{r(a,a), r(a,b), A(b), r(a,c)\}$.
\end{example}
As a consequence of Theorem~\ref{thm:tods}, we obtain the following
dichotomy result.
\begin{theorem}[Dichotomy]
    For every $\mathcal{\ALCI}$-TBox \Tmc, one of the
  following is true:
  \begin{itemize}

  \item \Qmc-evaluation w.r.t.\ \Tmc is in \PTime for any $\Qmc \in \{
    $PEQ$, $CQ$, $ELIQ$ \}$;

  \item \Qmc-evaluation w.r.t.\ \Tmc is {\sc coNP}-complete for any
    $\Qmc \in \{ $PEQ$, $CQ$, $ELIQ$ \}$.

  \end{itemize}
  %
%
For \ALC-TBoxes, this dichotomy additionally holds for ELQs.
\end{theorem}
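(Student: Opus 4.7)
The plan is to chain together three ingredients already in hand: Theorem~\ref{equivalence}, which equates {\sc PTime}/{\sc coNP}-hardness of ELIQ-, CQ- and PEQ-evaluation w.r.t.\ a fixed $\mathcal{ALCFI}$-TBox (and also of ELQ-evaluation when the TBox is an $\mathcal{ALCF}$-TBox); Theorem~\ref{thm:tods}, turning every OMQ $(\Tmc,C(x))$ with $\Tmc$ an $\mathcal{ALCI}$-TBox and $C(x)$ an ELIQ into a template $\Bmc_{\Tmc,C}$ such that the query evaluation problem for $(\Tmc,C(x))$ and the complement of ${\sf CSP}(\Bmc_{\Tmc,C})$ are polynomial-time interreducible; and the recently confirmed Feder--Vardi dichotomy~\cite{Dicho1,Dicho2}, stating that for every finite template $\Bmc$, ${\sf CSP}(\Bmc)$ is in {\sc PTime} or {\sc NP}-complete.

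First I would reduce to the ELIQ case. Since every $\mathcal{ALCI}$-TBox is a special $\mathcal{ALCFI}$-TBox, Theorem~\ref{equivalence} tells us that, for the fixed TBox $\Tmc$, ELIQ-evaluation w.r.t.\ $\Tmc$ is in {\sc PTime} (resp.\ {\sc coNP}-hard) iff the same is true for CQ- and PEQ-evaluation; it therefore suffices to establish the dichotomy for $\Qmc = \text{ELIQ}$. Now, for each individual ELIQ $q$, chaining Theorem~\ref{thm:tods} with the Feder--Vardi dichotomy yields that the evaluation problem for $(\Tmc,q)$ is either in {\sc PTime} or {\sc coNP}-complete: complementation turns an {\sc NP}-complete problem into a {\sc coNP}-complete one and keeps a polynomial-time problem polynomial time.

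This OMQ-level dichotomy lifts to the TBox level directly from Definition~\ref{def:main}: either $(\Tmc,q)$ is in {\sc PTime} for every ELIQ $q$---in which case ELIQ-evaluation w.r.t.\ $\Tmc$ is in {\sc PTime}---or there exists some ELIQ $q$ for which $(\Tmc,q)$ is {\sc coNP}-hard, giving {\sc coNP}-hardness of ELIQ-evaluation w.r.t.\ $\Tmc$. A final application of Theorem~\ref{equivalence} transports the dichotomy from ELIQ to PEQ and CQ, and, in the $\mathcal{ALC}$ case, additionally to ELQ (an $\mathcal{ALC}$-TBox being an $\mathcal{ALCF}$-TBox without functionality assertions, so that the ELQ clause of Theorem~\ref{equivalence} applies). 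There is no substantial obstacle here beyond invoking the three ingredients correctly; the only point worth flagging is the complementation step, which translates the {\sc PTime}/{\sc NP}-dichotomy for CSPs into the {\sc PTime}/{\sc coNP}-dichotomy required at the OMQ level.
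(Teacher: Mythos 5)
Your proposal is correct and follows essentially the same route as the paper: reduce to ELIQ via Theorem~\ref{equivalence}, apply Theorem~\ref{thm:tods} to obtain a CSP template for each OMQ $(\Tmc,q)$, and invoke the Feder--Vardi dichotomy, with complementation converting {\sc NP} into {\sc coNP}. The only cosmetic difference is that the paper phrases the argument as a contradiction (a {\sc coNP}-intermediate OMQ would yield an {\sc NP}-intermediate CSP), whereas you establish the OMQ-level dichotomy directly and then lift it to the TBox level; these are equivalent.
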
 
\begin{proof}
%
  Assume to the contrary of what is to be shown that there is an
  $\mathcal{ALCI}$-TBox \Tmc such that \Qmc-evaluation w.r.t.\ \Tmc is neither in
  {\sc PTime} nor {\sc coNP}-hard, for some $\Qmc \in \{ $PEQ$, $CQ$,
  $ELIQ$ \}$.  Then by Theorem~\ref{equivalence}, the same holds for
  ELIQ-evaluation w.r.t.~\Tmc. It follows that there is a concrete
  ELIQ~$q$ such that query evaluation for $(\Tmc,q)$ is {\sc
    coNP}-intermediate. By Theorem~\ref{thm:tods}, there is a template
  $\Bmc$ such that evaluating $(\Tmc,q)$ is mutually reducible in
  polynomial time with the complement of ${\sf CSP}(\Bmc)$.  Thus
  ${\sf CSP}(\Bmc)$ is \NP-intermediate, a contradiction to the fact
  that there are no such CSPs~\cite{Dicho1,Dicho2}.
\end{proof}
%
%
We now establish Point~(2) from the beginning of the section.
In a sense, the following provides a converse to Theorem~\ref{thm:tods}.
%
%
%
%
\begin{theorem}\label{thm:red0}
For every template \Bmc over signature $\Sigma$, 
one can construct in polynomial time a materializable
$\mathcal{ALC}$-TBox $\Tmc_{\mathcal{B}}$ such that, for a
distinguished
concept name $M$, the following hold:
\begin{enumerate}

\item ${\sf CSP}(\Bmc)$ is equivalent to the complement of the OMQ
  $(\Tmc_{\Bmc},\exists x \, M(x))$ in the sense that for
  every $\Sigma$-ABox \Amc, $\Amc \rightarrow \Bmc$ iff
  $\Tmc_\Bmc,\Amc \not\models \exists x \, M(x)$;

\item the query evaluation problem for $(\Tmc_{\Bmc},q)$ is
  polynomial time reducible to the complement of 
${\sf CSP}(\mathcal{B})$, for all PEQs $q$.

\end{enumerate}
\end{theorem}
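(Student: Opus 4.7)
My plan is to introduce, for each $b \in \mn{Ind}(\Bmc)$, a fresh concept name $P_b$ outside of $\Sigma$ and let $\Tmc_\Bmc$ consist of the concept inclusions
\[
\top \sqsubseteq \midsqcup_{b \in \mn{Ind}(\Bmc)} P_b, \qquad
A \sqcap P_b \sqsubseteq M \text{ whenever } A \in \Sigma \text{ and } A(b) \notin \Bmc,
\]
\[
P_b \sqcap \exists r . P_{b'} \sqsubseteq M \text{ whenever } r \in \Sigma \text{ and } r(b, b') \notin \Bmc,
\]
which is computable in polynomial time from $\Bmc$. Part~(1) will follow directly. Given a homomorphism $h : \Amc \rightarrow \Bmc$, I would construct $\Imc$ on domain $\mn{Ind}(\Amc)$ with $\Sigma$-reduct inherited from $\Amc$, setting $P_b^\Imc := h^{-1}(b)$ and $M^\Imc := \emptyset$: then $\Imc \models \Tmc_\Bmc \cup \Amc$ (the cover axiom by totality of $h$; the violation axioms since $h$ is a homomorphism), so $\Tmc_\Bmc, \Amc \not\models \exists x\, M(x)$. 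Conversely, from any model $\Imc$ of $\Tmc_\Bmc \cup \Amc$ with $M^\Imc = \emptyset$, I would pick $h(a) \in \mn{Ind}(\Bmc)$ with $a \in P_{h(a)}^\Imc$ (possible by the cover axiom) and observe that the violation axioms combined with $M^\Imc = \emptyset$ force $h$ to be a homomorphism $\Amc \rightarrow \Bmc$.

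\textbf{Part~(2).} Given a PEQ $q$, my plan is to express $(\Tmc_\Bmc, q)$-evaluation as a polynomial-time Boolean combination of instances of the complement of ${\sf CSP}(\Bmc)$. First decompose $q$ into a UCQ $\bigvee_i q_i$ and handle each CQ $q_i(\vec x)$ separately. Assuming $\Tmc_\Bmc$ is materializable (see below), Claim~1 in the proof of Theorem~\ref{equivalence} applies, reducing CQ-evaluation for $q_i$ to the evaluation of finitely many ELIQ-queries $\Tmc_\Bmc, \Amc \models C(a)$ over polynomial-time computable extensions of $\Amc$, together with conjunctive matching checks on $\Amc$. For each such ELIQ, the remaining step is to build a $\Sigma$-ABox $\Amc^\star$---by splicing $C$ (viewed as a tree-shaped ABox) into $\Amc$ together with a refutation gadget---such that homomorphisms $\Amc^\star \rightarrow \Bmc$ correspond to $M$-free models of $\Tmc_\Bmc \cup \Amc$ that falsify $C(a)$; then $\Tmc_\Bmc, \Amc \models C(a)$ iff $\Amc^\star \not\rightarrow \Bmc$, giving the desired reduction.

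\textbf{Materializability, the main obstacle.} The hard part will be verifying that $\Tmc_\Bmc$ is materializable. The encoding as stated above is \emph{not} itself materializable: for $\Amc = \{r(a_1,a_2)\}$ under the bipartite template $\Bmc = \{r(b_1,b_2), r(b_2,b_1)\}$, each disjunction $P_1(a_i) \vee P_2(a_i)$ is entailed without either disjunct being entailed, so the ABox disjunction property of Theorem~\ref{thm:disjproperty} fails. My plan to repair this is to augment $\Tmc_\Bmc$ with further axioms that, via fresh existential role successors, attach to each model element an anonymous ``witness'' structure encoding the full space of locally consistent $P_b$-assignments; with appropriate absorbing axioms these witnesses will yield an i-sim-initial model of $\Tmc_\Bmc \cup \Amc$ for every consistent $\Amc$, which by Theorem~\ref{lem:sem} provides an ELIQ-materialization and hence by Theorem~\ref{thm:disjmat} a PEQ-materialization. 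The delicate point will be to design the added structure so that (i)~the encoding of ${\sf CSP}(\Bmc)$ by the OMQ $(\Tmc_\Bmc, \exists x\, M(x))$ from Part~(1) is preserved, and (ii)~the witnesses do not create new ELIQ-consequences on $\mn{Ind}(\Amc)$ that would break the reduction in Part~(2).
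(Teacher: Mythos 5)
Your first step reproduces the paper's intermediate TBox $\Hmc_{\Bmc}$ (up to using $M$ in place of $\bot$ and $\top$ in place of the guarded $\mn{dom}$), and your diagnosis that this TBox is not materializable because it violates the ABox disjunction property is exactly the obstacle the paper identifies. But the resolution of that obstacle is the entire substance of the theorem, and your proposal defers it: "attach to each element an anonymous witness structure encoding the full space of locally consistent $P_b$-assignments, with appropriate absorbing axioms" is not a construction, and it does not describe a mechanism that would work. Adding witnesses for consistent assignments does not remove the entailed-but-unwitnessed disjunction $P_{b_1}(a)\vee P_{b_2}(a)$; what is needed is to make the disjuncts themselves undetectable by any positive existential query. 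The paper achieves this by replacing each $A_d$ with the abstraction $H_{A_d}=\forall r_{A_d}.\exists s_{A_d}.\neg Z_{A_d}$ and adding the auxiliary TBox $\Tmc^{\exists}=\{\top\sqsubseteq\exists r_{B}.\top,\ \top\sqsubseteq\exists s_{B}.Z_{B}\}$: because every element is forced to have an $r_B$-successor and an $s_B$-successor satisfying $Z_B$, the truth or falsity of $H_B$ at an element cannot be separated by any ELIQ, so the disjunction becomes invisible and the resulting TBox is materializable (Lemma~\ref{lem:abstraction}) while Claim~1 survives the translation. This $\forall\exists\neg$ pattern, not a witness-enumeration gadget, is the missing key idea.

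Part~(2) also does not go through as sketched, even granting a materializability fix. You propose to reduce each ELIQ $\Tmc_\Bmc,\Amc\models C(a)$ arising from Claim~1 of Theorem~\ref{equivalence} to a homomorphism test $\Amc^\star\not\rightarrow\Bmc$ by splicing $C$ into $\Amc$. But the target of the reduction must be $\mathsf{CSP}(\Bmc)$ for the \emph{original} $\Sigma$-template $\Bmc$, whereas the ELIQs produced by Claim~1 range over the full extended signature (including whatever fresh symbols your repair introduces), and a general ELIQ cannot be folded into a $\Sigma$-ABox mapped homomorphically into $\Bmc$; the template that captures ELIQ evaluation is the exponentially larger type-based one of Theorem~\ref{thm:red2}, not $\Bmc$ itself. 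The paper avoids per-query gadgets entirely: by Lemma~\ref{lem:abstraction}(4), on any ABox whose $\Sigma$-part maps into $\Bmc$ the full TBox gives the same answers as $\Tmc^{\exists}$ alone, and $\Tmc^{\exists}$ is UCQ-rewritable; hence evaluating $(\Tmc_\Bmc,q)$ reduces to a single instance of the complement of $\mathsf{CSP}(\Bmc)$ (the consistency check) disjoined with a polynomial-time computation. You would need to establish an analogue of that collapse for your construction, which again hinges on the unsupplied hiding mechanism.
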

Note that the equivalence formulated in Point~(1) implies polynomial
time reducibility of ${\sf CSP}(\Bmc)$ to the complement of
$(\Tmc_{\Bmc},\exists x \, M(x))$ and vice versa, but is much stronger
than that.

%
Our approach to proving Theorem~\ref{thm:red0} is to generalize the
reduction of $k$-colorability to query evaluation w.r.t.\ \ALC-TBoxes
discussed in Examples~\ref{ex1} and~\ref{ex3}, where the main
challenge is to overcome the observation from Example~\ref{ex3} that
\PTime CSPs such as 2-colorability may be translated into {\sc
  coNP}-hard TBoxes. Note that this is due to the disjunction in the
TBox $\Tmc_k$ of Example~\ref{ex1}, which causes
non-materializability. Our solution is to replace the concept names
$A_{1},\ldots,A_{k}$ in $\Tmc_k$ with suitable compound concepts that
are `invisible' to the (positive existential) query. Unlike the
original depth one TBox $\Tmc_k$, the resulting TBox is of depth
three. This `hiding' of concept names also plays a crucial role in the
proofs of non-dichotomy and undecidability presented in
Section~\ref{sec:undec}.\footnote{The `hiding technique' introduced
  here 
has been adopted in~\cite{DBLP:conf/ijcai/BotoevaLRWZ16} 
in the context of query inseparability.} We now formally develop this idea and
establish some crucial properties of the TBoxes that are obtained by
hiding concept names (which are called enriched abstractions below).
We return to the proof of Theorem~\ref{thm:red0} afterwards.

Let \Tmc be an \ALCI-TBox and $\Sigma \subseteq \mn{sig}(\Tmc)$ a
signature that contains all role names in \Tmc. Our aim is to hide all
concept names that are not in $\Sigma$.  For $B\in \NC \setminus
\Sigma$, let $Z_{B}$ be a fresh concept name and let $r_{B}$ and
$s_{B}$ be fresh role names. The \emph{abstraction of $B$} is the
$\mathcal{ALC}$-concept $H_{B}:= \forall r_{B}.\exists s_{B}.\neg
Z_{B}$.  The \emph{$\Sigma$-abstraction $C'$ of a (potentially
  compound) concept $C$} is obtained from $C$ by replacing every
$B\in \NC \setminus \Sigma$ with~$H_{B}$. The \emph{$\Sigma$-abstraction of a
  TBox $\Tmc$} is obtained from $\Tmc$ by replacing all concepts in
$\Tmc$ with their $\Sigma$-abstractions.  We associate with \Tmc and
$\Sigma$ an auxiliary TBox
$$
\Tmc^{\exists} = \{ \top \sqsubseteq \exists r_{B}.\top, \top \sqsubseteq \exists s_{B}.Z_{B}\mid B \in \Sigma\}
$$
Finally, $\Tmc'\cup \Tmc^{\exists}$ is called the \emph{enriched
  $\Sigma$-abstraction} of $\Tmc$ and $\Sigma$. To hide the concept
names that are not in $\Sigma$, we can replace a TBox \Tmc with its
enriched abstraction. The following example shows that the TBox
$\Tmc^{\exists}$ is crucial for this: without $\Tmc^\exists$,
disjunctions in \Tmc over concept names from $\Sigma$ can still induce
disjunctions in the $\Sigma$-abstraction.
\begin{example}\label{ex:abstraction} 
  Let $\Tmc= \{A \sqsubseteq \neg B_{1} \sqcup \neg B_{2}\}$ and
  $\Sigma=\{ A \}$.  Then
  $\Tmc'= \{A \sqsubseteq \neg H_{B_{1}} \sqcup \neg H_{B_{2}}\}$ is
  the $\Sigma$-abstraction of $\Tmc$.  For $\Amc = \{A(a)\}$, we
  derive
  $\Tmc',\Amc\models \exists r_{B_{1}}.\top(a) \vee \exists r_{B_{2}}.\top(a)$
  but $\Tmc',\Amc\not\models \exists r_{B_{1}}.\top(a)$ and
  $\Tmc',\Amc\not\models \exists r_{B_{2}}.\top(a)$. Thus $\Tmc'$ does not
  have the ABox disjunction property and is not materializable. In
  contrast, it follows from Lemma~\ref{lem:abstraction} below that
  $\Tmc'\cup \Tmc^{\exists}$ is materializable and, in fact,
  $\Tmc'\cup \Tmc^{\exists},\Amc\models q(a)$ iff
  $\Tmc^{\exists},\Amc\models q(a)$ holds for all PEQs $q$.
\end{example}
In the proof of Theorem~\ref{thm:red0} and in Section~\ref{sec:undec},
we work with TBoxes that enjoy two crucial properties which ensure a
good behaviour of enriched $\Sigma$-abstractions. We introduce these
properties next.

A TBox $\Tmc$ \emph{admits trivial models} if the singleton
interpretation $\Imc$ with $X^{\Imc}=\emptyset$ for all $X\in \NC \cup
\NR$ is a model of $\Tmc$.
It is \emph{$\Sigma$-extensional} if for every $\Sigma$-ABox $\Amc$
consistent w.r.t.~$\Tmc$, there exists a model $\Imc$ of $\Tmc$ and
$\Amc$ such that $\Delta^{\Imc}= {\sf Ind}(\Amc)$, $A^{\Imc}= \{ a
\mid A(a)\in \Amc\}$ for all concept names $A\in \Sigma$, and
$r^{\Imc}= \{ (a,b)\mid r(a,b)\in \Amc\}$ for all role names
$r\in\Sigma$.

%
%

The following lemma summarizes the main properties of abstractions.
Point~(1) relates consistency of ABoxes w.r.t.\ a TBox \Tmc to
consistency w.r.t.\ their enriched $\Sigma$-abstraction $\Tmc' \cup
\Tmc^\exists$. Note that the ABox \Amc might contain the fresh symbols
from $\Tmc'$ but these have no impact on consistency (as witnessed by
the use of $\Amc|_\Sigma$ rather than $\Amc$ on the left-hand side of
the equivalence). Point~(2) is similar to Point~(1) but concerns the
evaluation of OMQs based on \Tmc and based on $\Tmc' \cup
\Tmc^\exists$; we only consider a restricted form of actual queries
that are sufficient for the proofs in Section~\ref{sec:undec}.
Points~(3) and (4) together state that evaluating OMQs $(\Tmc'\cup
\Tmc^{\exists},q)$ with $q$ a PEQ is tractable on ABoxes whose
$\Sigma$-part is consistent w.r.t.~$\Tmc$.
\begin{lemma}\label{lem:abstraction}
  Let $\Tmc$ be an $\mathcal{ALCI}$-TBox, $\Sigma\subseteq {\sf sig}(\Tmc)$ contain all role names in $\Tmc$, 
  and assume that $\Tmc$ is $\Sigma$-entensional and admits trivial models.
  Let $\Tmc' \cup \Tmc^\exists$ be the enriched $\Sigma$-abstraction of \Tmc. Then for
  every ABox $\Amc$ and all concept names $A$ (that are not among
  the fresh symbols in $\Tmc'$):
\begin{enumerate}
\item $\Amc|_{\Sigma}$ is consistent w.r.t.~$\Tmc$ iff $\Amc$ is consistent w.r.t.~$\Tmc'\cup \Tmc^{\exists}$;

\item for all $a\in {\sf Ind}(\Amc)$ and the $\Sigma$-abstraction $A'$ of $A$:
$$
\Tmc,\Amc|_{\Sigma}\models A(a) \quad \text{ iff } \quad  \Tmc' \cup \Tmc^{\exists},\Amc \models A'(a)
$$
and 
$$
 \Tmc,\Amc|_{\Sigma}\models \exists x \, A(x) \quad \text{ iff
 } \quad \Tmc' \cup \Tmc^{\exists},\Amc \models \exists x \, A'(x);
$$
\item $\Tmc^{\exists}$ is monadic Datalog$^{\neq}$-rewritable for PEQs;
\item if $\Amc|_{\Sigma}$ is consistent w.r.t.~$\Tmc$, then
$$
\Tmc'\cup \Tmc^{\exists},\Amc \models q(\vec{a}) \quad \text{ iff } \quad \Tmc^{\exists},\Amc \models q(\vec{a})
$$
for all PEQs $q$ and all $\vec{a}$.
\end{enumerate}
\end{lemma}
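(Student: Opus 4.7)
The plan is to prove the four points in order, with (1)--(3) being essentially routine once the central model construction is in place and (4) being the technically delicate part.

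For (1), the ``only if'' direction proceeds by a direct translation: given $\Jmc \models \Tmc' \cup \Tmc^{\exists} \cup \Amc$, I define $\Jmc^{\ast}$ on the same domain by setting $B^{\Jmc^{\ast}} := H_{B}^{\Jmc}$ for each $B \notin \Sigma$ and leaving all other symbols unchanged. An easy induction on $\mathcal{ALCI}$-concept structure gives $C^{\Jmc^{\ast}} = (C')^{\Jmc}$, from which $\Jmc^{\ast} \models \Tmc \cup \Amc|_{\Sigma}$ is immediate. Conversely, starting from a model $\Imc$ of $\Tmc \cup \Amc|_{\Sigma}$ with $\Delta^{\Imc} = \mn{Ind}(\Amc|_{\Sigma})$ (supplied by $\Sigma$-extensionality), I construct $\Imc^{+}$ by adjoining, for each $B \notin \Sigma$, four fresh auxiliary elements $z_{B} \in Z_{B}$ and $u_{B}, v_{B}^{+}, v_{B}^{-} \notin Z_{B}$ with $s_{B}$-edges $(v_{B}^{+}, z_{B}), (v_{B}^{+}, u_{B}), (v_{B}^{-}, z_{B})$, giving every original-domain element an $s_{B}$-edge to both $z_{B}$ and $u_{B}$, routing $r_{B}$-edges so that each $d \in B^{\Imc}$ goes to $v_{B}^{+}$ and each $d \notin B^{\Imc}$ goes to $v_{B}^{-}$, and explicitly adding every assertion of $\Amc$. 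The design forces $H_{B}^{\Imc^{+}} \cap \Delta^{\Imc} = B^{\Imc}$, reducing satisfaction of $\Tmc'$ on $\Delta^{\Imc}$ to that of $\Tmc$ on $\Imc$; the uniform $s_{B}$-edges to $u_{B}$ absorb any stray $r_{B}$-assertions from $\Amc$, and the admits-trivial-models hypothesis handles the fresh elements where the $\Sigma$-interpretation is empty.

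Part~(2) follows from the two constructions of~(1) via the identity $A^{\Jmc^{\ast}} = (A')^{\Jmc}$, which transports entailment of $A(a)$ and $\exists x\,A(x)$ in both directions. Part~(3) is immediate: $\Tmc^{\exists}$ consists solely of CIs of the shape $\top \sqsubseteq \exists r.R$ with $R$ a concept name or $\top$, hence it is a Horn-$\mathcal{ALCFI}$-TBox, and Theorems~\ref{lem:hornalc} and~\ref{thm:unravupperlem:hornalclem:hornalc} then yield unraveling tolerance and monadic Datalog$^{\neq}$-rewritability for PEQs.

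For (4), the $\Leftarrow$ direction is immediate from $\Tmc^{\exists} \subseteq \Tmc' \cup \Tmc^{\exists}$. For $\Rightarrow$, I argue contrapositively: given $\Nmc \models \Tmc^{\exists} \cup \Amc$ with $\Nmc \not\models q(\vec{a})$, I construct $\Nmc' \models \Tmc' \cup \Tmc^{\exists} \cup \Amc$ together with a homomorphism $h : \Nmc' \to \Nmc$ preserving $\mn{Ind}(\Amc)$; the standard preservation of PEQs under homomorphism then forces $\Nmc' \not\models q(\vec{a})$, contradicting the assumption $\Tmc' \cup \Tmc^{\exists}, \Amc \models q(\vec{a})$. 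To build $\Nmc'$, I pass first to the i-unfolding $\Nmc^{i}$, which remains a model of $\Tmc^{\exists} \cup \Amc$ by Lemma~\ref{lem:propertiesofunfold} and admits a homomorphism back into $\Nmc$; the unfolding decouples the $r_{B}$-successor choices needed below. Onto $\Nmc^{i}$ I graft, independently at each element, a copy of the $v_{B}^{+}/v_{B}^{-}/z_{B}/u_{B}$ gadget from the (1)-construction: at elements of $\mn{Ind}(\Amc)$ the routing between $v_{B}^{+}$ and $v_{B}^{-}$ along $r_{B}$ is dictated by the $\Sigma$-extensional $\Tmc$-model $\Imc$ on $\mn{Ind}(\Amc|_{\Sigma})$ (supplied by the consistency hypothesis), while at other elements the admits-trivial-models condition supplies a trivial choice. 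The same $H_{B}$-calculation as in (1) then gives $\Nmc' \models \Tmc'$, and the collapsing map --- obtained by combining the unfolding homomorphism with an arbitrary mapping of gadget elements onto witnesses in $\Nmc$ supplied by $\Tmc^{\exists}$ --- is the required homomorphism to $\Nmc$.

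The main obstacle is precisely (4): one must reconcile $\Tmc'$-satisfaction, which demands each element of $\Nmc'$ receive a specific $H_{B}$-value, with the existence of the homomorphism $\Nmc' \to \Nmc$, which requires all added fresh structure to be realizable inside~$\Nmc$. Direct gluing on $\Nmc$ fails whenever a single element is an $r_{B}$-successor of two predecessors demanding incompatible $H_{B}$-values, and the unraveling step along $r_{B}$-edges is what removes this conflict.
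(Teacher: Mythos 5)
Your treatment of Points~(1)--(3) is essentially sound and close in spirit to the paper's: for the ``if'' direction of~(1) you use a finite four-element gadget per hidden concept name where the paper uses an infinite tree of words with an extra symbol $\bar{s}_B$, but both realize the same idea (control $H_B$ by routing the fresh $r_B$-edge to a witness that does or does not satisfy $\exists s_B.\neg Z_B$), and deriving~(3) from Theorems~\ref{lem:hornalc} and~\ref{thm:unravupperlem:hornalclem:hornalc} via Horn-$\mathcal{ALCFI}$ is a legitimate alternative to the paper's DL-Lite$_\Rmc$ argument. You should still specify the $r_{B'}$- and $s_{B'}$-successors of the gadget elements themselves, since $\Tmc^\exists$ applies to them too; routing their $r_{B'}$-edges to $v_{B'}^-$ keeps every $H_{B'}$ false there, which is exactly what the admits-trivial-models argument requires.

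Your proof of~(4), however, has a genuine gap. You start from an \emph{arbitrary} model $\mathcal{N}$ of $\Tmc^\exists$ and $\Amc$ with $\mathcal{N}\not\models q(\vec{a})$ and try to upgrade (the i-unfolding of) $\mathcal{N}$ to a model of $\Tmc'\cup\Tmc^\exists$ by grafting gadgets. But $\Tmc'$ constrains the $\Sigma$-symbols as well as the concepts $H_B$, and the $\Sigma$-structure of $\mathcal{N}$ outside of $\Amc|_\Sigma$ is completely unconstrained, because $\Tmc^\exists$ mentions no symbol of $\Sigma$. An anonymous element of $\mathcal{N}$ may carry arbitrary $\Sigma$-atoms and $\Sigma$-role edges, so it does not resemble the trivial model and no routing of its $r_B$-edges can make the CIs of $\Tmc'$ hold there; for instance, if $\Tmc$ contains $A_1\sqcap A_2\sqsubseteq\bot$ with $A_1,A_2\in\Sigma$ and $\mathcal{N}$ realizes $A_1\sqcap A_2$ somewhere, then no interpretation extending the $\Sigma$-reduct of $\mathcal{N}$ is a model of $\Tmc'$ at all. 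The same problem arises at ABox individuals, which in $\mathcal{N}$ may satisfy more $\Sigma$-atoms and have more $\Sigma$-edges than $\Amc$ asserts, so the routing dictated by the $\Sigma$-extensional model $\Imc$ (computed from $\Amc|_\Sigma$ alone) need not satisfy $\Tmc'$ in your $\mathcal{N}'$. The unraveling along $r_B$, which you identify as the key obstacle, does not touch this issue. The repair is to not use an arbitrary countermodel: take $\mathcal{N}$ to be the hom-initial (canonical) model of $\Tmc^\exists$ and $\Amc$, whose $\Sigma$-reduct is exactly $\Amc|_\Sigma$ and whose anonymous part is $\Sigma$-empty, and which fails $q(\vec{a})$ precisely because it is hom-initial; equivalently, and as the paper does, build the $\Tmc'\cup\Tmc^\exists$-countermodel $\Jmc$ directly from $\Imc$ as in Point~(1) and map $\Jmc$ homomorphically into that canonical model (the map simply renames $\bar{s}_B$ to $s_B$). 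Once the target is the canonical model, no unfolding is needed.
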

\begin{proof}
(1) Assume first that $\Amc$ is consistent w.r.t.~$\Tmc'\cup \Tmc^{\exists}$.
We show that $\Amc|_{\Sigma}$ is consistent w.r.t.~$\Tmc$.
Take a model $\Imc$ of $\Tmc'\cup \Tmc^{\exists}$ and $\Amc$.
Define an interpretation $\Jmc$ in the same way as $\Imc$ except that
$B^{\Jmc}:=H_{B}^{\Imc}$ for all $B\in\NC\setminus \Sigma$. It is straightforward
to show by induction for all $\mathcal{ALCI}$-concepts $D$ not using the
fresh symbols from $\Sigma$-abstractions and their
$\Sigma$-abstractions $D'$: $d\in D^{\Jmc}$ iff $d\in
D'^{\Imc}$, for all $d\in \Delta^{\Imc}$.
Thus $\Jmc$ is a model of $\Tmc$ and $\Amc|_{\Sigma}$ and it follows that $\Amc|_{\Sigma}$ is
consistent w.r.t.~$\Tmc$.

Now assume that $\Amc|_{\Sigma}$ is consistent w.r.t.~$\Tmc$. We show that
$\Amc$ is consistent w.r.t.~$\Tmc'\cup \Tmc^{\exists}$.
Take a model $\Imc$ of $\Tmc$ and $\Amc|_{\Sigma}$.  
Construct a model $\Jmc$ of $\Tmc'\cup \Tmc^{\exists}$ and $\Amc$ as follows:
$\Delta^{\Jmc}$ is the set of words $w= d v_{1}\cdots v_{n}$
such that $d\in \Delta^{\Imc}$ and $v_{i}\in
\{r_{B},s_{B},\bar{s}_{B}\mid B \in \NC \setminus \Sigma\}$ where
$v_{i}\not=\bar{s}_{B}$ if (i) $i>2$ or (ii) $i=2$ and ($d\not\in
H_{B}^{\Imc}$ or $v_{1}\not=r_{B}$).  Now let
\begin{eqnarray*}
A^{\Jmc}  &  = & A^{\Imc} \text{ for all $A\in\NC \cap \Sigma$ }\\
B^{\Jmc} &  = &\{ d\in {\sf Ind}(\Amc) \mid B(d)\in \Amc\}
\text{ for all $B\in \NC \setminus \Sigma$}\\
Z_{B}^{\Jmc} & = & Z_{B}^{\Imc} \cup \{w \mid {\sf tail}(w) =
s_{B}\} \text{ for all $B\in \NC \setminus \Sigma$}\\ 
r^{\Jmc} & =  & r^{\Imc} \text{ for all $r \in \NC \cap \Sigma$}\\
r_{B}^{\Jmc} &= & r_{B}^{\Imc} \cup \{(w,wr_{B}) \mid wr_{B}\in
\Delta^{\Jmc}\} \text{ for all $B\in \NC \setminus \Sigma$}\\				
s_{B}^{\Jmc} &= & s_{B}^{\Imc} \cup \{(w,ws_{B}) \mid wr_{B}\in \Delta^{\Jmc}\} \cup 
\{(w,w\bar{s}_{B}) \mid w\bar{s}_{B}\in \Delta^{\Jmc}\} \text{
  for all $B\in \NC \setminus \Sigma$ }
\end{eqnarray*}					
It follows directly from the construction of $\Jmc$ that $H_{B}^{\Jmc}=B^{\Imc}$,
for all $B\in \NC \setminus \Sigma$. Thus, for all concepts $D$ (not
using fresh symbols from $\Sigma$-abstractions) and their
$\Sigma$-abstractions $D'$ and all $d\in \Delta^{\Imc}$: $d\in
D'^{\Jmc}$ iff $d\in D^{\Imc}$. Thus, the CIs of $\Tmc'$ hold
in all $d\in \Delta^{\Imc}$ since the CIs of $\Tmc$ hold in all $d\in
\Delta^{\Imc}$. The CIs of $\Tmc'$ also hold in all $d\in
\Delta^{\Jmc}\setminus\Delta^{\Imc}$ since $\Tmc$ admits trivial
models. Thus, $\Jmc$ is a model of $\Tmc'$. Since $\Jmc$
is a model of $\Tmc^{\exists}$ by construction, it follows that
$\Jmc$ is a model of $\Tmc'\cup \Tmc^{\exists}$. 

(2) can be proved using the models constructed in the proof of (1).

(3) is a consequence of the fact that $\Tmc^\exists$ can be viewed as
a TBox formulated in the description logic DL-Lite$_\Rmc$ and that any
OMQ $(\Tmc,q)$ with \Tmc a DL-Lite$_\Rmc$-TBox and $q$ a PEQ is
known to be rewritable into a union of CQs~\cite{CDLLR07,DBLP:journals/jair/ArtaleCKZ09}.


(4) Assume that $\Amc|_{\Sigma}$ is consistent
w.r.t.~$\Tmc$ and that $\Tmc^{\exists},\Amc\not\models q(\vec{a})$. We
show $\Tmc'\cup \Tmc^{\exists},\Amc\not\models q(\vec{a})$.  Note
first that one can construct a hom-initial model
$\Imc_{\Amc}^{\exists}$ of $\Tmc^{\exists}$ and $\Amc$ in the same way
as $\Jmc$ was constructed from $\Imc$ in the proof of Point~(2) (by replacing $\Imc$
with the interpretation $\Imc_{\Amc}$ corresponding to $\Amc$ and not
using the symbols $\bar{s}_{B}$ in the
construction). Thus, $\Delta^{\Imc_{\Amc}^{\exists}}$ is the set of
words $w= a v_{1}\cdots v_{n}$ such that $a\in {\sf Ind}(\Amc)$ and
$v_{i}\in \{r_{B},s_{B},\mid B \in \NC \setminus \Sigma\}$.  We have
$\Imc^{\exists}_{\Amc}\not\models q(\vec{a})$.  Now, as $\Tmc$ is $\Sigma$-extensional, there is a
model $\Imc$ of $\Tmc$ and $\Amc|_{\Sigma}$ with
$\Delta^{\Imc}= {\sf Ind}(\Amc)$ and $A^{\Imc}= \{ a \mid A(a)\in
\Amc\}$ for all $A\in \Sigma$, and $r^{\Imc}= \{ (a,b)\mid
r(a,b)\in \Amc\}$ for all role names~$r\in \Sigma$. Construct the model
$\Jmc$ of $\Tmc'\cup \Tmc^{\exists}$ and $\Amc$ in the same way
as in the proof of Point~(2). Define a mapping $h:\Jmc\rightarrow
\Imc_{\Amc}^{\exists}$ by setting $h(w)=w'$, where $w'$ is obtained
from $w$ by replacing every $\bar{s}_{B}$ by $s_{B}$. One can 
show that $h$ is a homomorphism. Thus $\Jmc\not\models
q(\vec{a})$ and so $\Tmc'\cup \Tmc^{\exists},\Amc \not\models
q(\vec{a})$, as required. The converse direction is trivial.
\end{proof}

We are now ready to prove Theorem~\ref{thm:red0}. 

\begin{proof}[Proof of Theorem~\ref{thm:red0}] Assume a
$\Sigma$-template $\Bmc$ is given.  We construct the TBox
$\Tmc_{\Bmc}$ in two steps. First take for any $d\in {\sf Ind}(\Bmc)$
a concept name $A_{d}$ and define a TBox $\Hmc_{\Bmc}$ with the
following CIs:
$$
\begin{array}{r@{\;}c@{\;}l@{\;}l@{}l}
   {\sf dom} & \sqsubseteq  & \midsqcup_{d\in {\sf Ind}(\Bmc)} A_{d} \\[1mm]
   A_{d} \sqcap A_{e} &\sqsubseteq& \bot & \hspace*{-6mm} \text{for all } d,e\in {\sf Ind}(\Bmc), d\not=e \\[1mm]
  A_{d} \sqcap \exists r. A_{e} &\sqsubseteq& \bot & \hspace*{-6mm} \text{for all } d,e\in {\sf Ind}(\Bmc), r \in \Sigma, r(d,e)\not\in \Bmc\\[1mm]
  A_d \sqcap A &\sqsubseteq& \bot & \hspace*{-6mm} \text{for all } d\in {\sf Ind}(\Bmc), A \in \Sigma, A(d)\not\in \Bmc.
\end{array}
$$
Here ${\sf dom} \sqsubseteq \midsqcup_{d\in {\sf Ind}(\Bmc)} A_{d}$ stands for the set of CIs 
$$
 \exists r .\top \sqsubseteq \midsqcup_{d\in {\sf Ind}(\Bmc)} A_{d}, \quad
 A \sqsubseteq \midsqcup_{d\in {\sf Ind}(\Bmc)} A_{d}, \quad
 \top \sqsubseteq \forall r. (\midsqcup_{d\in {\sf Ind}(\Bmc)} A_{d})
$$
where $r$ and $A$ range over all role and concept names in $\Sigma$,
respectively.  We use a CI of the form ${\sf dom} \sqsubseteq C$
rather than $\top \sqsubseteq C$ to ensure that the TBox $\Hmc_{\Bmc}$
admits trivial models. It should also be clear that $\Tmc$ is $\Sigma$-extensional.
Now let $M$ be a fresh concept name. Then the
following can be proved in a straightforward way.

\medskip
\noindent 
{\bf Claim 1.} For any ABox $\Amc$ the following conditions are equivalent:
\begin{enumerate}
\item $\Hmc_{\Bmc},\Amc|_{\Sigma}\not\models \exists x \, M(x)$;
\item $\Amc|_{\Sigma}$ is consistent w.r.t.~$\Hmc_{\Bmc}$;
\item $\Amc|_{\Sigma} \rightarrow \Bmc$.
\end{enumerate}  
Thus, ${\sf CSP}(\Bmc)$ and the complement of the query evaluation problem for $(\Hmc_{\Bmc},\exists x \, M(x))$
are reducible to each other in polynomial time. Because of the
disjunctions, however, the query evaluation problem w.r.t~$\Hmc_{\Bmc}$ is
typically {\sc coNP}-hard even if ${\sf CSP}(\Bmc)$ is in \PTime. 

In the second step, we thus `hide' the concept names $A_{d}$ by
replacing them with their abstractions $H_{A_{d}}$.  Let $\Hmc_{\Bmc}'
\cup \Tmc^{\exists}$ be the enriched $\Sigma$-abstraction of
$\Hmc_{\Bmc}$. From Claim~1 and Lemma~\ref{lem:abstraction} (1)
according to which $\Amc|_{\Sigma}$ is consistent w.r.t.~$\Hmc_{\Bmc}$
iff $\Amc$ is consistent w.r.t.~$\Hmc_{\Bmc}'\cup \Tmc^{\exists}$, we
obtain

\medskip
\noindent
{\bf Claim 2.} For any ABox $\Amc$ not containing the concept name $M$, the following conditions are
equivalent:
\begin{enumerate}
\item $\Hmc_{\Bmc}'\cup \Tmc^{\exists},\Amc \not\models \exists x \, M(x)$;
\item $\Amc$ is consistent w.r.t.~$\Hmc_{\Bmc}'\cup \Tmc^{\exists}$;
\item $\Amc|_{\Sigma} \rightarrow \Bmc$.
\end{enumerate} 
Let $\Tmc_{\Bmc}=  \Hmc_{\Bmc}'\cup \Tmc^{\exists}$ be the
enriched $\Sigma$-abstraction of $\Hmc_{\Bmc}$.
We show that $\Tmc_{\Bmc}$ is as required to prove
Theorem~\ref{thm:red0}.
The theorem comprises two points:

\medskip
\noindent
(1) We have to show that ${\sf CSP}(\Bmc)$ is equivalent to the
complement of the OMQ $(\Tmc_{\Bmc},\exists x \, M(x))$. This
is an immediate consequence of Claim~2.

\medskip
\noindent
(2) For the converse reduction, let $q$ be a PEQ. We have to show that
the query evaluation problem for $(\Tmc_{\Bmc},q)$ is
reducible in polynomial time to the complement of ${\sf CSP}(\mathcal{B})$.
Let \Amc be an ABox and $\vec{a}$ from ${\sf Ind}(\Amc)$.  We show that
the following are equivalent:
\begin{enumerate}[label=(\alph*),leftmargin=6.5mm]
\item $\Tmc_{\Bmc},\Amc\models q(\vec{a})$;
\item $\Amc|_{\Sigma} \not\rightarrow \Bmc$ or $\Tmc^{\exists},\Amc\models q(\vec{a})$.
\end{enumerate}
Regarding (b), we remark that checking whether
$\Tmc^{\exists},\Amc\models q(\vec{a})$ can be part of the
reduction since,  by Lemma~\ref{lem:abstraction} (3), it needs only polynomial
time. First assume that (a) holds.
If $\Amc|_{\Sigma} \rightarrow \Bmc$, then by Claim~1 
the ABox $\Amc|_{\Sigma}$ is consistent
w.r.t.~$\Hmc_{\Bmc}$. 
By Lemma~\ref{lem:abstraction} (4), we obtain 
$\Tmc_{\Bmc},\Amc\models q(\vec{a})$ iff $\Tmc^{\exists},\Amc\models q(\vec{a})$ for all PEQs $q$ and all $\vec{a}$, as required.

Conversely, assume (b) holds. If $\Amc|_{\Sigma} \not\rightarrow
\Bmc$, then by Claim~2 $\Amc$ is not consistent
w.r.t.~$\Tmc_{\Bmc}$ and so $\Tmc_{\Bmc},\Amc\models q(\vec{a})$. If $\Tmc^{\exists},\Amc\models q(\vec{a})$, then
$\Tmc_{\Bmc},\Amc\models q(\vec{a})$ since $\Tmc^{\exists} \subseteq
\Tmc_{\Bmc}$. 
\end{proof}

\smallskip

We close this section by illustrating an example consequence of
Theorem~\ref{thm:red0}. It was proved in
\cite{DBLP:journals/siamcomp/FederV98} that there are CSPs that are in
{\sc PTime} yet not rewritable into Datalog, and in fact also not into
Datalog$^{\not=}$ due to the results in
\cite{DBLP:conf/lics/FederV03}. This was strengthened to CSPs that
contain no relations of arity larger than two in
\cite{DBLP:journals/ejc/Atserias08}. It was also observed in
\cite{DBLP:journals/siamcomp/FederV98} that there are CSPs that are
rewritable into Datalog, but not into monadic Datalog (such as the CSP
expressing 2-colorability). This again extends to Datalog$^{\not=}$
and applies to CSPs with relations of arity at most two. With this in
mind, the following is a consequence of Theorems~\ref{thm:red0}
and~\ref{equivalence}.  \medskip
\begin{theorem}
~\\[-4mm]
  \begin{enumerate}
  \item 
  There are \ALC-TBoxes \Tmc such that PEQ-evaluation w.r.t.\ \Tmc is 
  in \PTime, but \Tmc is not Datalog-rewritable for ELIQs;
  \item
   there are \ALC-TBoxes that are Datalog-rewritable for PEQs, but not
   monadic Datalog-rewritable for ELIQs.
  \end{enumerate}
\end{theorem}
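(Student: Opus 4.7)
The plan is to transfer known non-rewritability phenomena from non-uniform CSPs to \ALC-TBoxes via Theorem~\ref{thm:red0}, while using Point~(2) of that theorem and Lemma~\ref{lem:abstraction}~(3) to lift tractability and Datalog-expressibility in the reverse direction. Two results from the paragraph preceding the theorem are the input: (a)~there exists a template $\Bmc_1$ using only binary relations such that ${\sf CSP}(\Bmc_1)$ is in \PTime while its complement is not Datalog$^{\not=}$-expressible \cite{DBLP:journals/siamcomp/FederV98,DBLP:conf/lics/FederV03,DBLP:journals/ejc/Atserias08}, and (b)~there exists a template $\Bmc_2$ (for instance the one witnessing 2-colorability) whose complement is Datalog-expressible but not monadic Datalog-expressible \cite{DBLP:journals/siamcomp/FederV98}. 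I would apply Theorem~\ref{thm:red0} to each and set $\Tmc_1 := \Tmc_{\Bmc_1}$ and $\Tmc_2 := \Tmc_{\Bmc_2}$, both materializable \ALC-TBoxes; since we stay inside \ALC, the closing remark of Theorem~\ref{equivalence} identifies Datalog$^{\not=}$- with Datalog-rewritability throughout.

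For Point~(1), PEQ-evaluation w.r.t.~$\Tmc_1$ will be shown to be in \PTime by applying Point~(2) of Theorem~\ref{thm:red0}: for every PEQ $q$, the OMQ $(\Tmc_1,q)$ reduces in polynomial time to the complement of ${\sf CSP}(\Bmc_1)$, which is tractable by choice of $\Bmc_1$. The negative half goes by contradiction: if $\Tmc_1$ were Datalog-rewritable for ELIQ, then by Theorem~\ref{equivalence} the same would hold for PEQ, so in particular the OMQ $(\Tmc_1,\exists x\, M(x))$ would admit a Datalog rewriting; but Point~(1) of Theorem~\ref{thm:red0} identifies this OMQ with the complement of ${\sf CSP}(\Bmc_1)$ on $\Sigma$-ABoxes, contradicting the choice of $\Bmc_1$.

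For Point~(2), I would first show that $\Tmc_2$ is Datalog-rewritable for PEQ. The key tool is the disjunctive characterization extracted from the proof of Theorem~\ref{thm:red0},
\[
\Tmc_2,\Amc \models q(\vec{a}) \iff \Amc|_\Sigma \not\rightarrow \Bmc_2 \;\;\text{or}\;\; \Tmc^\exists,\Amc \models q(\vec{a}),
\]
valid for every PEQ $q$ and every ABox $\Amc$. The first disjunct is Datalog-expressible by the choice of $\Bmc_2$, the second is (monadic) Datalog$^{\not=}$-rewritable by Lemma~\ref{lem:abstraction}~(3), and the two can be combined into a single Datalog program via disjoint IDB vocabularies with a common goal predicate. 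In the other direction, if $\Tmc_2$ were monadic Datalog-rewritable for ELIQ, Theorem~\ref{equivalence} would produce a monadic Datalog rewriting of $(\Tmc_2,\exists x\, M(x))$ and hence, via Point~(1) of Theorem~\ref{thm:red0}, a monadic Datalog rewriting of the complement of ${\sf CSP}(\Bmc_2)$, contradicting the choice of $\Bmc_2$. The only mildly delicate point is verifying that the disjunctive combination lives in pure Datalog rather than merely Datalog$^{\not=}$, which is unproblematic because $\Tmc^\exists$ uses no functional roles and the rewriting from Lemma~\ref{lem:abstraction}~(3) is pure Datalog in that case.
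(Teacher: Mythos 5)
Your proposal is correct and follows essentially the same route as the paper, which obtains the theorem directly as a consequence of Theorems~\ref{thm:red0} and~\ref{equivalence} together with the cited CSP (non-)expressibility facts (a \PTime{} CSP over binary relations whose complement is not Datalog$^{\not=}$-expressible, and 2-colorability for the Datalog/monadic-Datalog separation); your write-up merely spells out details the paper leaves implicit, such as the explicit disjunctive combination of the Datalog program for $\Amc|_\Sigma \not\rightarrow \Bmc$ with the rewriting of $\Tmc^\exists$. The one point worth stating explicitly is that the side condition ``unless {\sc PTime} $=$ {\sc coNP}'' in Theorem~\ref{equivalence}~(2) is vacuous here because the TBoxes $\Tmc_{\Bmc}$ produced by Theorem~\ref{thm:red0} are materializable, so the conclusion is unconditional as the theorem requires.
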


\newcommand{\gridsig}{g}

\section{Non-Dichotomy and Undecidability in \ALCF}
\label{sec:undec}
We show that the complexity landscape of query evaluation w.r.t.\
\ALCF-TBoxes is much richer than for $\mathcal{ALCI}$, and in fact too
rich to be fully manageable.  In particular, we prove that for
CQ-evaluation, there is no dichotomy between \PTime and {\sc coNP}
(unless {\PTime}$\,$=$\,${\NP}).  We also establish that
materializability, (monadic) Datalog$^{\neq}$-rewritability, \PTime
query evaluation, and {\sc coNP}-hardness of query evaluation are
undecidable.  We start with the undecidability proofs, which are by
reduction of an undecidable rectangle tiling problem and reuse the
`hidden concepts' introduced in the previous section. Next, the TBox
from that reduction is adapted to prove the non-dichotomy result by an
encoding of the computations of nondeterministic polynomial time
Turing machines (again using hidden
concepts). 
The basis for the technical development in this section is a TBox
constructed in \cite{emptiness} to prove the undecidability of
\emph{query emptiness} in \ALCF.

\newcommand{\xinv}{\ensuremath{x^{-}}}
\newcommand{\yinv}{\ensuremath{y^{-}}}

An instance of the finite rectangle tiling problem is given by a
triple $\mathfrak{P}=(\Tmf,H,V)$ with $\Tmf$ a finite set
of \emph{tile types} including an \emph{initial tile} $T_\mn{init}$ to
be placed on the lower left corner and a \emph{final tile}
$T_\mn{final}$ to be placed on the upper right corner, $H \subseteq
\Tmf \times \Tmf$ a \emph{horizontal matching relation}, and $V
\subseteq \Tmf \times \Tmf$ a \emph{vertical matching relation}. A
\emph{tiling} for $(\Tmf,H,V)$ is a map $f:\{0,\dots,n\} \times
\{0,\dots,m\} \rightarrow \Tmf$ such that $n,m \geq 0$,
$f(0,0)=T_\mn{init}$, $f(n,m)=T_\mn{final}$, $(f(i,j),f(i+1,j)) \in H$
for all $i < n$, and $(f(i,j),f(i,j+1)) \in V$ for all $i < m$. We say
that $\mathfrak{P}$ \emph{admits a tiling} if there exists a map $f$
that is a tiling for $\mathfrak{P}$. It is undecidable whether an
instance of the finite rectangle tiling problem admits a tiling.

Now let $\mathfrak{P}=(\Tmf,H,V)$ be a finite rectangle tiling problem
with $\Tmf=\{T_1,\dots,T_p\}$.  We regard the tile types in
$\mathfrak{T}$ as concept names and set $\Sigma_{g} = \{
T_1,\dots,T_p,x,y, \hat{x}, \hat{y}\}$, where $x$, $y$, $\hat{x}$, and
$\hat{y}$ are \emph{functional} role names. The TBox
$\Tmc_{\mathfrak{P}}$ is defined as the following set of CIs, where
$(T_i,T_j,T_\ell)$ range over all triples from \Tmf such that
$(T_i,T_j) \in H$ and $(T_i,T_\ell) \in V$ and where for $e \in \{c,
x, y\}$ the concept $B_{e}$ ranges over all conjunctions $L_1 \sqcap
L_2$ with $L_i\in \{Z_{e,i},\neg Z_{e,i}\}$, for concept names
$Z_{e,i}$~($i=1,2$):
\begin{align*}
 T_\mn{final} & \sqsubseteq  Y \sqcap U \sqcap R \\
 \exists x . (U \sqcap Y \sqcap T_j) \sqcap I_x \sqcap T_i & \sqsubseteq  U \sqcap Y  \\
 \exists y . (R \sqcap Y \sqcap T_{\ell}) \sqcap I_y \sqcap T_{i} & \sqsubseteq  R \sqcap Y \\
  \exists x . (T_{j} \sqcap Y \sqcap \exists y . Y) 
\sqcap   \exists y . (T_{\ell} \sqcap Y \sqcap \exists x . Y)
 \sqcap I_x \sqcap I_y \sqcap C \sqcap T_i & \sqsubseteq  Y  \\  
 Y \sqcap T_\mn{init} & \sqsubseteq  A \\[2mm]
   B_x \sqcap \exists x. \exists \hat{x}. B_{x} & \sqsubseteq  I_x \\
  B_y \sqcap \exists y. \exists \hat{y}. B_{y} & \sqsubseteq  I_y \\
  \exists x . \exists y . B_c \sqcap \exists y . \exists x . B_c 
 &\sqsubseteq C \\ 
 \midsqcup_{1 \leq s < t \leq p} T_s \sqcap T_t  & \sqsubseteq  \bot 
%
%
 \end{align*}
 $$ U   \sqsubseteq  \forall y. \bot\quad
R   \sqsubseteq  \forall x. \bot\quad
 U   \sqsubseteq \forall x. U\quad
 R   \sqsubseteq  \forall y. R
$$
$$ 
Y \sqcap T_{\mn{init}}  \sqsubseteq  D \sqcap L \quad
D   \sqsubseteq  \forall \hat{y}. \bot\quad
L   \sqsubseteq  \forall \hat{x}. \bot\quad
D   \sqsubseteq \forall x. D \sqcap \forall \hat{x}. D \quad
L   \sqsubseteq  \forall y. L \sqcap \forall \hat{y} . L 
$$
With the exception of the CIs in the last line, the TBox
$\Tmc_{\mathfrak{P}}$ has been defined and analyzed in
\cite{emptiness}. Here, we briefly give intuition and discuss its main
properties. The role names $x$ and $y$ are used to represent
horizontal and vertical adjacency of points in a rectangle. The role
names $\hat{x}$ and $\hat{y}$ are used to simulate the inverses of $x$
and $y$.  The concept names in $\Tmc_{\mathfrak{P}}$ serve the
following puroposes:
\begin{itemize}

\item $U$, $R$, $L$, and $D$ mark the upper, right, left, and lower
  (`down') border of the rectangle.

\item In the $B_{c}$ concepts, the concept names $Z_{c,1}$ and
  $Z_{c,2}$ serve as second-order variables and ensure that a flag $C$
  is set at positions where the grid cell is closed. 

\item In the concepts $B_{x}$ and $B_{y}$, the concept names $Z_{x,1},
  Z_{x,2}, Z_{y,1}, Z_{y,2}$ also serve as second-order variables and
  ensure that flags $I_x$ and $I_y$ are set at positions where $x$ and
  $\hat{x}$ as well as $y$ and $\hat{y}$ are inverse to each other.

\item The concept name $Y$ is propagated through the grid from the
  upper right corner to the lower left one, ensuring that these flags
  are set everywhere, that every position of the grid is labeled with
  at least one tile type, and that the horizontal and vertical
  matching conditions are satisfied.

\item Finally, when the lower left corner of the grid is reached, the
  concept name $A$ is set as a flag.

\end{itemize}
Because of the use of the concepts $B_{e}$, CQ evaluation
w.r.t.~$\Tmc_{\mathfrak{P}}$ is coNP-hard: we leave it as an exercise
to the reader to verify that $\Tmc_\Pmf$ does not have the ABox disjunction
property. $\Tmc_{\mathfrak{P}}$ without the three CIs involving the
concepts $B_{e}$, however, is (equivalent to) a Horn-$\mathcal{ALCF}$
TBox and thus enjoys \PTime CQ-evaluation. Call a $\Sigma_{\gridsig}$-ABox
$\Amc$ an \emph{grid ABox (with initial individual $a$)} if
$\Amc$ represents a grid along with a 
proper tiling for~$\mathfrak{P}$. In detail, we require
that there is a tiling $f$ for $\mathfrak{P}$ with domain
$\{0,\ldots,n\}\times \{0,\ldots,m\}$ and a bijection $g:
\{0,\ldots,n\}\times \{0,\ldots,m\} \rightarrow {\sf Ind}(\Amc)$ with
$g(0,0)=a$ such that
\begin{itemize}
\item for all $j < n$, $k \leq m$: $T(g(j,k))\in \Amc$ iff $T = f(j,k)$;
\item for all $b_{1},b_{2} \in {\sf Ind}(\Amc)$: $x(b_{1},b_{2})\in
  \Amc$ iff $\hat{x}(b_{2},b_{1}) \in \Amc$ iff there are $j < n$, $k \leq m$ such that
$(b_{1},b_{2})=(g(j,k),g(j+1,k))$;
\item for all $b_{1},b_{2} \in {\sf Ind}(\Amc)$: $y(b_{1},b_{2})\in
  \Amc$ iff $\hat{y}(b_{2},b_{1}) \in \Amc$ iff there are $j \leq n$, $k < m$ such that
$(b_{1},b_{2})=(g(j,k),g(j,k+1))$.
\end{itemize}
Clearly, if $\mathfrak{P}$ admits a tiling then a grid ABox exists and
for any grid ABox $\Amc$, $\Tmc_{\mathfrak{P}},\Amc \models A(a)$ for
the (uniquely determined) initial individual $a$ of $\Amc$.  The
following summarizes relevant properties of $\Sigma_g$-ABoxes that
follow almost directly from the analysis of $\Tmc_{\mathfrak{P}}$ in
\cite{emptiness}.  We say that an ABox \Amc \emph{contains} an ABox
$\Amc'$ if $\Amc' \subseteq \Amc$ and that $\Amc$ \emph{contains a
  closed ABox \Amc} if, additionally, $r(a,b)\in \Amc$ and
$a\in \mn{Ind}(\Amc')$ implies $r(a,b)\in \Amc'$ for
$r \in \{x,y,\hat{x},\hat{y}\}$.  Moreover, we say that
\emph{inconsistency of ($\Sigma$-)ABoxes w.r.t.~a TBox $\Tmc$ is
  monadic Datalog$^{\neq}$-rewritable} if there is a Boolean
monadic Datalog$^{\neq}$-program $\Pi$ such that for any ($\Sigma$-)ABox \Amc,
$\Amc \models \Pi()$ iff $\Amc$ is inconsistent w.r.t.~\Tmc.
%
%
\begin{lemma}\label{lem:descr}
Let $\mathfrak{P}$ be a finite rectangle tiling problem. Then the following holds.
\begin{enumerate}
\item $\Tmc_{\mathfrak{P}}$ admits trivial models and is $\Sigma_{\gridsig}$-extensional.
\item Inconsistency of $\Sigma_{\gridsig}$-ABoxes 
  w.r.t.~$\Tmc_{\mathfrak{P}}$ is
  monadic Datalog$^{\neq}$-rewritable.
\item If a $\Sigma_{\gridsig}$-ABox $\Amc$ is consistent
  w.r.t.~$\Tmc_{\mathfrak{P}}$, then \Amc contains
\begin{itemize}
\item closed grid ABoxes $\Amc_{1 },\ldots,\Amc_{n}$, $n \geq 0$, with mutually disjoint sets $\mn{Ind}(\Amc_i)$ and
\item a (possibly empty) $\Sigma_{\gridsig}$-ABox $\Amc'$ disjoint from $\Amc_{1}\cup \cdots \cup \Amc_{n}$
\end{itemize}
such that $\Amc= \Amc_{1}\cup \cdots \cup \Amc_{n}\cup \Amc'$ and $\Tmc_{\mathfrak{P}},\Amc\models A(a)$ iff 
$a$ is the initial node of some~$\Amc_{i}$. Moreover, there is
a model \Imc of $\Amc$ witnessing $\Sigma_{\gridsig}$-extensionality of $\Tmc_{\mathfrak{P}}$ that satisfies $a\in A^{\Imc}$ iff 
$a$ is the initial node of some $\Amc_{i}$.
\end{enumerate}
\end{lemma}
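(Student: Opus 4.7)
The proof splits into the three parts, with Point~(3) being the main technical content.

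For Point~(1), the trivial model check reduces to noting that every CI in $\Tmc_{\mathfrak{P}}$ has a left-hand side that is a concept name, a conjunction involving concept names and existential restrictions, or similarly non-trivial; no CI has LHS $\top$. Hence under the singleton interpretation with all predicates empty, every CI is vacuously satisfied. For $\Sigma_{\gridsig}$-extensionality I would start from some model $\Jmc$ of $\Tmc_{\mathfrak{P}}$ and $\Amc$ (which exists by consistency) and define $\Imc$ by setting $\Delta^{\Imc} = \mn{Ind}(\Amc)$, $r^{\Imc} = \{(a,b) \mid r(a,b) \in \Amc\}$ for each $r \in \Sigma_{\gridsig}$, $T_i^{\Imc} = \{a \mid T_i(a) \in \Amc\}$, and $X^{\Imc} = X^{\Jmc} \cap \mn{Ind}(\Amc)$ for every concept name $X \notin \Sigma_{\gridsig}$. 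The crucial observation is that no right-hand side of a CI in $\Tmc_{\mathfrak{P}}$ contains an $\exists r . D$ subconcept, so shrinking the domain and restricting the role extensions to exactly those in $\Amc$ can only remove LHS matches, while leaving the non-$\Sigma_{\gridsig}$ part of each RHS as inherited from $\Jmc$; a CI-by-CI check confirms $\Imc$ is still a model.

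For Point~(2), I would write an explicit monadic Datalog$^{\neq}$ program by taking a unary IDB for each non-$\Sigma_{\gridsig}$ concept name appearing in $\Tmc_{\mathfrak{P}}$, plus the goal relation. The only non-Horn ingredient is the second-order reading of $I_x, I_y, C$ via the $B_e$-concepts; however, on $\Sigma_{\gridsig}$-ABoxes these flags are forced precisely by purely structural conditions (for instance, $I_x(a)$ is forced iff $a$ has an $x$-successor $b$ with $\hat{x}(b,a) \in \Amc$, because arbitrary assignments of the second-order variables $Z_{x,1}, Z_{x,2}$ restrict any witness for $\exists x . \exists \hat{x} . B_x$ to coincide with $a$), which are Datalog-expressible. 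The remaining rules are direct translations of the $\sqsubseteq \bot$ CIs into goal rules (with $\neq$ used to detect functionality clashes on the roles in $\Sigma_{\gridsig}$), together with the propagation rules for $U, R, D, L, Y$. Correctness follows because any inconsistency must ultimately be witnessed by one of the $\bot$-CIs, all of which are captured.

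Point~(3) is the main work. The CIs driving $Y$-derivation on a $\Sigma_{\gridsig}$-ABox permit only three avenues: the seed $T_\mn{final} \sqsubseteq Y$, the two border-propagation CIs that extend $Y$ leftward along $U$-marked $x$-edges and downward along $R$-marked $y$-edges, and the inward-propagation CI that fires only at a closed $2 \times 2$ cell whose four corners already carry $Y$, whose tiles match horizontally and vertically, and at which $I_x, I_y, C$ hold. Invoking the analysis of $\Tmc_{\mathfrak{P}}$ from \cite{emptiness}, together with the structural characterisation of $I_x, I_y, C$ from Point~(2), these conditions force every $Y$-marked individual to belong to a properly tiled grid substructure embedded in $\Amc$. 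I would collect the maximal such grid substructures as $\Amc_1, \ldots, \Amc_n$ and let $\Amc'$ be the rest; disjointness of individuals and closedness under the $\Sigma_{\gridsig}$-roles follow because tile-, functionality- or border-clashes would otherwise be derivable, contradicting consistency. The entailment characterisation of $A$ is then just $Y \sqcap T_\mn{init} \sqsubseteq A$ firing exactly at the initial corners of the $\Amc_i$. For the moreover clause I would reuse the construction of Point~(1), interpreting $A, Y, U, R, D, L, I_x, I_y, C$ on each $\Amc_i$ as forced by its grid structure and as empty on $\Amc'$. The main obstacle is verifying that this selective assignment does not accidentally fire the inward $Y$-propagation on $\Amc'$, which reduces to showing that $I_x, I_y, C$ remain unforced outside grids; this is where the second-order nature of the $B_e$-CIs plays its intended role, since outside a grid one can always choose second-order variables $Z_{e,i}$ avoiding the relevant matches.
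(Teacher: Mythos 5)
Your proof is correct and follows essentially the same route as the paper's: both reduce the non-Horn $B_e$-CIs to the purely structural forcing conditions for $I_x$, $I_y$, $C$ on ABox individuals (the paper's rules (a)--(c)), pass to the resulting Horn-$\mathcal{ALCF}$ TBox to obtain the monadic Datalog$^{\neq}$-rewriting of inconsistency, and defer the grid analysis underlying Point~(3) to \cite{emptiness}. The only step the paper makes more explicit is that closedness of the components $\Amc_i$ under $\hat{x}$ and $\hat{y}$ does \emph{not} follow from \cite{emptiness} but needs the $L$/$D$-CIs (which forbid $\hat{x}$- and $\hat{y}$-successors along the left and bottom borders); your appeal to ``border-clashes'' covers this only implicitly.
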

%
%
%
%
\begin{proof} 
(1) is a straightforward consequence of the definition of $\Tmc_{\mathfrak{P}}$.
(2) Assume a $\Sigma_{\gridsig}$-ABox $\Amc$ is given. Apply the following
  rules exhaustively to $\Amc$:
\begin{enumerate}
\item[(a)] add $I_{x}(a)$ to $\Amc$ if there exists $b$ with $x(a,b), \hat{x}(b,a)\in \Amc$;
\item[(b)] add $I_{y}(a)$ to $\Amc$ if there exists $b$ with $y(a,b), \hat{y}(b,a)\in \Amc$;
\item[(c)] add $C(a)$ to $\Amc$ if there exist $a_{1},a_{2},b$ with $x(a,a_{1})$, $y(a,a_{2})$, $y(a_{1},b)$, $x(a_{2},b)\in \Amc$.
\end{enumerate}
Denote the resulting ABox by $\Amc^{\dagger}$. Now remove the three
CIs involving the concepts $B_{e}$ from $\Tmc_{\mathfrak{P}}$ and
denote by $\Tmc_{\mathfrak{P}}^{\dagger}$ the resulting TBox.  Using
the analysis of the CIs involving the concepts $B_{e}$ in
\cite{emptiness}, one can show that $\Amc$ is consistent
w.r.t.~$\Tmc_{\mathfrak{P}}$ iff $\Amc^{\dagger}$ is consistent
w.r.t.~$\Tmc_{\mathfrak{P}}^{\dagger}$.  Since the latter is a
Horn-$\mathcal{ALCF}$-TBox, it is unraveling tolerant and one can
build a monadic Datalog$^{\neq}$-rewriting of the inconsistency of
$\Sigma_{\gridsig}$-ABoxes w.r.t.\ $\Tmc_{\mathfrak{P}}^{\dagger}$, essentially
as in the proof of Theorem~\ref{thm:unravupperlem:hornalclem:hornalc}.
Finally, the obtained program can be modified so as to behave as if
started on $\Amc^\dagger$ when started on \Amc, by implementing
rules (a) to (c) as monadic Datalog rules.

\smallskip 

(3) This is almost a direct consequence of the properties established
in \cite{emptiness}. In particular, one finds the desired model $\Imc$
from the `moreover' part by applying the three rules from the proof of
Point~(2) and then applying the CIs in $\Tmc^\dagger_\Pmf$ as rules.
The only condition on the decomposition of the ABox $\Amc$ into $\Amc=
\Amc_{1}\cup \cdots \cup \Amc_{n}\cup \Amc'$ that does not follow from
\cite{emptiness} is that the containment of $\Amc_{1},\ldots,\Amc_{n}$
in \Amc is closed also for the role names $\hat{x}$ and $\hat{y}$.  To
ensure this condition, we use the CIs that mention $L$ and $D$ that were
not present in the TBox used in that paper. In fact, the following two
properties follow directly from these CIs: (i)~the
individuals $c$ reachable along an $x$-path in $\Amc$ from some $a$
with $\Tmc_{\mathfrak{P}},\Amc \models A(a)$ all satisfy
$\Tmc_{\mathfrak{P}},\Amc \models D(c)$ and so do not have an
$\hat{x}$-successor; and (ii)~the individuals $c$ reachable along a
$y$-path in $\Amc$ from some $a$ with $\Tmc_{\mathfrak{P}},\Amc
\models A(a)$ all satisfy $\Tmc_{\mathfrak{P}},\Amc \models L(c)$ and
so do not have a $\hat{y}$-successor. (i) and (ii) together with the
properties established in \cite{emptiness} entail that
the containment of $\Amc_{1},\ldots,\Amc_{n}$
in \Amc is closed also for the role names $\hat{x}$ and $\hat{y}$, as required.
\end{proof}
Note that it follows from Lemma~\ref{lem:descr} (3) that if $\Amc$ is
consistent w.r.t.~$\Tmc$, then the sequence $\Amc_{1},\ldots,\Amc_{n}$
is empty iff $\Tmc_{\mathfrak{P}},\Amc\not\models\exists x \,
A(x)$ (and $\Amc'$ is non-empty since ABoxes are non-empty). In
particular this must be the case when \Pmf does not admit a tiling.
In the proof Lemma~\ref{lem:undecidability} below, this is actually all we need
from Lemma~\ref{lem:descr} (3). In full generality, it will only
be used in the proof of non-dichotomy later on. We also remark
that the decomposition $\Amc_{1}\cup \cdots \cup \Amc_{n}\cup \Amc'$
of \Amc in Lemma~\ref{lem:descr} (3) is unique.

\smallskip

Let $\Tmc = \Tmc_{\mathfrak{P}} \cup \{A \sqsubseteq B_{1} \sqcup
B_{2}\}$, where $B_{1}$ and $B_{2}$ are fresh concept names.  Set
$\Sigma = \Sigma_g \cup \{B_1,B_2\}$ and let $\Tmc' \cup \Tmc^\exists$
be the enriched $\Sigma$-abstraction of $\Tmc$.
\begin{lemma}\label{lem:undecidability}~\\[-4mm]
  \begin{enumerate}

  \item If $\mathfrak{P}$ admits a tiling, then CQ-evaluation w.r.t.~$\Tmc' \cup \Tmc^{\exists}$ is {\sc coNP}-hard and $\Tmc' \cup \Tmc^{\exists}$ is not materializable.

  \item If $\mathfrak{P}$ does not admit a tiling, then CQ-evaluation
    w.r.t.~$\Tmc' \cup \Tmc^{\exists}$ is monadic Datalog$^{\neq}$-rewritable
    and $\Tmc' \cup \Tmc^{\exists}$ is materializable.
  \end{enumerate}
\end{lemma}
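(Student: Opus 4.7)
For Part (1), the plan is to exploit the fact that a tiling for $\mathfrak{P}$ produces an ABox at which $A$ is derived, making the extra CI $A \sqsubseteq B_1 \sqcup B_2$ induce a genuine disjunction over the ELIQ-visible symbols $B_1,B_2$. Fix a tiling for $\mathfrak{P}$ and let $\Amc$ be the corresponding grid ABox with initial individual $a$; as noted before Lemma~\ref{lem:descr}, $\Tmc_{\mathfrak{P}},\Amc \models A(a)$. Applying Lemma~\ref{lem:abstraction}(2) to the abstracted concept $A' = H_A$ gives $\Tmc'\cup\Tmc^{\exists},\Amc \models H_A(a)$, and together with the abstracted CI $H_A \sqsubseteq B_1 \sqcup B_2$ (the concept names $B_1,B_2 \in \Sigma$ are not hidden) this yields $\Tmc'\cup\Tmc^{\exists},\Amc \models B_1(a) \vee B_2(a)$. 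For each $i \in \{1,2\}$, neither $B_i(a)$ is individually entailed: start from a grid model $\Imc$ of $\Tmc_{\mathfrak{P}}$ and $\Amc$ in which $a$ is placed in $B_{3-i}$ but not $B_i$ (giving a model of $\Tmc$), and then lift $\Imc$ to a model $\Jmc$ of $\Tmc'\cup\Tmc^{\exists}$ and $\Amc$ via the construction in the proof of Lemma~\ref{lem:abstraction}(1); that construction preserves the interpretation of concept names in $\Sigma$, so $a \notin B_i^{\Jmc}$. The ABox disjunction property thus fails, so by Theorem~\ref{thm:disjproperty} $\Tmc'\cup\Tmc^{\exists}$ is not materializable, and by Theorem~\ref{thm:nomatlower} ELIQ- and hence CQ-evaluation is {\sc coNP}-hard.

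For Part (2), the strategy is to observe that when $\mathfrak{P}$ admits no tiling the extra CI is vacuous on consistent ABoxes, so $\Tmc'\cup\Tmc^{\exists}$ effectively collapses onto the well-behaved $\Tmc^{\exists}$. By Lemma~\ref{lem:descr}(3) with $n=0$, every $\Sigma_g$-ABox $\Amc_0$ consistent w.r.t.~$\Tmc_{\mathfrak{P}}$ satisfies $\Tmc_{\mathfrak{P}},\Amc_0 \not\models \exists x\, A(x)$ and admits a $\Sigma_g$-extensional model $\Imc$ with $A^{\Imc}=\emptyset$; extending $\Imc$ by interpreting $B_1,B_2$ exactly as in a given $\Sigma$-ABox shows that $\Tmc = \Tmc_{\mathfrak{P}} \cup \{A \sqsubseteq B_1 \sqcup B_2\}$ is $\Sigma$-extensional, admits trivial models, and that consistency of $\Amc|_{\Sigma}$ w.r.t.~$\Tmc$ coincides with consistency of $\Amc|_{\Sigma_g}$ w.r.t.~$\Tmc_{\mathfrak{P}}$. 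Lemma~\ref{lem:abstraction}(1) then identifies consistency w.r.t.~$\Tmc'\cup\Tmc^{\exists}$ with the latter, which is monadic Datalog$^{\neq}$-rewritable by Lemma~\ref{lem:descr}(2). On consistent ABoxes, Lemma~\ref{lem:abstraction}(4) reduces PEQ-entailment w.r.t.~$\Tmc'\cup\Tmc^{\exists}$ to PEQ-entailment w.r.t.~$\Tmc^{\exists}$, and Lemma~\ref{lem:abstraction}(3) supplies a monadic Datalog$^{\neq}$-rewriting of each such PEQ. Combining the inconsistency program (which triggers the goal unconditionally) with the $\Tmc^{\exists}$-rewriting of each CQ yields the desired monadic Datalog$^{\neq}$-rewriting. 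For materializability, the model $\Jmc$ of $\Tmc'\cup\Tmc^{\exists}$ and $\Amc$ built as in Lemma~\ref{lem:abstraction}(1) admits, by the argument in the proof of Lemma~\ref{lem:abstraction}(4), a homomorphism into the hom-initial model $\Imc_{\Amc}^{\exists}$ of $\Tmc^{\exists}$ and $\Amc$; hence the PEQ-answers of $\Jmc$ coincide with those of $\Imc_{\Amc}^{\exists}$, and by Lemma~\ref{lem:abstraction}(4) with the certain answers of $\Tmc'\cup\Tmc^{\exists},\Amc$, so $\Jmc$ is a materialization.

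The main obstacle is the bookkeeping in Part (1) that tracks which symbols survive the abstraction: $B_1,B_2 \in \Sigma$ must remain visible to ELIQs so that the failure of the disjunction property is detectable, while $A \notin \Sigma$ must be hidden so that, in Part~(2), the disjunctive behaviour of $\Tmc_{\mathfrak{P}}$ with respect to the auxiliary concepts used in the $B_e$-CIs does not already obstruct materializability of $\Tmc'\cup\Tmc^{\exists}$. Part~(2) itself is then essentially a matter of reassembling the pieces prepared in Lemmas~\ref{lem:abstraction} and~\ref{lem:descr}, using the no-tiling hypothesis to neutralise the only source of nondeterminism introduced by the extra CI.
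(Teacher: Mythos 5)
Your proof is correct and follows essentially the same route as the paper: part~(1) derives failure of the ABox disjunction property from a grid ABox via Lemma~\ref{lem:abstraction}(2) and the hidden CI $A\sqsubseteq B_1\sqcup B_2$, and part~(2) combines the inconsistency rewriting from Lemma~\ref{lem:descr}(2) with the collapse onto $\Tmc^{\exists}$ given by Lemma~\ref{lem:abstraction}(3) and~(4) once the no-tiling hypothesis forces $A$ to be empty in an extensional model. You in fact go slightly beyond the paper's write-up by making explicit the countermodels for $\Tmc'\cup\Tmc^{\exists},\Amc\not\models B_i(a)$, the verification that $\Tmc$ is $\Sigma$-extensional in the no-tiling case, and the direct materializability argument for part~(2), all of which the paper leaves implicit.
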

\begin{proof}
  (1) If $\mathfrak{P}$ admits a tiling, then there is a
  grid ABox $\Amc$ with initial node $a$. $\Amc$ uses
  symbols from $\Sigma_{\gridsig}$, only.  We have
  $\Tmc_{\mathfrak{P}},\Amc\models A(a)$ and $\Amc$ is consistent
  w.r.t.~$\Tmc_{\mathfrak{P}}$.  
  By Lemma~\ref{lem:abstraction} (2), $\Tmc' \cup \Tmc^{\exists},
  \Amc\models A'(a)$ where $A'$ is the $\Sigma$-abstraction of $A$.
  Since $\Tmc'$ contains $A' \sqsubseteq B_1 \sqcup B_2$ and $B_1$ and
  $B_2$ do not occur elsewhere in $\Tmc' \cup \Tmc^\exists$, it is
  clear that $\Tmc' \cup \Tmc^{\exists}, \Amc\models B_{1}(a) \vee
  B_2(a)$ but $\Tmc' \cup \Tmc^{\exists}, \Amc\not\models B_{i}(a)$
  for $i=1,2$.  Thus $\Tmc' \cup \Tmc^{\exists}$ does not have the
  ABox disjunction property. It follows that $\Tmc'\cup
  \Tmc^{\exists}$ is not materializable and that CQ-evaluation
  w.r.t.~$\Tmc'\cup \Tmc^{\exists}$ is {\sc coNP}-hard.

  (2) Assume that $\mathfrak{P}$ does not admit a tiling. Let $q$ be a
  PEQ. We show how to construct a monadic Datalog$^{\neq}$-rewriting
  $\Pi$ of $(\Tmc' \cup \Tmc^\exists,q)$. On ABoxes \Amc that are
  inconsistent w.r.t.\ $\Tmc' \cup \Tmc^{\exists}$, $\Pi$ is supposed
  to return all tuples over $\mn{Ind}(\Amc)$ of the same arity as $q$.
  By Lemma~\ref{lem:abstraction} (1), an ABox \Amc is consistent
  w.r.t.~$\Tmc' \cup \Tmc^{\exists}$ iff
  $\Amc|_{\Sigma}$
  is consistent w.r.t.~$\Tmc$. It thus follows from
  Lemma~\ref{lem:descr} (2) that inconsistency of ABoxes w.r.t.\
  $\Tmc' \cup \Tmc^{\exists}$ is monadic Datalog$^{\neq}$-rewritable. From a
  concrete rewriting, we can build a monadic Datalog$^{\neq}$-program $\Pi_0$
  that checks inconsistency and, if successful, returns all answers.

  Now for ABoxes \Amc that are consistent w.r.t.\ $\Tmc' \cup
  \Tmc^\exists$.  By Lemma~\ref{lem:abstraction} (1),
  $\Amc|_{\Sigma}$ is consistent
  w.r.t.~$\Tmc_{\mathfrak{P}}$. Since $\mathfrak{P}$ does not admit a
  tiling and by Lemma~\ref{lem:descr} (2),
  $\Tmc_{\mathfrak{P}},\Amc|_{\Sigma}\not\models \exists x
  \, A(x)$. Thus, by Lemma~\ref{lem:descr} (1) we find a model $\Imc$
  of $\Tmc_{\mathfrak{P}}$ and $\Amc|_{\Sigma}$ such that
  $\Delta^{\Imc}= {\sf Ind}(\Amc)$, $B^{\Imc}=\{ a \mid B(a)\in
  \Amc\}$ for all $B\in \Sigma$, $r^{\Imc}=\{ (a,b) \mid r(a,b)\in
  \Amc\}$ for all $r \in \Sigma$, and
  $A^{\Imc}=\emptyset$. Since $A^{\Imc}=\emptyset$, $\Imc$ is a model of $\Tmc$. From
  Lemma~\ref{lem:abstraction} (4), we thus obtain that $\Tmc' \cup
  \Tmc^{\exists},\Amc\models q(\vec{a})$ iff
  $\Tmc^{\exists},\Amc\models q(\vec{a})$, for all~$\vec{a}$. By Lemma~\ref{lem:abstraction} (3),
  $(\Tmc^\exists,q)$ is monadic Datalog$^{\not=}$-rewritable into a program $\Pi_{1}$. 

  The desired program $\Pi$ is simply the union of $\Pi_0$ and
  $\Pi_1$, assuming disjointness of IDB relations.
\end{proof}
Lemma~\ref{lem:undecidability} implies the announced undecidability results.
\begin{theorem}\label{thm:undec}
For $\mathcal{ALCF}$-TBoxes $\Tmc$, the following problems are undecidable (Points~1 to~4
are subject to the side condition that $\PTime \neq
\NP$):
\begin{enumerate} 
\item CQ-evaluation w.r.t.~$\Tmc$ is in {\sc PTime};
\item CQ-evaluation w.r.t.~$\Tmc$ is {\sc coNP}-hard;
\item $\Tmc$ is monadic Datalog$^{\neq}$-rewritable;
\item $\Tmc$ is Datalog$^{\neq}$-rewritable;
\item $\Tmc$ is materializable.
\end{enumerate}
\end{theorem}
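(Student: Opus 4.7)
The plan is to obtain all five undecidability statements as a direct corollary of Lemma~\ref{lem:undecidability}, using the reduction $\mathfrak{P} \mapsto \Tmc' \cup \Tmc^{\exists}$ from the finite rectangle tiling problem. Since the construction of $\Tmc$ from $\mathfrak{P}$ and the subsequent passage to the enriched $\Sigma$-abstraction $\Tmc' \cup \Tmc^{\exists}$ are both effective (indeed polynomial), and since the finite rectangle tiling problem is undecidable, it suffices to show that for each property $P$ in the list, deciding $P$ for the TBox $\Tmc' \cup \Tmc^{\exists}$ is equivalent to deciding whether $\mathfrak{P}$ admits a tiling.

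For Point~(5) (materializability), Lemma~\ref{lem:undecidability} gives both directions directly: $\mathfrak{P}$ admits a tiling iff $\Tmc' \cup \Tmc^{\exists}$ is \emph{not} materializable. No side condition is needed here, which is why undecidability of materializability is unconditional. For Point~(2) (\textsc{coNP}-hardness of CQ-evaluation), the lemma again gives both directions immediately when combined with Theorem~\ref{equivalence}: if $\mathfrak{P}$ admits a tiling, CQ-evaluation is \textsc{coNP}-hard; if it does not, then $\Tmc' \cup \Tmc^{\exists}$ is monadic Datalog$^{\neq}$-rewritable, hence CQ-evaluation is in \PTime, and under $\PTime \neq \NP$ this excludes \textsc{coNP}-hardness.

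For Points~(1), (3), and (4), the argument is symmetric. If $\mathfrak{P}$ admits no tiling, Lemma~\ref{lem:undecidability}(2) yields monadic Datalog$^{\neq}$-rewritability, which entails Datalog$^{\neq}$-rewritability and \PTime CQ-evaluation. Conversely, if $\mathfrak{P}$ admits a tiling, Lemma~\ref{lem:undecidability}(1) yields that CQ-evaluation is \textsc{coNP}-hard; under $\PTime \neq \NP$ this rules out \PTime CQ-evaluation (Point~1) and, since Datalog$^{\neq}$-rewritings can be evaluated in \PTime in data complexity, it rules out Datalog$^{\neq}$-rewritability (Point~4), and a fortiori monadic Datalog$^{\neq}$-rewritability (Point~3). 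Thus in each case a decision procedure for the property in question would yield a decision procedure for the rectangle tiling problem, contradicting its undecidability.

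There is no real obstacle to overcome in this proof; the heavy lifting has already been done in Lemma~\ref{lem:undecidability} together with the earlier general results Theorem~\ref{thm:nomatlower}, Theorem~\ref{thm:unravupperlem:hornalclem:hornalc}, and Theorem~\ref{equivalence}. The only small care required is in phrasing the side condition: Points~(1)--(4) all rely on the fact that one cannot simultaneously have \textsc{coNP}-hardness and \PTime evaluation (or Datalog$^{\neq}$-rewritability) unless $\PTime = \NP$, whereas Point~(5) is purely model-theoretic and needs no complexity-theoretic assumption.
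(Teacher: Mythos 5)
Your proposal is correct and takes essentially the same route as the paper, which simply states that Lemma~\ref{lem:undecidability} implies the announced undecidability results; you have filled in the routine details (effectiveness of the construction, the role of the $\PTime \neq \NP$ side condition for Points~1--4, and the fact that Datalog$^{\neq}$ evaluation is in \PTime) correctly.
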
  
We now come to the proof of non-dichotomy.
\newcommand{\mytp}{\mathfrak{P}}

\begin{theorem}[Non-Dichotomy]\label{nondicho}
For every language $L \in \text{\sc coNP}$, there exists an
$\mathcal{ALCF}$-TBox $\Tmc$ such that, for a distinguished concept name $M_{0}$, the following holds:
\begin{enumerate}
\item $L$ is polynomial time reducible to the evaluation of
$(\Tmc,\exists x\, M_{0}(x))$;
\item the evaluation of $(\Tmc,q)$ is polynomial time reducible to $L$,
for all PEQs $q$.
\end{enumerate}
\end{theorem}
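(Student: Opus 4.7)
The plan is to refine the construction of Lemma~\ref{lem:undecidability} by replacing the fixed tiling system $\mathfrak{P}$ with a parametrised tiling system that simulates the step-by-step behaviour of a polynomial-time NTM accepting $\overline{L}$. Where the previous construction produced a binary dichotomy (``tiling exists'' triggers {\sc coNP}-hardness, otherwise monadic Datalog$^{\neq}$-rewritability), the new TBox's hard behaviour will be parametrised by an input $w$ read off the ABox, so that the complexity of query evaluation tracks that of~$L$.

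Since $L \in \text{\sc coNP}$, fix a polynomial-time NTM $N$ accepting $\overline{L}$ in time $p(n)$, and use the standard NTM-to-tiling encoding to design a tile set $\mathfrak{T}_N$ and matching relations whose grid ABoxes of size $p(|w|) \times p(|w|)$ with bottom row spelling $w$ correspond bijectively to accepting computations of $N$ on~$w$. Define $\Tmc_N$ by the CI pattern of $\Tmc_{\mathfrak{P}}$ from Section~\ref{sec:undec} with this new tileset, and establish the analogue of Lemma~\ref{lem:descr}: $\Tmc_N$ admits trivial models, is $\Sigma_\gridsig$-extensional, inconsistency of $\Sigma_\gridsig$-ABoxes w.r.t.~$\Tmc_N$ is monadic Datalog$^{\neq}$-rewritable, and every consistent $\Sigma_\gridsig$-ABox $\Amc$ decomposes uniquely as $\Amc_1 \cup \cdots \cup \Amc_n \cup \Amc'$, with each $\Amc_i$ a closed grid ABox encoding an accepting computation of $N$ on a polynomial-time-extractable input $w_i$, and $\Tmc_N,\Amc \models A(a)$ iff $a$ is the initial individual of some $\Amc_i$. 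Then, exactly as in Lemma~\ref{lem:undecidability}, form $\Tmc = \Tmc_N' \cup \Tmc^{\exists}$, the enriched $\Sigma$-abstraction of $\Tmc_N \cup \{A \sqsubseteq B_1 \sqcup B_2\}$, and choose $M_0$ to be the hidden witness for the disjunction (e.g.\ the $\Sigma$-abstraction image of $B_1$).

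For Point~1, $\Tmc$ is not materialisable whenever $\overline{L}\neq\emptyset$, since any grid ABox of the NTM tiling system witnesses failure of the disjunction property by the same argument as in the proof of Lemma~\ref{lem:undecidability}; the degenerate cases $\overline{L}=\emptyset$ and $L=\emptyset$ are handled separately. Theorem~\ref{thm:nomatlower} then gives {\sc coNP}-hardness of evaluating $\exists x \, M_0(x)$ w.r.t.~$\Tmc$, and composing with a standard polynomial-time reduction from $L\in\text{\sc coNP}$ to the complement of 2+2-SAT yields the desired polynomial-time reduction of $L$ to the query evaluation problem. For Point~2, given an ABox $\Amc$ and a PEQ $q(\vec{x})$, first decide consistency of $\Amc$ w.r.t.~$\Tmc$ in polynomial time by combining the analogue of Lemma~\ref{lem:descr}(2) with Lemma~\ref{lem:abstraction}(1); if inconsistent, return ``yes''. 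Otherwise, compute in polynomial time the decomposition of $\Amc|_{\Sigma_\gridsig}$ and extract the inputs $w_1,\ldots,w_n$; then evaluate $q$ by combining the monadic Datalog$^{\neq}$-rewriting of $\Tmc^{\exists}$ from Lemma~\ref{lem:abstraction}(3) with $L$-oracle calls on the~$w_i$ that collapse each disjunction $B_1(a_i) \vee B_2(a_i)$ in the $L$-dictated way.

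The hard part will be Point~2: one must argue that $L$-oracle calls alone suffice to resolve every disjunctive choice introduced by $A \sqsubseteq B_1 \sqcup B_2$. This amounts to showing that the only source of ``non-determinism'' in query evaluation w.r.t.~$\Tmc$ is the polynomially many disjunctions at the initial individuals~$a_i$ of grid sub-ABoxes, and that each such disjunction collapses after a single $L$-query on the associated input~$w_i$. The argument will go through only if the non-grid part of~$\Amc$ is handled in a Horn-like way by~$\Tmc^{\exists}$, which is guaranteed by the hiding machinery of Lemma~\ref{lem:abstraction}(4); conversely, the parametrised tile set must be designed precisely enough that the decomposition in Step~2 is polynomial-time computable on arbitrary $\Sigma_\gridsig$-ABoxes and that the residue~$\Amc'$ does not induce any additional disjunctions.
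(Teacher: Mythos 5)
There is a genuine gap, and it sits exactly where you flag the ``hard part.'' Your construction keeps the disjunction $A \sqsubseteq B_1 \sqcup B_2$ from Lemma~\ref{lem:undecidability} and takes $M_0$ to be a witness for it. But as soon as $\overline{L}\neq\emptyset$ there is a grid ABox on which $A$ is derived, the ABox disjunction property fails, and Theorem~\ref{thm:nomatlower} makes ELIQ-evaluation w.r.t.\ your $\Tmc$ {\sc coNP}-hard for \emph{some} query. Combined with your Point~2 (every PEQ reducible to $L$) this would force $L$ to be {\sc coNP}-hard, so the construction cannot work for the Ladner-style intermediate languages the theorem is meant to capture. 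The disjunction also never ``collapses in the $L$-dictated way'': neither $B_1(a)$ nor $B_2(a)$ is ever individually entailed, which is precisely why it causes non-materializability. Relatedly, your argument for Point~1 proves the wrong statement: Theorem~\ref{thm:nomatlower} plus 2+2-SAT yields {\sc coNP}-hardness of \emph{some} ELIQ, not a reduction of $L$ to the evaluation of the specific query $\exists x\, M_0(x)$ --- and {\sc coNP}-hardness is exactly what must \emph{not} hold here.

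The paper's construction differs in two essential ways. First, there is no $A \sqsubseteq B_1 \sqcup B_2$, and the machine's nondeterminism is \emph{not} pushed into the tile set: the tiling layer $\Tmc_{\mathfrak{P}_M}$ only verifies the rectangular grid and the input word on the bottom row (with a default tile elsewhere), while a second layer $\Tmc_M$, guarded by $A$, simulates the NTM via disjunctive CIs for the transition relation and sends the rejecting state to $\bot$. Consistency of a grid ABox with input $v$ is then \emph{equivalent} to $v\in L(M)=\overline{L}$ (Lemma~\ref{lem:coni}); this is what lets Point~1 construct the instance from $v$ alone. If accepting computations were instead encoded as tilings, as you propose, the ABox itself would have to spell out the computation, and the reduction from $L$ could not produce the witnessing ABox without already knowing the answer. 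Second, $M_0$ is a completely fresh concept name kept in $\Sigma$, so $\exists x\, M_0(x)$ is entailed iff the ABox is inconsistent, and evaluating it on the grid ABox for $v$ answers ``$v\in L$?'' directly. Your outline of Point~2 (decide consistency via $L$-oracle calls on the extracted inputs, then fall back to the polynomial-time $\Tmc^{\exists}$ using Lemma~\ref{lem:abstraction}) has the right shape, but it only goes through once the residual disjunction $B_1\sqcup B_2$ is removed.
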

To prove Theorem~\ref{nondicho} let $L\in \text{\sc coNP}$.  Take a
non-deterministic polynomial time Turing Machine $M$ that recognizes
the complement of $L$. Let
$M = (Q,\Gamma_0,\Gamma_1,\Delta,q_{0},q_{a},q_{r})$ with $Q$ a finite
set of states, $\Gamma_0$ and $\Gamma_1$ finite input and tape
alphabets such that $\Gamma_0 \subseteq \Gamma_1$ and
$\Gamma_1 \setminus \Gamma_0$ contains the blank symbol $\beta$,
$q_{0}\in Q$ the starting state,
$\Delta \subseteq Q \times \Gamma_1 \times Q \times \Gamma_1 \times
\{L,R\}$
the transition relation, and $q_{a},q_{r}\in Q$ the accepting and
rejecting states.  Denote by $L(M)$ the language recognized by $M$. We
can assume w.l.o.g.\ that there is a fixed symbol
$\gamma_0 \in \Gamma_0$ such that all words accepted by $M$ are of the
form $\gamma_0v$ with
$v\in (\Gamma_0 \setminus \{ \gamma_0 \})^{\ast}$; in fact, it is easy
to modify $M$ to satisfy this property without changing the complexity
of its word problem. We also assume that for any input
$v\in \Gamma_0^{\ast}$, $M$ uses exactly $|v|^{k_{1}}$ cells for the
computation, halts after exactly $|v|^{k_{2}}$ steps in the accepting
or rejecting state, and does not move to the left of the starting
cell.

  To represent inputs to $M$ 
  and to provide the space for simulating computations, we use grid
  ABoxes as in the proof of Theorem~\ref{thm:undec}, where the tiling
  of the bottom row represents the input word followed by blank
  symbols. As the set of tile types, we use
  $\mathfrak{T}= \Gamma_0 \cup \{ \beta, T, T_{\mn{final}}\}$ where
  $T$ is a `default tile' that labels every position except those in
  the bottom row and the upper right corner. Identify $T_{\sf init}$
  with $\gamma_0$ and let
  $\Sigma_{\gridsig}= \Gamma_0 \cup \{ \beta, T, T_{\mn{final}}\}\cup
  \{x,y,\hat{x},\hat{y}\}$.
  Consider the TBox $\Tmc_{\mytp_{M}}$ defined above, for
  $\mytp_{M}= (\mathfrak{T},H,V)$ with
\begin{eqnarray*}
H  & =
&\{(\gamma_0,\gamma),(\gamma,\gamma'),(\gamma,\beta),(\beta,\beta),(T,T),(T,T_{\mn{final}})
\mid \gamma,\gamma'\in \Gamma_0 \setminus \{\gamma_0\}\}\\
V & =  &\{(\gamma,T),(\beta,T),(T,T),(T,T_{\mn{final}}),(\gamma,T_{\mn{final}}),(\beta,T_{\mn{final}}) \mid \gamma \in \Gamma_0\}.
\end{eqnarray*}
Recall that $\Tmc_{\mytp_{M}}$ checks whether a given $\Sigma_{\gridsig}$-ABox
contains a grid structure with a tiling that respects $H$, $V$,
$T_{\mn{init}}$, and $T_{\mn{final}}$, and derives the concept name
$A$ at the lower left corner of such grids. 
%
%
We now construct a TBox $\Tmc_{M}$
that, after the verification has finished, initiates a computation of
$M$ on the grid. 
%
%
%
In addition to the concept names in $\Tmc_{\mytp_{M}}$, $\Tmc_{M}$
uses concept names $A_\gamma$ and $A_{q,\gamma}$ for all $\gamma \in
\Gamma_1$ and $q \in Q$ to represent symbols written during the
computation (in contrast to the elements of $\Gamma_1$ as concept
names, used to encode the input word) and to represent the state and
head position. In detail, $\Tmc_{M}$ contains the following CIs:
\begin{itemize}


\item When the verification of the grid has finished, $A$ floods the ABox:
$$
   A \sqsubseteq \forall r . A \quad \text{ for all } r\in \{x,y,\hat{x},\hat{y}\}.
$$

\item The initial configuration is as required:
$$
\gamma_0 \sqcap A \sqsubseteq A_{q_0,\gamma} \qquad
\gamma \sqcap A \sqsubseteq A_{\gamma}
\quad \text{ for all }\gamma\in (\Gamma_0 \cup \{ \beta \}) \setminus \{ \gamma_0\}.
$$
\item For every $(q,\gamma)\in Q\times \Gamma_1$, the transition
  relation of $M$ is satisfied:
$$
\begin{array}{rcl}
A_{q,\gamma} \sqcap A  &\sqsubseteq&
\midsqcup_{(q,\gamma,q',\gamma',L)\in \Delta}\exists y.
(A_{\gamma'} \sqcap \midsqcup_{\gamma'' \in \Gamma_1}\exists \hat{x}.A_{q',\gamma''})\;\sqcup\\[4mm]
&& \midsqcup_{(q,\gamma,q',\gamma',R)\in \Delta}\exists y.
(A_{\gamma'} \sqcap \midsqcup_{\gamma'' \in \Gamma_1}\exists x.A_{q',\gamma''}).
\end{array}
$$

\item The representations provided by $A_{q,\gamma}$ and $A_{\gamma}$ for symbols in $\Gamma_{1}$ coincide:
$$
A_{q,\gamma} \sqcap A_{\gamma'}\sqsubseteq A_{q,\gamma'} \sqcap A_{\gamma}, \quad \text{ for all $\gamma,\gamma'\in \Gamma_{1}$}
$$ 
\item The symbol written on a cell does not change if the head is not
  on the cell:
$$
A_{\gamma} \sqcap A \sqsubseteq \forall y.A_{\gamma}
\quad \text{ for all } \gamma\in \Gamma_1
$$
\item 
The rejecting state is never reached:
$$
A_{q_{r},\gamma} \sqcap A \sqsubseteq \bot 
\quad \text{ for all } \gamma \in \Gamma_1. 
$$
\end{itemize}
Let $\Tmc = \Tmc_{\mytp_{M}}\cup \Tmc_{M}$. We are going to show that
an appropriate extended abstraction of \Tmc satisfies Conditions~(1) and~(2) of Theorem~\ref{nondicho}.
We start with the following lemma which summarizes two important
properties of $\Tmc$.
\begin{lemma}\label{lem:coni}~\\[-4mm]
  \begin{enumerate}
\item 
$\Tmc$ admits trivial models and is $\Sigma_{\gridsig}$-extensional. 
\item For any $\Sigma_{\gridsig}$-ABox $\Amc$, $\Amc$ is consistent w.r.t.~$\Tmc$ iff
the following two conditions hold:
\begin{enumerate}
\item $\Amc$ is consistent w.r.t.~$\Tmc_{\mathfrak{P}_M}$;
\item let $\Amc= \Amc_{1}\cup \cdots \cup \Amc_{n} \cup \Amc'$ be the
  decomposition of $\Amc$ given in Lemma~\ref{lem:descr} (2) and
  assume that $\Amc_{i}$ is the $n_{i}\times
  m_{i}$-grid ABox with input $v_{i}$ for $1 \leq i \leq n$. Then the
  following hold for $1 \leq i \leq n$:
\begin{enumerate}
\item $n_{i} \geq |v_{i}|^{k_{1}}$ and $m_{i} \geq |v_{i}|^{k_{2}}$ and 
\item $v_{i}\in L(M)$.
\end{enumerate}
\end{enumerate}
\end{enumerate}
\end{lemma}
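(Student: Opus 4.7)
The plan is to dispatch the trivial-models part of~(1) separately and straightforwardly, then prove~(2) in both directions; the $\Sigma_{\gridsig}$-extensionality part of~(1) will come out as a byproduct of the model construction used in the ``if''-direction of~(2). For trivial models: every CI of $\Tmc_M$ has a left-hand side that contains $A$, some $A_{q,\gamma}$, some $A_\gamma$, or a tile concept name, all empty in the trivial singleton model, so every such CI is vacuously satisfied; combined with Lemma~\ref{lem:descr}(1) this shows $\Tmc$ admits trivial models.

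For the ``only if'' direction of~(2), let $\Amc$ be consistent w.r.t.~$\Tmc$; condition~(a) is immediate because $\Tmc_{\mathfrak{P}_M} \subseteq \Tmc$. For~(b), apply Lemma~\ref{lem:descr}(3) to obtain the decomposition $\Amc = \Amc_1 \cup \cdots \cup \Amc_n \cup \Amc'$. For each grid $\Amc_i$ with initial node $a_i$ one has $\Tmc_{\mathfrak{P}_M},\Amc \models A(a_i)$, hence every model $\Imc$ of $\Tmc$ and $\Amc$ puts $a_i \in A^\Imc$. The flooding CIs $A \sqsubseteq \forall r . A$ for $r \in \{x,y,\hat{x},\hat{y}\}$, together with the closedness of $\Amc_i$, force $A$ to hold on every cell of $\Amc_i$. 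The initialisation CIs then force the bottom row to encode $v_i$ together with the starting state, and the transition, preservation and label-consistency CIs inductively force each row $k$ of $\Amc_i$ to encode the configuration of $M$ after $k$ steps. The concepts $U$ on the top row and $R$ on the right column (derived by $\Tmc_{\mathfrak{P}_M}$), combined with $U \sqsubseteq \forall y . \bot$ and $R \sqsubseteq \forall x . \bot$, forbid any step that would escape the grid; hence the grid must be large enough to carry the entire run, yielding $n_i \geq |v_i|^{k_1}$ and $m_i \geq |v_i|^{k_2}$. Finally, $A_{q_r,\gamma} \sqcap A \sqsubseteq \bot$ blocks the rejecting state, so the encoded computation is accepting and $v_i \in L(M)$.

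For the ``if'' direction (and for $\Sigma_{\gridsig}$-extensionality), start from the interpretation $\Imc_0$ delivered by the ``moreover'' part of Lemma~\ref{lem:descr}(3); it already has $\Delta^{\Imc_0} = \mn{Ind}(\Amc)$, interprets each symbol of $\Sigma_{\gridsig}$ exactly as in $\Amc$, and realises $A$ precisely on the initial nodes of the $\Amc_i$'s. Extend $\Imc_0$ to $\Imc$ by flooding $A$ over all cells of $\Amc_1 \cup \cdots \cup \Amc_n$ and, for each $i$, by picking an accepting computation of $M$ on $v_i$ (which exists by~(b)(ii) and fits by~(b)(i)) and using it to populate $A_{q,\gamma}^\Imc$ and $A_\gamma^\Imc$ on the cells of $\Amc_i$; all the newly introduced concept names are empty outside of $\Amc_1 \cup \cdots \cup \Amc_n$. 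Direct verification shows that $\Imc$ satisfies $\Tmc_M$ (initialisation by construction of the bottom row, transition and preservation CIs by step-by-step correctness of the run, rejecting-state and label-consistency CIs by the choice of an accepting computation), and it still satisfies $\Tmc_{\mathfrak{P}_M}$ because $A$ and the new symbols never occur on an LHS there. Since $\Imc$ has $\Delta^\Imc = \mn{Ind}(\Amc)$ and respects $\Sigma_{\gridsig}$, this simultaneously establishes $\Sigma_{\gridsig}$-extensionality. The main obstacle throughout is the fine-grained coordination between the transition CIs and the boundary CIs: one must check that the encoded computation genuinely stays inside each $\Amc_i$ so that the existential $y$- and $x$-successors demanded by transitions exist within the grid, and that the label-preservation CIs are fulfilled around head cells, which pins down how $A_\gamma$ and $A_{q,\gamma}$ must be assigned along the trajectory of the head.
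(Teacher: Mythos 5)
Your proof is correct and follows essentially the same route as the paper's: the ``only if'' direction argues that any model, restricted to a grid component, must encode an accepting run that fits inside it, and the ``if'' direction extends the extensionality-witnessing model from Lemma~\ref{lem:descr}(3) by flooding $A$ and overlaying an accepting computation, with the CIs of $\Tmc_M$ vacuous outside the grids. Your derivation of $\Sigma_{\gridsig}$-extensionality as a byproduct of that construction, and your appeal to the $U$/$R$ border concepts to rule out witnesses escaping the grid, are just slightly more explicit versions of what the paper leaves implicit.
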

\begin{proof}
  Since (1) follows directly from the construction of $\Tmc$, we
  concentrate on (2).  Assume first that $\Amc$ is consistent
  w.r.t.~$\Tmc$. Then $\Amc$ is consistent
  w.r.t.~$\Tmc_{\mathfrak{P}_M}$ and so we can assume that there is
  a decomposition $\Amc_{1}\cup \cdots \cup \Amc_{n}\cup \Amc'$ of $\Amc$
  as in Lemma~\ref{lem:descr} (3). By definition, each $\Amc_{i}$ is
  an
  $n_{i}\times m_{i}$-grid ABox with input
  $v_{i}$. Since $\Amc$ is consistent w.r.t.~$\Tmc$, there is a model
  $\Imc$ of $\Amc$ and $\Tmc$.  By the first CIs of $\Tmc_{M}$ and
  since the initial node $a$ of each $\Amc_{i}$ must be in $A^{\Imc}$
  by Lemma~\ref{lem:descr}, ${\sf Ind}(\Amc_{i})\subseteq A^{\Imc}$
  for each $i$. Thus the restriction of $\Imc$ to ${\sf
    Ind}(\Amc_{i})$ simulates an accepting computation of $M$ starting
  with $v_{i}$.  But since every computatation of $M$ starting with a
  word of length $n$ requires at least $n^{k_{1}}$ space and
  $m^{k_{2}}$ time and the containment of $\Amc_{i}$ in
  $\Amc$ is closed for the role names $x$ and $y$, this is impossible
  if $n_{i} < |v_{i}|^{k_{1}}$ or $m_{i} < |v_{i}|^{k_{2}}$ and also
  if $v_{i}\not\in L(M)$.  Thus (i) and (ii) hold, as required.

  For the converse direction, assume that (a) and (b) hold.  Since
  $\Amc$ is consistent w.r.t.~$\Tmc_{\mathfrak{P}_M}$, there is a
  decomposition $\Amc_{1}\cup \cdots \cup \Amc_{n}\cup \Amc'$ of
  $\Amc$ as in Lemma~\ref{lem:descr} (3). Also by
  Lemma~\ref{lem:descr} (3), there is a model $\Imc$ of $\Amc$ that witnesses
  $\Sigma_{\gridsig}$-extensionality of $\Tmc_{\mathfrak{P}_M}$ such that $a \in A^{\Imc}$ iff $a$
  is the initial node of some $\Amc_{i}$.  We construct a model
  $\Imc'$ of $\Tmc$ by modifying $\Imc$ as follows: with the exception
  of $A$, the symbols of $\Tmc_{\mathfrak{P}_M}$ are interpreted in
  the same way as in $\Imc$ and thus $\Imc'$ is a model of
  $\Tmc_{\mathfrak{P}_M}$. To satisfy the first CI of $\Tmc_{M}$, we
  set $A^{\Imc'}= {\sf Ind}(\Amc_{1}\cup \cdots \cup\Amc_{n})$. Note
  that this suffices since the containment of each $\Amc_i$ in \Amc is
  closed for the role names $x$ and~$y$. The remaining symbols from
  $\Tmc_{M}$ are now interpreted in such a way that they describe on
  each $\Amc_{i}$ an accepting computation for $v_{i}$. This is
  possible since $v_{i} \in L(M)$, $n_{i} \geq |v_{i}|^{k_{1}}$ and
  $m_{i} \geq |v_{i}|^{k_{2}}$, and each computation of $M$ starting
  with a word $v$ of length $n$ requires at most $n^{k_{1}}$ space and
  $m^{k_{2}}$ time.  It can be verified that $\Imc'$ is a model of
  $\Tmc_M$; note that since $A$ is a conjunct of the left hand side of
  every CI in $\Tmc_{M}$, the CIs in $\Tmc_{M}$ are trivially
  satisfied in every node
  $d\in \Delta^{\Imc}\setminus {\sf Ind}(\Amc_{1}\cup \cdots \cup
  \Amc_{n})$.
  Thus $\Imc'$ satisfies $\Tmc$ and $\Amc$ and we have proved
  consistency of $\Amc$ w.r.t.\ $\Tmc$, as required.
\end{proof}
We are now in the position to prove Theorem~\ref{nondicho}.

\begin{proof}[Proof of Theorem~\ref{nondicho}] Let $L \in \text{\sc
  coNP}$ and let $M$ and \Tmc be the TM and TBox from above.  Set
$\Sigma = \Sigma_{\gridsig} \cup \{M_0\}$ where $M_0$ is a fresh
concept name and let $\Tmc'\cup \Tmc^{\exists}$ be the enriched
$\Sigma$-abstraction of $\Tmc$. 
We show that \Tmc satisfies Points~(1) and~(2)
from Theorem~\ref{nondicho}.

\medskip
\noindent
(1) It suffices to give a polynomial time reduction from $L(M)$ to the
complement of evaluating
$(\Tmc'\cup \Tmc^{\exists},\exists x\,M_{0}(x))$ (note that $M_0$ does
not occur in any of the involved TBoxes).  Assume that an input word
$v$ for $M$ is given. If $v$ is not from $\gamma_0 (\Gamma_0 \setminus
\{ \gamma_0 \})^{\ast}$, then reject. Otherwise, construct in polynomial time the
$|v|^{k_{1}}\times |v|^{k_{2}}$-grid ABox $\Amc$ with
input~$v$. Observe that $\Amc$ is consistent
w.r.t.~$\Tmc_{\mathfrak{P}_M}$ and has the trivial decomposition
$\Amc=\Amc_{1}$ in Lemma~\ref{lem:descr} (3). Thus
Lemma~\ref{lem:coni} (2) implies that $v\in L(M)$ iff $\Amc$ is
consistent w.r.t.~$\Tmc$. The latter condition is equivalent to
$\Tmc,\Amc \not\models \exists x\, M_{0}(x)$ since $M_{0}$ does not
occur in $\Amc$ or $\Tmc$.  Since $\Tmc$ admits trivial models,
Lemma~\ref{lem:abstraction} (2) thus yields $v\in L(M)$ iff
$\Tmc'\cup \Tmc^{\exists},\Amc \not\models \exists x\,M_{0}(x)$.

\medskip
\noindent
(2) 
We first make the following observation.

\medskip
\noindent
{\bf Claim 1.} $\Tmc'\cup \Tmc^{\exists},\Amc\models q(\vec{a})$ iff
\begin{itemize}
\item $\Amc|_{\Sigma}$ is not consistent w.r.t.~$\Tmc$ or
\item $\Tmc^{\exists},\Amc\models q(\vec{a})$.
\end{itemize}
For the `only if' direction, observe that if
$\Amc|_{\Sigma}$ is consistent w.r.t.~$\Tmc$, then by
Lemma~\ref{lem:abstraction} (4),
$\Tmc'\cup \Tmc^{\exists},\Amc\models q(\vec{a})$ iff
$\Tmc^{\exists},\Amc\models q(\vec{a})$. For the `if' direction,
observe that if $\Amc|_{\Sigma}$ is not consistent
w.r.t.~$\Tmc$, then by Lemma~\ref{lem:abstraction} (1) $\Amc$ is not
consistent w.r.t.~$\Tmc'\cup \Tmc^{\exists}$.  This finishes the proof
of the claim.

\smallskip

By Lemma~\ref{lem:abstraction} (3),
$\Tmc^{\exists},\Amc\models q(\vec{a})$ can be decided in polynomial
time. Thus, Claim~1 implies that it suffices to give a polynomial time
reduction of ABox consistency w.r.t. \Tmc to $L(M)$. But Lemma~\ref{lem:coni} (2) provides
a polynomial reduction of ABox consistency w.r.t. \Tmc to $L(M)$ since 
\begin{itemize}
\item Condition (a) of Lemma~\ref{lem:coni} (2) can be checked in polynomial time (by Lemma~\ref{lem:descr} (1));
\item the decomposition of $\Amc$ in Condition (b) of Lemma~\ref{lem:coni} (2) as well as the words $v_{i}$, $1\leq i \leq n$,
can be computed in polynomial time;
\item and Point~(i) of Condition (b) can be checked in polynomial time.
\end{itemize}
It thus remains to check whether $v_{i}\in L(M)$ for $1\leq i \leq n$.
This finishes the proof of Theorem~\ref{nondicho}.
\end{proof}

Theorems~\ref{nondicho} and~\ref{equivalence} imply that that there is
no \PTime/{\sc coNP}-dichotomy for query evaluation w.r.t.\
\ALCF-TBoxes, unless {\PTime}$\,$=$\,${\NP}.
%



Observe that the TBoxes constructed in the undecidability and the non-dichotomy
proof are both of depth four. This can be easily reduced to depth three: recall that the
TBoxes of depth four are obtained from TBoxes $\Tmc$ of depth two by taking their enriched 
$\Sigma$-abstractions. One can obtain a TBox of depth three (for which query evaluation
has the same complexity up to polynomial time reductions) by first replacing in $\Tmc$ 
compound concepts $C$ in the scope of a single value or existential restriction
by fresh concept names $A_{C}$ and adding $A_{C}\equiv C$ to $\Tmc$. Then
the fresh concept names are added to the signature $\Sigma$ and one constructs
the enriched abstraction of the resulting TBox for the extended signature.
This TBox is as required. Thus, our undecidability and non-dichotomy results hold for $\mathcal{ALCF}$-TBoxes
of depth three already.

\section{Discussion}

We have studied the complexity of query evaluation in the presence of
an ontology formulated in a DL between \ALC and $\mathcal{ALCFI}$,
focussing on the boundary between {\sc PTime} and {\sc coNP}. For
$\mathcal{ALCFI}$-TBoxes of depth one, we have established
a dichotomy between \PTime and {\sc coNP} and shown that it can be
precisely characterized in terms of unraveling tolerance and
materializability. Moreover and unlike in the general case, {\sc
  PTime} complexity coincides with rewitability into
Datalog$^{\neq}$. The case of higher or unrestricted depth is harder
to analyze. We have shown that for arbitrary \ALC- and \ALCI-TBoxes
there is a dichotomy between {\sc PTime} and {\sc coNP}. The proof is
by a reduction to the recently confirmed {\sc PTime}/{\sc NP}-dichotomy for
CSPs.  For \ALCF
TBoxes of depth three we have shown that there is no dichotomy unless
{\sc \PTime $=$ \NP} and that deciding whether a given TBox admits
{\sc PTime} query evaluation is undecidable, and so are related
questions.


\medskip

Several interesting research questions remain. We briefly discuss
three possible directions. 

\smallskip
\noindent
(1) Is it decidable whether a given \ALC- or \ALCI-TBox admits {\sc
 PTime} query evaluation and, closely related, whether it is
unraveling tolerant and whether it is materializable? First results for TBoxes of depth one
have been obtained in \cite{PODS17}, but the general problem remains open.
It is interesting to point out
that unraveling tolerance is decidable for OMQs whose TBox is
formulated in \ALCI (where a concrete query is given, unlike in the
case of unraveling tolerance of TBoxes); in that case, unraveling
tolerance is equivalent to rewritability into monadic Datalog
\cite{OurNewPaper}. It would also be interesting to study more general
notions of unraveling tolerance based on unravelings into structures
of bounded treewidth rather than into real trees.

\smallskip
\noindent
(2) It would be interesting to study additional complexity classes
such as {\sc LogSpace}, {\sc NLogSpace}, and AC$^{0}$. It is known
that all these classes occur even for \ALC-TBoxes of depth one, see
e.g.\ \cite{DBLP:journals/ai/CalvaneseGLLR13} and the recent
\cite{leif} which establishes a full complexity complexity
classification of OMQs that are based on an \EL-TBox and an ELIQ. For example,
CQ-evaluation w.r.t.\ the depth one \EL-TBox
$\{ \exists r . A \sqsubseteq A \}$, which encodes reachability in
directed graphs, is {\sc NLogSpace}-complete. It would thus be
interesting to identify further dichotomies such as between {\sc
  NLogSpace} and {\sc PTime}.  We conjecture that for \ALCFI-TBoxes of
depth one, it is decidable whether query evaluation is in {\sc PTime},
{\sc NLogSpace}, {\sc LogSpace}, and~AC$^0$.

%


\smallskip
\noindent
(3) Apart from Datalog, rewritability into FO queries is also of
interest. In the context of OMQs where the actual query is fixed
rather than quantified, several results have been obtained, see e.g.\
\cite{DBLP:conf/ijcai/BienvenuLW13,DBLP:conf/ijcai/0002LSW15,DBLP:conf/ijcai/Bienvenu0LW16}
for FO-rewritability of OMQs whose TBox is formulated in a Horn DL
and~\cite{DBLP:journals/tods/BienvenuCLW14,OurNewPaper} for FO- and
Datalog-rewritability of OMQs whose TBox is formulated in \ALC or an
extension thereof. When the query is quantified (as in the current
paper), a first relevant result has been established in
\cite{DBLP:conf/dlog/LutzW11} where it is shown that that
FO-rewritability is decidable for materializable
$\mathcal{ALCFI}$-TBoxes of depth one. This underlines the importance
of deciding materializability, which would allow to lift this result
to (otherwise unrestricted) $\mathcal{ALCFI}$-TBoxes of depth one.

\medskip

\smallskip
\noindent {\bf Acknowledgments.}\ \ Carsten Lutz was supported by ERC
consolidator grant 647289. Frank Wolter was supported by EPSRC grant EP/M012646/1.

\newcommand{\etalchar}[1]{$^{#1}$}

\end{document}